\newtheorem{definition}{Definition}
\newtheorem{example}{Example}
\newtheorem{theorem}{Theorem}[section]
\newtheorem{lemma}[theorem]{Lemma}
\newtheorem{problem}{Problem}
\newtheorem{property}{Property}
\title{A Preprocessing Framework for Efficient Approximate Bi-Objective Shortest-Path Computation in the Presence of Correlated Objectives}
\author{
    Yaron Halle\textsuperscript{\rm 1},
    Ariel Felner\textsuperscript{\rm 2},
    Sven Koenig\textsuperscript{\rm 3},
    Oren Salzman\textsuperscript{\rm 1}
}
\begin{document}

\newcommand\algname[1]{\textsf{#1}\xspace}
\newcommand\apex{\algname{A*pex}}
\newcommand\eapex{\algname{GA*pex}}
\newcommand\peapex{\algname{PE-A*pex}}
\newcommand\pegapex{\algname{PE-GA*pex}}
\newcommand\open{\textsc{Open}\xspace}
\newcommand{\ignore}[1]{}
\newcommand{\OS}[1]{{\textcolor{red}{\textbf{OS:} #1}}}
\newcommand{\newText}[1]{{\textcolor{red}{#1}}}
\newcommand{\AF}[1]{{\textcolor{blue}{\textbf{Note To Ariel:} #1}}}
\newcommand{\YH}[1]{{\textcolor{orange}{\textbf{YH:} #1}}}
\newcommand{\vs}{\ensuremath{v_{\text{s}}}\xspace}
\newcommand{\vt}{\ensuremath{v_{\text{t}}}\xspace}
\newcommand\AP{\ensuremath{\mathcal{AP}}\xspace}
\newcommand\ApEd{\ensuremath{\mathcal{AE}}\xspace}
\newcommand\G{\ensuremath{\mathcal{G}}\xspace}
\newcommand\Gtilde{\ensuremath{\tilde{\mathcal{G}}}\xspace}
\newcommand\GtildeC{\ensuremath{\tilde{\mathcal{G}_c}}\xspace}
\newcommand\GtildeR{\ensuremath{\tilde{\mathcal{G}_r}}\xspace}
\newcommand\beps{\ensuremath{\boldsymbol{\varepsilon}}\xspace}
\newcommand\Aset{\ensuremath{\mathbb{A}_{\beps}(\psi,b_i,b_j)}\xspace}
\newcommand\pibr{\ensuremath{\pi^{\texttt{br}}}\xspace}
\newcommand\pitl{\ensuremath{\pi^{\texttt{tl}}}\xspace}
\newcommand\epstl{\ensuremath{\boldsymbol{\varepsilon^{\texttt{\textbf{tl}}}}}\xspace}
\newcommand\epsbr{\ensuremath{\boldsymbol{\varepsilon^{\texttt{\textbf{br}}}}}\xspace}
\newcommand{\apexnode}{apex-path pair\xspace} 
\newcommand{\apexnodes}{apex-path pairs\xspace} 

\maketitle

\begin{abstract}
    The bi-objective shortest-path (BOSP) problem seeks to find paths between  start and target vertices of a graph while optimizing two conflicting objective functions.
    We consider the BOSP problem in the presence of  \emph{correlated objectives}. Such correlations often occur in real-world settings such as road networks, where optimizing two positively correlated objectives, such as travel time and fuel consumption, is common. BOSP is generally computationally challenging as the size  of the search space is exponential in the number of objective functions and the graph size. 
    Bounded sub-optimal BOSP solvers such as
    \apex alleviate this complexity by \emph{approximating} the Pareto-optimal solution set rather than computing it exactly (given some user-provided approximation factor). As the correlation between objective functions increases, smaller approximation factors are sufficient for collapsing the entire Pareto-optimal set into a single solution. 
    We leverage this insight to propose an efficient algorithm that reduces the search effort in the presence of correlated objectives. 
    Our approach for computing approximations of the entire Pareto-optimal set is inspired by graph-clustering algorithms. It uses a preprocessing phase to identify correlated clusters within a graph and to generate a new graph representation. This allows a natural generalization of \apex to run up to five times faster on DIMACS dataset instances, a standard benchmark in the field. To the best of our knowledge, this is the first algorithm proposed that efficiently and effectively exploits correlations in the context of bi-objective search while providing theoretical guarantees on solution quality.
\end{abstract}

\section{Introduction and Related Work}

In the bi-objective shortest-path (BOSP) problem~\cite{ulungu1994multi,skriver2000classification,tarapata2007selected}, we are given a directed graph where each edge is associated with two cost components. A path $\pi$ dominates a path $\pi'$ iff each cost component of $\pi$ is no larger than the corresponding component of $\pi'$, and at least one component is strictly smaller. The goal is to compute the Pareto-optimal set of paths from a start vertex \vs to a target vertex \vt, i.e., all undominated paths connecting \vs to \vt.

BOSP models various real-world scenarios, such as minimizing both distance and tolls in road networks or finding short paths that ensure sufficient coverage in robotic inspection tasks~\cite{FuKSA23}.

A long line of research has extended the classical \algname{A*} search algorithm to the multi-objective setting. \algname{MOA*}~\cite{stewart1991multiobjective} and its successors \citep{mandow2008multiobjective,pulido2015dimensionality} propose various techniques for improving performance, which were recently generalized into a unified framework~\cite{ren2025emoa}. 

BOSP is more challenging than single-objective search as it involves simultaneously optimizing two, often conflicting, objectives. The size of the Pareto-optimal solution set can be exponential in the size of the search space, making it computationally challenging to compute precisely \cite{ehrgott2005multicriteria,breugem2017analysis}. 

While exact algorithms have been proposed for BOSP \cite{skyler2022bounded,hernandez2023simple}, we are often interested in \emph{approximating} the Pareto-optimal solution set (see, e.g.,~\cite{perny2008near,tsaggouris2009multiobjective,goldin2021approximate}).


\begin{figure*}[t]
    \centering
    \includegraphics[scale=0.88]{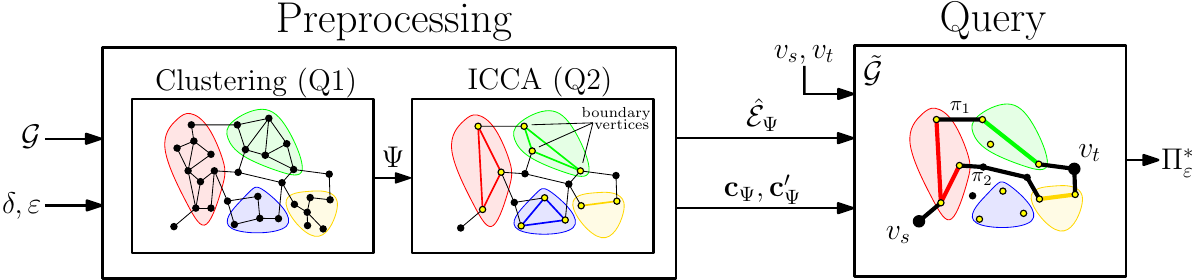}
    \caption{ 
        Illustration of the proposed algorithmic framework. An input graph \G undergoes a preprocessing phase where regions with similar correlations are grouped into correlated clusters ($\Psi$) using an aggregation threshold~($\delta$).  
        Subsequently, an Internal Cluster Cost Approximation (ICCA) process is performed for every correlated cluster to generate super-edges ($\hat{\mathcal{E}}_\Psi$) along with two cost functions ($\mathbf{c}_\Psi,\mathbf{c}'_\Psi$) for efficiently approximating (using a user-provided approximation factor \beps) the Pareto frontier of paths connecting the cluster's boundary vertices.      
        These are then used to construct a query graph \Gtilde for efficiently computing~$\Pi^*_{\beps}$, an approximation of the Pareto-optimal set for a bi-objective shortest-path query from \vs to \vt.        
    }
    \label{fig:algorithmic_framework}
    \vspace{-4mm}
\end{figure*}

We follow this line of work of approximating the Pareto-optimal solution set but focus on settings in which the objectives are positively correlated.
Such correlations often exist in many real-world settings.
For instance, in road networks, one may consider optimizing two positively correlated objectives such as travel time and fuel consumption. 
In the extreme case where there is perfect positive correlation between two objectives, the problem essentially collapses to a single-objective shortest-path problem and the Pareto-optimal set contains exactly one solution.
Importantly, when the two objectives are strongly (though not perfectly) positively correlated, the Pareto-optimal set may contain many solutions but they are typically very similar in terms of their costs~\cite{brumbaugh1989empirical,mote1991parametric}.  
Consequently, they can all be approximated by a single solution using a small value of approximation factor. 

Surprisingly, despite the relevance to real-world applications and the potential to exploit correlation, this problem has largely been overlooked by the research community.
Unfortunately, the correlation between the objectives can follow complex, non-uniform patterns that are challenging to exploit. Different regions of the graph can exhibit different levels of correlation, 
and their spatial distribution can significantly influence how large the approximation factor is required to be in order to approximate the entire Pareto frontier  by a single solution.

Notable exceptions include empirical studies showing that, 
in the bi-objective setting,  the cardinality of the Pareto-optimal set typically  decreases as the positive correlation increases~\cite{brumbaugh1989empirical, mote1991parametric} 
and that, 
in the more general multi-objective setting, the size of the Pareto-optimal set increases significantly for negative (conflicting) correlations \cite{verel2013structure}.
Recently, Salzman et al.~(\citeyear{salzman2023heuristic}) identified the potential of leveraging correlations to accelerate bi- and multi-objective search algorithms.
To the best of our knowledge, our work is the first one to propose a practical, systematic approach to address this opportunity.

Our approach, summarized in Fig.~\ref{fig:algorithmic_framework}, consists of a preprocessing phase and a query phase. In the preprocessing phase, regions, or clusters, of the bi-objective graph \G with strong correlation between objectives are identified. 
The set of paths within each cluster that connect vertices that lie on the cluster's boundary is efficiently approximated.
In the query phase, a new graph is constructed that allows the search to avoid generating \ignore{search} nodes within these clusters.

Key to our efficiency is a natural generalization of \apex, a state-of-the-art approximate multi-objective shortest-path algorithm~\cite{zhang2022pex}. 
\apex was chosen following its successful application in a variety of bi- and multi-objective settings~\cite{HSFKK24,HSFUK24,ZhangSFKSUK23}.
\ignore{The combination of our framework and our generalization of \apex allows dramatically speeding up the search in the presence of correlated objectives.}
We demonstrate the efficacy of our approach on the commonly-used DIMACS dataset, yielding runtime improvements of up to~ $\times5$ compared to running \apex on the original graph.

\section{Notation and Problem Formulation}
\label{sec:pdef}
We follow standard notation in BOSP~\cite{salzman2023heuristic}:
Boldface indicates vectors, lower-case and upper-case symbols indicate elements and sets, respectively. $p_i$ is used to denote the $i$’th component of vector $\textbf{p}$. 

Let $\textbf{p},\textbf{q}$ be two-dimensional vectors. We define their element-wise summation and multiplication as $\textbf{p}+\textbf{q}$ and $\textbf{p} \cdot \textbf{q}$, respectively.
Similarly, we define their element-wise relational operator as $\bold{p} \leq \bold{q}$.
We say that $\textbf{p}$ \emph{dominates}~$\textbf{q}$ and denote this as $\textbf{p} \prec \textbf{q}$ iff $p_1 \leq q_1$ and $p_2 < q_2$ or if $p_1 < q_1$ and $p_2 \leq q_2$.
When $\textbf{p}$ does not dominate~$\textbf{q}$, we write $\textbf{p} \nprec \textbf{q}$. For $\textbf{p} \neq \textbf{q}$, if $\textbf{p} \nprec \textbf{q}$ and $\textbf{q} \nprec \textbf{p}$, we say that $\textbf{p}$ and $\textbf{q}$ are \emph{mutually undominated}. 
Given a set $\textbf{X}$ of two-dimensional distinct vectors, we say that~$\textbf{X}$ is a \emph{mutually undominated set} if all pairs of vectors in~$\textbf{X}$ are mutually undominated. 

Let \beps be another two-dimensional vector such that~ $\varepsilon_1, \varepsilon_2 \geq~0$. We say that~$\textbf{p}$ \emph{\beps-dominates} $\textbf{q}$ and denote this as~$\textbf{p} \preceq_{\beps} \textbf{q}$ iff $\forall i: \ p_i \leq (1+\varepsilon_i) \cdot q_i$.

A bi-objective search graph is a tuple $\mathcal{G}=(\mathcal{V},\mathcal{E},\textbf{c})$, where $\mathcal{V}$ is the finite set of vertices, $\mathcal{E} \subseteq \mathcal{V} \times \mathcal{V}$ is the finite set of edges, and $\textbf{c} : \mathcal{E} \rightarrow \mathbb{R}^2_{\geq 0}$ is a \emph{cost function} that associates a two-dimensional non-negative cost vector with each edge. A \emph{path}~$\pi$ from $v_1$ to $v_n$ is a sequence of vertices $v_1,v_2,\ldots,v_n$ such that $(v_i,v_{i+1}) \in \mathcal{E}$ for all~$i\in~\{1,\ldots,n-1\}$. We define the \emph{cost} of a path $\pi=v_1,\ldots,v_n$ as $\textbf{c}(\pi)=\sum_{i=1}^{n-1}\textbf{c}(v_i,v_{i+1})$. Finally, we say that $\pi$ \emph{dominates} $\pi'$ and denote this as $\pi \prec \pi'$ iff $\textbf{c}(\pi) \prec \textbf{c}(\pi')$.

Given a bi-objective search graph $\G = (\mathcal{V},\mathcal{E},\textbf{c})$ and two vertices $u,v \in \mathcal{V}$,
we denote a minimal set of mutually undominated paths from~$u$  to~$v$ in \G  by~$\Pi^*(u,v)$.
Similarly, given an approximation factor $\beps$, let~$\Pi^*_{\beps}(u,v)$ be a set of paths such that every path in~$\Pi^*(u,v)$ is $\beps$-dominated by a path in $\Pi^*_{\beps}(u,v)$. We denote this as $\Pi^*_{\beps} \preceq_{\beps} \Pi^*$.

For the specific case of a query for start and target vertices $\vs,\vt \in \mathcal{V}$ we set
$\Pi^*:=\Pi^*(\vs,\vt)$ and $\Pi^*_{\beps}:=\Pi_{\beps}^*(\vs,\vt)$
and refer to them as 
a \emph{Pareto-optimal solution set} and 
an $\beps$-\emph{approximate Pareto-optimal solution set}, respectively.
We call the costs of paths in $\Pi^*$ the \emph{Pareto frontier}.

We call the problems of computing~$\Pi^*$ and  $\Pi^*_{\beps}$ the \emph{bi-objective shortest-path} problem and \emph{bi-objective approximate shortest-path} problem, respectively.
In our work, we are interested in a slight variation of these problems where we wish to answer \emph{multiple} bi-objective approximate shortest-path problems given a preprocessing stage.
This is formalized in the following definition.

\begin{problem}
    Let 
    $\G = (\mathcal{V},\mathcal{E},\textbf{c})$ be 
    a bi-objective search graph 
    and
    $\beps \in \mathbb{R}^2_{\geq 0}$ 
    a user-provided approximation factor.
    Our problem calls for preprocessing the inputs \G and $\beps$
    such that, given a query in the form $\vs,\vt \in \mathcal{V}$, we can efficiently compute $\Pi^*_{\beps}(\vs,\vt)$.
\end{problem}

\section{Algorithmic Background}
This section provides the necessary algorithmic background for our  framework. We begin in Sec. \ref{sec:prelim_correlation_in_bosp} by defining correlations between objectives in the context of graph search. Then, in Sec. \ref{sec:apex}, we overview \apex.

\subsection{Correlation in BOSP}
\label{sec:prelim_correlation_in_bosp}
Given two vectors $\mathbf{X}$ and~$\mathbf{Y}$, the correlation coefficient $\rho_{\mathbf{X},\mathbf{Y}}$ quantifies the strength of their \emph{linear} relationship~\cite{pearson1895vii}, ranging from $-1$ (perfect negative correlation) to $1$ (perfect positive correlation). As $|\rho_{\mathbf{X},\mathbf{Y}}|$ approaches~$1$, $\mathbf{X}$ and~$\mathbf{Y}$ become more linearly dependent, meaning that $\mathbf{Y}$ can be closely approximated by a linear equation of the form $\mathbf{Y} = a\mathbf{X} + b$.

\begin{definition}[correlation between objectives]
    Let $E\subseteq \mathcal{E}$ be a set of edges such that each edge $e \in E$ is associated with cost  $\textbf{c}(e)=\left(c_1(e),c_2(e)\right)$. Let $\boldsymbol{C_1}(E)$ ($\boldsymbol{C_2}(E)$ resp.) be a vector of size $\vert E \vert$ comprised of all $c_1(e)$ ($c_2(e)$ resp.) values of every edge $e \in E$. 
    We define the correlation between objectives of the set $E$  as the correlation between vectors $\boldsymbol{C_1}(E)$ and~$\boldsymbol{C_2}(E)$ and denote it as $\rho_E:=\rho_{\boldsymbol{C_1}(E),\boldsymbol{C_2}(E)}$. 
\end{definition}

Correlation between objectives is a common phenomenon. For instance, in the \textsc{9th DIMACS Implementation Challenge: Shortest Path} dataset\footnote{http://www.diag.uniroma1.it/challenge9/download.shtml.}, a widely used benchmark in the BOSP research community, a strong correlation between objectives can be observed. Each instance in this dataset represents a road network graph from various areas in the USA and includes two objectives: driving time and travel distance. The correlation between objectives for an entire graph is roughly~$\rho_{\mathcal{E}}~\approx~0.99$ for most DIMACS instances.

For brevity, when we mention a strong correlation, we specifically mean a strong \emph{positive} correlation.

\subsection{Approximating $\Pi^*$ using \apex}
\label{sec:apex}
In this section, we review  \apex \cite{zhang2022pex}, a state-of-the-art multi-objective best-first search algorithm for approximating the Pareto-optimal solution set. 

\ignore{\subsubsection{Overview of (vanilla) \apex}}
The efficiency of \apex stems from how it represents subsets of the Pareto frontier using one representative path together with a lower bound on the rest of the paths in the subset. 
Specifically, an \emph{apex-path pair}~$\AP=\langle \bold{A},\pi \rangle$ consists of a cost vector $\bold{A}$, called the \emph{apex}, and a path $\pi$, called the \emph{representative path}.
Conceptually, an apex-path pair represents a set of paths, that share the same start and final vertices, with its apex serving as the element-wise minimum of their cost vectors. 
We define 
the $\bold{g}$-value of $\AP$ as $\bold{g}(\AP)=\bold{A}$ 
and~$v(\AP)$ to be the last vertex of~$\pi$. The $\bold{f}$-value of $\AP$ is $\bold{f}(\AP)=\bold{g}(\AP)+\bold{h}(v(\AP))$. 
An apex-path pair \AP is said to be~\emph{\beps-bounded} iff $\boldsymbol{c}(\pi)+~\bold{h}(v(\AP)) \preceq_\varepsilon \bold{f}(\AP)$. 

\apex maintains a priority queue \open, using \beps-bounded apex-path pairs as search nodes.
At each iteration, \apex extracts from \open the node $\AP=\langle \bold{A},\pi \rangle$ with the smallest $\bold{f}$-value. 
If the representative path~$\pi$ has no chance to be part of the approximate solution set due to \beps-domination checks, the node is discarded.
If it does and $v(\AP)=\vt$, the node is added to the solution set.
If none of the above holds, $\AP$ is \emph{expanded} using each outgoing edge of $v(\AP)$ to generate its successor apex-path pair~$\AP'=\langle \bold{A}',\pi'\rangle$. Formally, given an outgoing edge~$e=~(v(\AP),v(\AP'))$, $\AP'$ is obtained by setting~$\bold{A}'$ to the element-wise sum of $\bold{A}$ and $\bold{c}(e)$, and setting $\bold{c}(\pi')$ to the element-wise sum of $\bold{c}(\pi)$ and $\bold{c}(e)$ (see Fig.~\ref{fig:apex_expand_original}). Since $\AP$ is~\beps-bounded, $\AP'$ is also \beps-bounded.

\begin{figure}
    \centering
    \begin{subfigure}[t]{0.14\textwidth}  
        \centering        
        \includegraphics[width=\textwidth]{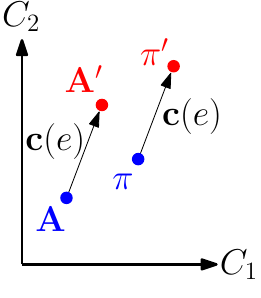}
        \caption{}         
        \label{fig:apex_expand_original}
    \end{subfigure}
    \hfill
    \begin{subfigure}[t]{0.14\textwidth}
        \centering
        \includegraphics[width=\textwidth]{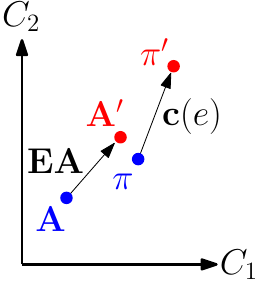}
        \caption{}         
        \label{fig:apex_expand_generalized}
    \end{subfigure}
    \vspace{-2.5mm}
    \caption{\textbf{(a)} 
            \apex expanding an apex-path pair $\AP=~\langle \bold{A},\pi \rangle$ by edge $e$ to obtain $\AP'=\langle \bold{A'},\pi' \rangle$.
            \newline\textbf{(b)} \eapex expanding an apex-path pair $\AP=\langle \bold{A},\pi \rangle$ by an apex-edge pair $\ApEd=\langle \bold{EA}, {e} \rangle$ to obtain $\AP'=\langle \bold{A'},\pi' \rangle$.}            
    \label{fig:apex_expand_combined}
    \vspace{-3.5mm}
\end{figure}

When \apex adds an apex-path pair~$\AP$ to \open, it first tries to \emph{merge} $\AP$ with all other apex-path pairs in \open with the same $v(\AP)$ to reduce the number of search nodes.
When merging two apex-path pairs, the new apex is the element-wise minimum of the apexes of the two apex-path pairs, and the new representative path is either one of the original representative paths (see Fig.~\ref{fig:apex_merge}). If the resulting apex-path pair is \beps-bounded, the merged apex-path pair is used instead of the two original apex-path pairs. 

\begin{figure}[t]
    \centering
    \includegraphics[scale=0.6]{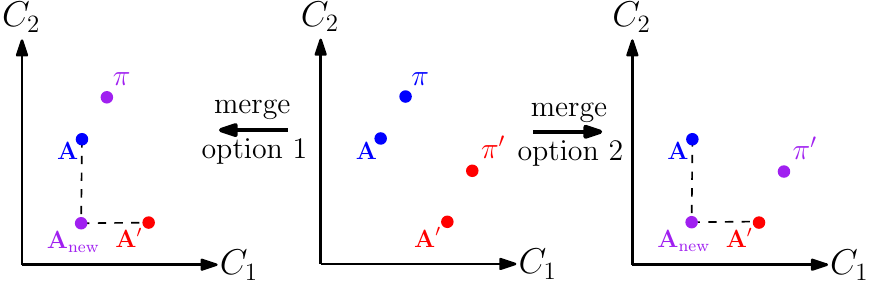}
    \caption{
        \apex merge operation. The new apex-path pair's representative path can be either $\pi$ or $\pi'$.
        }
    \label{fig:apex_merge}
    \vspace{-3.5mm}
\end{figure}

When \open becomes empty, \apex terminates and returns the representative paths of all apex-path pairs in the solution set as an \beps-approximate Pareto-optimal solution set.

\ignore{
\subsubsection{Extension of \apex to apex-edge pairs }
As we will see, it will be useful to apply the notion of a representative path and an associated apex to arbitrary pairs of vertices and not only from paths that start at $\vs$.  Thus, we introduce a natural generalization of edges which we call \emph{apex-edge pairs} and show how this can be seamlessly integrated into \apex by generalizing the way it extends edges.

Specifically, given vertices $u,v$, an apex-edge pair 
$\ApEd = \langle \bold{EA}, {e} \rangle$ 
consists of 
a \emph{representative edge} ${e}$ corresponding to a path connecting $u$ and $v$ 
and
an \emph{edge apex} $\bold{EA}$ which serves as a lower bound to a region of  the Pareto-optimal frontier of $\Pi^*(u,v)$. 
Similar to apex-path pairs, we say that an apex-edge pair \ApEd is $\beps$-bounded if $\bold{c}({e})\preceq_{\beps} \bold{EA}$.

We now generalize the \emph{extend} operation of \apex to account for apex-edge pairs.
Let \AP$=\langle \bold{A}, \pi \rangle$ be an $\beps$-bounded apex-path pair and let $\ApEd=\langle \bold{EA}, {e} \rangle$ be an outgoing $\beps$-bounded apex-edge pair connecting $v(\pi)$ to some vertex $v'$. 
Extending \AP by \ApEd corresponds to a new apex-path pair~$\AP'=~\langle \bold{A'}, \pi' \rangle$ where
(i)~$\pi':=\pi \cdot v$ with $\cdot$ denoting appending a vertex to a path,
(ii)~$\bold{c}(\pi'):= \bold{c}(\pi) + \bold{c}({e})$
and
(iii)~$\bold{A}': =\bold{A} + \bold{EA}$.

\textbf{Note.} We can consider an edge $e$ as degenerate or trivial apex-edge pair $\langle \bold{A}, e \rangle$ where the representative edge $e$ and the edge apex $\bold{A}$ are identical. Moreover, extending a path pair by such a trivial apex-edge pair corresponds to the way \apex  extends operation.

Importantly, using apex-edge pairs and the generalized extend operation preserve all the favorable properties of \apex. This is formalized in the following Lemma.

\begin{lemma}
\label{lemma:eps-bound}
Let $\AP=\langle \bold{A}, \pi \rangle$ be an $\beps$-bounded apex-path pair and 
let $\ApEd=\langle \bold{EA},{e} \rangle$ be an outgoing $\beps$-bounded apex-edge pair connecting $v(\pi)$ to some vertex $v'$. 
If~$\AP'=~\langle \bold{A'},\pi' \rangle$ is the apex-path pair constructed by extending \AP by \ApEd, 
then~$\AP'$ is $\beps$-bounded.
\end{lemma}

\begin{proof}
Since \AP and \ApEd are $\beps$-bounded,
$\bold{c}(\pi) \leq (1+\beps) \cdot \bold{A}$
and
$\bold{c}({e}) \leq (1+\beps) \cdot \bold{EA}$.
Combining the two inequalities:
\[
\underbrace{\bold{c}(\pi)+\bold{c}({e})}_{\bold{c}(\pi')} 
\leq 
(1+\boldsymbol{\varepsilon}) \cdot \underbrace{(\bold{A}+\bold{EA})}_{\bold{A}'}.
\]
Thus, by definition $\bold{c}(\pi')\preceq_{\beps} \bold{A}'$ and $\AP'$ is $\beps$-bounded.
\end{proof}
}

\section{Generalized \apex (\eapex)}
As we will see, it will be useful to apply the notion of a representative path and an associated apex to \emph{arbitrary} paths and not only to paths starting at $\vs$, for introducing \emph{super-edges}.  Thus, we introduce a natural generalization of edges which we call \emph{apex-edge pairs} and show how apex-edge pairs can be seamlessly integrated into \apex by generalizing the way \apex expands apex-path pairs.

\subsection{Apex-Edge Pair Description}
Given vertices $u,v$, an apex-edge pair 
$\ApEd = \langle \bold{EA}, {e} \rangle$ 
consists of 
a \emph{representative edge} ${e}$ corresponding to a path connecting $u$ and $v$ 
and
an \emph{edge apex} $\bold{EA}$, which serves as a lower bound to a subset of  the Pareto-optimal frontier of~$\Pi^*(u,v)$. 
Similar to apex-path pairs, we say that an apex-edge pair \ApEd is $\beps$-bounded iff $\bold{c}({e})\preceq_{\beps} \bold{EA}$.

We now generalize the \emph{expand} operation of \apex to account for apex-edge pairs.
Let \AP$=\langle \bold{A}, \pi \rangle$ be an $\beps$-bounded apex-path pair, and let $\ApEd=\langle \bold{EA}, {e} \rangle$ be an $\beps$-bounded apex-edge pair, where $e$ connects $v(\AP)$ to some vertex $v'$. 
Expanding  \AP by \ApEd corresponds to a new apex-path pair~$\AP'=~\langle \bold{A'}, \pi' \rangle$ where
(i)~$\pi':=\pi \cdot v$ with $\cdot$ denoting appending a vertex to a path,
(ii)~$\bold{c}(\pi'):= \bold{c}(\pi) + \bold{c}({e})$,
and
(iii)~$\bold{A}': =\bold{A} + \bold{EA}$ (see Fig.~\ref{fig:apex_expand_generalized}).

\textbf{Note.} 
Given an edge $e$, we define the corresponding 
\emph{trivial apex-edge pair} $\langle \bold{EA}, e \rangle$  such that the edge apex $\bold{EA}$ equals~$\bold{c}({e})$.
Now, replacing every edge in a graph with the corresponding trivial apex-edge pair, the result of the expansion operation just described is identical to how \apex expands apex-path pairs using edges.
Similarly, running \eapex (which we will describe shortly) when using only trivial apex-edge pairs is identical to how \apex expands apex-path pairs using the corresponding edge.

\subsection{Generalized  \apex}
\label{sec:generalized_apex}
Formally, let
$\mathcal{V}$ and $\mathcal{E}$ be a vertex set and edge set, respectively, and let
$\mathbf{c}, \mathbf{c'} : \mathcal{E} \rightarrow \mathbb{R}^2_{\geq 0}$ be two bi-objective cost  functions  over the edge set $\mathcal{E}$ such that $\mathbf{c'}(e) \preceq \mathbf{c}(e)$.
We define $\hat{\mathcal{G}} := (\mathcal{V}, \mathcal{E},\mathbf{c}, \mathbf{c'})$ and refer to it as a \emph{generalized graph}.
For each edge $e$ in the generalized graph, we define the \emph{corresponding} apex-edge pair $\ApEd = \langle \bold{EA}, {e} \rangle$ such that 
$\bold{EA}$ and the cost of $e$ are $\mathbf{c'}(e)$ and $\mathbf{c}(e)$, respectively.

In contrast to \apex which runs on graphs, 
\eapex runs on generalized graphs. However, the two algorithms only differ in how they expand apex-path pairs.
Specifically, \apex running on graph $\mathcal{G}=  (\mathcal{V}, \mathcal{E},\mathbf{c})$
is identical to \eapex running on graph $\hat{\mathcal{G}} = (\mathcal{V}, \mathcal{E},\mathbf{c}, \mathbf{c'})$
except that, when 
\apex expands an apex-path pair using edge~$e$, 
\eapex expands the apex-path pair using $e$'s corresponding apex-edge pair.

\begin{lemma}
\label{lemma:eps-bound}
Let $\AP=\langle \bold{A}, \pi \rangle$ be an $\beps$-bounded apex-path pair, and 
let $\ApEd=\langle \bold{EA},{e} \rangle$ be an $\beps$-bounded apex-edge pair whose representative edge $e$ connects $v(\AP)$ to some vertex~$v'$. 
If~$\AP'=~\langle \bold{A'},\pi' \rangle$ is the apex-path pair constructed by expanding \AP by \ApEd, 
then~$\AP'$ is $\beps$-bounded.
\end{lemma}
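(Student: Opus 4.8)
The plan is to unwind the two $\beps$-boundedness hypotheses into component-wise inequalities, add them, and recognize the sum as exactly the $\beps$-boundedness condition for $\AP'$. Concretely, by definition of $\beps$-domination, the hypothesis that $\AP$ is $\beps$-bounded gives $\bold{c}(\pi) + \bold{h}(v(\AP)) \preceq_{\beps} \bold{f}(\AP) = \bold{g}(\AP) + \bold{h}(v(\AP)) = \bold{A} + \bold{h}(v(\AP))$, which I would first simplify; note that since $\bold{h}$-values appear on both sides one cannot in general cancel them, so I would instead work with the apex-only inequality $\bold{c}(\pi) \leq (1+\beps)\cdot \bold{A}$ that the paper's own (ignored) proof uses, or more carefully keep the heuristic term and argue it is shared by $\AP'$ since $v(\AP') = v' = v(\ApEd)$. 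The hypothesis that $\ApEd$ is $\beps$-bounded gives $\bold{c}(e) \preceq_{\beps} \bold{EA}$, i.e.\ $\bold{c}(e) \leq (1+\beps)\cdot \bold{EA}$.

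Next I would invoke the construction of $\AP'$ from the expand operation: $\bold{c}(\pi') = \bold{c}(\pi) + \bold{c}(e)$ and $\bold{A}' = \bold{A} + \bold{EA}$. Adding the two component-wise inequalities above and using distributivity of element-wise multiplication over element-wise addition of vectors yields
\[
\bold{c}(\pi') = \bold{c}(\pi) + \bold{c}(e) \leq (1+\beps)\cdot\bold{A} + (1+\beps)\cdot\bold{EA} = (1+\beps)\cdot(\bold{A} + \bold{EA}) = (1+\beps)\cdot\bold{A}'.
\]
This is precisely $\bold{c}(\pi') \preceq_{\beps} \bold{A}' = \bold{g}(\AP')$. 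If the statement's notion of $\beps$-boundedness for apex-path pairs is the one with heuristics, i.e.\ $\bold{c}(\pi') + \bold{h}(v(\AP')) \preceq_{\beps} \bold{f}(\AP')$, I would additionally add $\bold{h}(v(\AP'))$ to both sides; since $v(\AP') = v'$ is a fixed vertex, $\bold{h}(v(\AP'))$ is a constant vector and the inequality $\bold{x} \leq (1+\beps)\cdot\bold{y}$ is preserved under adding the same nonnegative vector to both sides only after also multiplying that vector by $(1+\beps)\geq \bold{1}$ on the right — so I would write $\bold{c}(\pi') + \bold{h}(v(\AP')) \leq (1+\beps)\cdot\bold{A}' + \bold{h}(v(\AP')) \leq (1+\beps)\cdot(\bold{A}' + \bold{h}(v(\AP'))) = (1+\beps)\cdot\bold{f}(\AP')$, which closes the argument.

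I do not expect a genuine obstacle here — this is essentially the monotonicity of $\beps$-domination under vector addition. The one subtlety worth being careful about is exactly which definition of ``$\beps$-bounded apex-path pair'' is in force (the apex-only version $\bold{c}(\pi)\preceq_{\beps}\bold{A}$ versus the $\bold{f}$-value version with heuristics); the paper introduces the $\bold{f}$-value version for $\apex$, so I would phrase the proof to cover that case, noting that the heuristic term attaches to the endpoint $v'$ of $e$ and is therefore identical for $\AP'$, and that adding a common nonnegative vector and using $(1+\beps)\geq\bold{1}$ component-wise preserves the $\beps$-domination. The proof is three lines of vector arithmetic once this bookkeeping is fixed.
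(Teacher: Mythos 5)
Your proof is correct and follows essentially the same route as the paper's: unfold the two $\beps$-boundedness hypotheses into component-wise inequalities, add them, and use distributivity to recognize $(1+\beps)\cdot\bold{A}'$ on the right-hand side. Your extra care about reconciling the apex-only formulation $\bold{c}(\pi)\preceq_{\beps}\bold{A}$ with the $\bold{f}$-value definition (by noting the heuristic term attaches to the common endpoint $v'$ and that $(1+\beps)\geq\bold{1}$ preserves the inequality) is a sound refinement of a point the paper's proof passes over silently.
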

{
The proof is straightforward, as \beps-boundedness is preserved under component-wise addition of cost vectors. The full proof is provided in Appendix~\ref{apndx_lemma_eps_bound}.
}

\ignore{
\begin{proof}
    $\AP=\langle \bold{A}, \pi \rangle$ is \beps-bounded, thus:
    \begin{equation}
        \label{eq:ap_eps_bounded}
        \mathbf{c}(\pi) \leq (1+\beps) \cdot \mathbf{A}
    \end{equation}

    $\ApEd=\langle \bold{EA},{e} \rangle$ is \beps-bounded, thus:
    \begin{equation}
        \label{eq:ae_eps_bounded}
        \mathbf{c}(e) \leq (1+\beps) \cdot \mathbf{EA}
    \end{equation}

    The cost of path $\pi'$ is:
    \begin{eqnarray}
        \begin{split}
        \mathbf{c}(\pi') &= \mathbf{c}(\pi) + \mathbf{c}(e) \\
        &\underbrace{\leq}_{\eqref{eq:ap_eps_bounded},\eqref{eq:ae_eps_bounded}} (1+\beps) \cdot \bold{A}+(1+\beps) \cdot \bold{EA} \\
        &=(1+\beps) \cdot (\bold{A} + \bold{EA}) \\
        &=(1+\beps) \cdot \mathbf{A'} 
        \end{split}
    \end{eqnarray}
    
    Thus, by definition, $\AP'$ is \beps-bounded.
\end{proof}
}

\ignore{
\begin{lemma}
\label{lemma:apx-bound}
Let $\mathcal{V}$ and $\mathcal{E}$ be a vertex set and an edge set, respectively, 
and
let $\mathbf{c}, \mathbf{c'} : \mathcal{E} \rightarrow \mathbb{R}^2_{\geq 0}$ be two bi-objective cost  functions  over the edge set $\mathcal{E}$ such that $\forall e \in \mathcal{E}, \mathbf{c'}(e) \leq \mathbf{c}(e)$.
Set $\beps := \underset{e\in \mathcal{E}}{\max} \left(\mathbf{c}(e) / \mathbf{c'}(e) - 1\right)$.
The Pareto-optimal solution set of paths between \vs and \vt in graph $\mathcal{G}_{c}=  (\mathcal{V}, \mathcal{E},\mathbf{c})$ is an $\beps$-approximation of the Pareto-optimal solution set of paths between~\vs and \vt in graph $\mathcal{G}_{c'}=  (\mathcal{V}, \mathcal{E},\mathbf{c'})$.
\end{lemma}
\begin{proof}
    From the definition of $\beps$ we get that $\forall e \in \mathcal{E}$:
    \begin{eqnarray}
    \label{eq:edge_cost_constr}
    \begin{split}
    \beps &\geq \frac{\mathbf{c}(e)}{\mathbf{c'}(e)}-1 \\
    \Rightarrow \mathbf{c}(e) &\leq (1+\beps) \cdot \mathbf{c'}(e).
    \end{split}
    \end{eqnarray}
        
    We can extend this property to any arbitrary path $\pi$:
    \begin{eqnarray}
        \label{eq:path_cost_constr}
        \begin{split}
        \mathbf{c}(\pi)&=\sum_{i=1}^{|\pi|-1}\mathbf{c}\Bigl( (v_i,v_{i+1}) \Bigr) \\
        &\underset{(\ref{eq:edge_cost_constr})}{\leq} (1+\beps)\sum_{i=1}^{|\pi|-1}\mathbf{c'}\Bigl( (v_i,v_{i+1}) \Bigr) \\
        &=(1+\beps) \cdot \mathbf{c'}(\pi).
        \end{split}
    \end{eqnarray}

    Let us denote by $\Pi^*_{\bold{c}}$ ($\Pi^*_{\bold{c'}}$, resp.) the Pareto-optimal solution set $\Pi^*(\vs,\vt)$ in $\mathcal{G}=(\mathcal{V},\mathcal{E},\bold{c})$ ($\mathcal{G'}=(\mathcal{V},\mathcal{E},\bold{c'})$, resp.).   
    Let $\pi' \in \Pi^*_{\bold{c'}}$ and consider the following two cases:
    \begin{enumerate}
        \item[\textbf{C1}] $\pi' \notin \Pi^*_{\bold{c}}$. 
        Thus there must exist another path $\pi \in \Pi^*_{\bold{c}}$ such that 
        \begin{equation}
        \label{eq:dominance_case1a}
            \mathbf{c}(\pi) \leq \mathbf{c}(\pi').
        \end{equation}
        Using Eq.~\eqref{eq:path_cost_constr} we have that
        \begin{equation}
        \label{eq:dominance_case1b}
            \mathbf{c}(\pi') \leq (1+\beps) \cdot \mathbf{c'}(\pi').
        \end{equation}
        Thus, from Eq.~\eqref{eq:dominance_case1a} and~\eqref{eq:dominance_case1b} we have that 
        \begin{equation}
        \label{eq:dominance_case1}
            \mathbf{c}(\pi) \leq (1+\beps) \cdot \mathbf{c'}(\pi').
        \end{equation}
        %
        \item[\textbf{C2}] $\pi' \in \Pi^*_{\bold{c}}$. 
        In this case, following Eq.~\eqref{eq:path_cost_constr}:
        \begin{eqnarray}
        \label{eq:dominance_case2}
        \begin{split}
        \bold{c}(\pi') &\leq (1+\beps) \cdot \bold{c'}(\pi').
        \end{split}
        \end{eqnarray}
    \end{enumerate}
    Combining Eq.~\eqref{eq:dominance_case1} and Eq.~\eqref{eq:dominance_case2} we get that:
    \[
        \forall \pi' \in \Pi^*_{\bold{c'}}, \ \exists \pi \in \Pi^*_{\bold{c}} :
        \mathbf{c}(\pi) \leq (1+\beps) \cdot \mathbf{c'}(\pi').
    \]
    By definition, it implies that $\Pi^*_{\bold{c}}$ \beps-dominates $\Pi^*_{\bold{c'}}$.
\end{proof}
}

\begin{theorem}
\label{thm:gapex}
    Let $\hat{\mathcal{G}} = (\mathcal{V}, \mathcal{E},\mathbf{c}, \mathbf{c'})$    
    be a generalized graph of graph $\mathcal{G}=  (\mathcal{V}, \mathcal{E},\mathbf{c})$. 
    Let~$\vs,\vt \in~\mathcal{V}$ and recall that~$\Pi^*_{\bold{c}}$ denotes the Pareto-optimal set of paths between $v_s$ and~$v_t$ in $\G$.
    Set 
    $\beps := \underset{e\in \mathcal{E}}{\max} \left(\mathbf{c}(e) / \mathbf{c'}(e) - 1\right)$
    and
    let
    $\Pi^*_{\eapex}$ be the output of \eapex on $\hat{\mathcal{G}}$ when using an approximation factor $\beps$.
    Then, $\Pi^*_{\eapex} \preceq_{\beps} \Pi^*_{\bold{c}}$.
    Namely, running  \eapex on the generalized graph with approximation factor $\beps$ yields a Pareto-optimal solution set of paths between \vs and \vt that is an $\beps$-approximation of the Pareto-optimal solution set of paths between $v_s$ and~$v_t$ in \G.
\end{theorem}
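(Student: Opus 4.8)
The plan is to regard \eapex running on $\hat{\mathcal{G}}$ as \apex running on the lower-bound graph $(\mathcal{V},\mathcal{E},\mathbf{c'})$, while each apex-path pair drags along one extra piece of data — the $\mathbf{c}$-cost of its representative path — which Lemma~\ref{lemma:eps-bound} keeps $\beps$-consistent with the apex. First I would record the two elementary facts implied by the choice of $\beps$: (i) $\mathbf{c}(e)\le(1+\beps)\cdot\mathbf{c'}(e)$ for every edge $e$, so each apex-edge pair $\langle\mathbf{c'}(e),e\rangle$ of $\hat{\mathcal{G}}$ is $\beps$-bounded and, summing along a path $\pi$, $\mathbf{c'}(\pi)\le\mathbf{c}(\pi)\le(1+\beps)\cdot\mathbf{c'}(\pi)$; and (ii) starting from the trivially $\beps$-bounded initial pair $\langle\mathbf{0},(\vs)\rangle$, Lemma~\ref{lemma:eps-bound} makes every expansion preserve $\beps$-boundedness and merges are adopted only when $\beps$-bounded, so every $\langle\mathbf{A},\pi\rangle$ returned by \eapex has $v(\pi)=\vt$ and, since $\mathbf{h}(\vt)=\mathbf{0}$, satisfies $\mathbf{c}(\pi)\le(1+\beps)\cdot\mathbf{A}$.

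The crux — and the step I expect to be the main obstacle — is to show that the apex bookkeeping of \eapex on $\hat{\mathcal{G}}$ reproduces exactly what \apex does on $(\mathcal{V},\mathcal{E},\mathbf{c'})$: apexes are built by summing the edge apexes $\mathbf{c'}(e)$, merges take element-wise minima of apexes, and nodes are pruned against already-found solutions through their apex-based $\mathbf{f}$-values. The only operations that ever read the representative-path cost $\mathbf{c}(\pi)$ rather than $\mathbf{c'}(\pi)$ are the $\beps$-boundedness test (controlled by fact (ii)) and the merge-acceptance test, whose failure only leaves \eapex with more search nodes and hence cannot destroy coverage. Making this ``only differ in how they expand'' claim rigorous at the level of \apex's coverage invariant is where the care is needed; once it is done, the coverage guarantee of \apex~\citep{zhang2022pex} transfers, so that for every Pareto-optimal $\vs$--$\vt$ path of $(\mathcal{V},\mathcal{E},\mathbf{c'})$ — and therefore, since that Pareto set covers all $\vs$--$\vt$ paths, for every $\vs$--$\vt$ path $\rho$ — some $\langle\mathbf{A},\pi\rangle\in\Pi^*_{\eapex}$ has $\mathbf{A}\le\mathbf{c'}(\rho)$.

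Finally I would chain these facts. Given an arbitrary $\pi^{**}\in\Pi^*_{\mathbf{c}}$, the coverage statement applied to $\rho=\pi^{**}$ produces $\langle\mathbf{A},\pi\rangle\in\Pi^*_{\eapex}$ with $\mathbf{A}\le\mathbf{c'}(\pi^{**})\le\mathbf{c}(\pi^{**})$ (the last inequality from fact (i)), and combining with fact (ii), $\mathbf{c}(\pi)\le(1+\beps)\cdot\mathbf{A}\le(1+\beps)\cdot\mathbf{c}(\pi^{**})$, i.e. $\mathbf{c}(\pi)\preceq_{\beps}\mathbf{c}(\pi^{**})$. As $\pi^{**}$ was arbitrary, $\Pi^*_{\eapex}\preceq_{\beps}\Pi^*_{\mathbf{c}}$, which is the claim.
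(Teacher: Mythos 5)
Your overall route is the paper's route: both defer to the correctness proof of \apex (Thm.~1 of Zhang et al.) and use Lemma~\ref{lemma:eps-bound} to plug the one hole that the generalized expansion opens, namely that $\beps$-boundedness of apex-path pairs is preserved when the apex is advanced by $\mathbf{c'}(e)$ while the representative path is advanced by $\mathbf{c}(e)$. Your facts (i) and (ii) and the final chaining are correct as far as they go, and you correctly locate the crux in making the ``only differ in how they expand'' claim rigorous at the level of the coverage invariant; the paper does exactly this in Appendix~\ref{apndx_thm_gapex} by reproducing the inductive proof of Lemma~4 of the \apex paper line by line.

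The genuine gap is in the form of the coverage guarantee you transfer. You assert that for every $\vs$--$\vt$ path $\rho$ some returned pair $\langle\mathbf{A},\pi\rangle$ satisfies $\mathbf{A}\le\mathbf{c'}(\rho)$, and your final chain relies on this apex-domination. But \apex's invariant is a \emph{dichotomy}: for each prefix, either (Case~1) an \emph{expanded} pair's apex weakly dominates the prefix's $\mathbf{g}$-value, or (Case~2) a solution-set pair's \emph{representative path} $\beps$-dominates the prefix's $\mathbf{f}$-value --- with no claim on that pair's apex. Case~2 arises whenever a prefix of $\rho$ is pruned against an already-found solution (Line~20 of \apex), and once the induction enters Case~2 it stays there; so for such $\rho$ no returned apex need dominate $\mathbf{c'}(\rho)$, and your chain $\mathbf{A}\le\mathbf{c'}(\pi^{**})\le\mathbf{c}(\pi^{**})$ has nothing to attach to. The gap is repairable: in Case~2 the conclusion $\mathbf{c}(\pi)\preceq_{\beps}\mathbf{c}(\pi^{**})$ is immediate from the invariant itself (at $l=L$ the $\mathbf{f}$-values equal the costs), so you only need to branch on the two cases rather than collapse them into apex-domination. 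A related caution: if you instead tried a true black-box reduction to \apex run on $(\mathcal{V},\mathcal{E},\mathbf{c'})$ and used only its \emph{final} guarantee $\mathbf{c'}(\pi)\preceq_{\beps}\mathbf{c'}(\rho)$, converting back to $\mathbf{c}$ via fact~(i) would cost you a $(1+\beps)^2$ factor; avoiding that is precisely why the Case~2 bookkeeping must be carried out with the representative path's $\mathbf{c}$-cost inside the induction, as the paper does.
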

{
The proof is almost identical to A*pex optimality proof (Thm. 1 in \cite{zhang2022pex}). For full details we refer the reader to Appendix \ref{apndx_thm_gapex}.}

\ignore{
\begin{proof}
    \newText{New version:}
    
    As described earlier, \apex and \eapex only differ in how they expand apex-path pairs. Therefore, to prove the theorem, we will go through the same theoretical steps in the proof of Thm. 1 in \apex paper \cite{zhang2022pex}. Note that Lemmas 1-3 are trivially correct for \eapex as well. 
    Let us now verify the validity of Lemma 4's proof for \eapex. We mark in blue extensions considering \eapex. The line numbers refer to \apex pseudo-code in \cite{zhang2022pex}. The proof is by induction. The lemma holds for $l=1$ and any solution since apex-path pair $\AP=\langle \mathbf{0},[s_{\text{start}}] \rangle$ gets expanded and has the properties required for Lemma's 4 Case 1. Now assume that the lemma holds for some $l<L$ and any solution. We prove that it then also holds for $l+1$ and this solution.

    Assume that Case 1 holds for $l$ and consider both the apex-path pair \AP mentioned there and its potential child apex-path pair $\AP'$ created on Line 14 for $s'=s_{l+1}$. Apex-path pair $\AP'$ contains state $s_{l+1}$, and its apex weakly dominates the $\mathbf{g}$-value of path $\pi_{l+1}$, which implies that its $\mathbf{f}$-value weakly dominates the $\mathbf{f}$-value of path $\pi_{l+1}$. We distinguish several cases:
    \begin{enumerate}
        \item First, the condition on Line 20 holds for some apex-path pair in the solution set, namely, the truncated $\mathbf{f}$-value of the representative path of this apex-path pair \beps-dominates the truncated $\mathbf{f}$-value of apex-path pair $\AP'$. \apex replaces this apex-path pair with a new apex-path pair $\AP''$ in the solution set on Line 22. Apex-path pair $\AP''$ stays in the solution set but \apex might merge it several (more) times with other apex-path pairs on Line 29 before it terminates. The apex of apex-path pair $\AP''$ weakly dominates the $\mathbf{f}$-value of path $\pi_{l+1}$ (since this apex is the component-wise minimum of the $\mathbf{f}$-value of apex-path pair $\AP'$ and another apex and hence weakly dominates the $\mathbf{f}$-value of apex-path pair $\AP'$, which in turn weakly dominates the $\mathbf{f}$-value of path $\pi_{l+1}$) and merging it with other apex-path pairs does not change this property according to Lemma 3 (in \apex paper). This apex-path also remains \beps-bounded (which is due to the conditions on Lines 20 and 30, Lemma 1 (in \apex paper), the consistent heuristics and \textcolor{blue}{by Lemma \ref{lemma:eps-bound}}), the $\mathbf{f}$-value of its representative path always \beps-dominates the $\mathbf{f}$-value of itself, which equals its apex. Put together, the $\mathbf{f}$-value of its representative path \beps-dominates the $\mathbf{f}$-value of path $\pi_{l+1}$. Thus, the merged apex-path pair satisfies Case 2 for $l+1$.

        \item Second, the condition on Line 24 holds, namely, there exists a truncated $\mathbf{f}$-value in $G^T_{cl}(s(\AP'))$ that weakly dominates the truncated $\mathbf{f}$-value of apex-path pair $\AP'$. Then, an expanded apex-path pair $\AP''$ exists according to Lemma 2 (in \apex paper) that contains state $s_{l+1}$ and whose $\mathbf{f}$-value weakly dominates the $\mathbf{f}$-value of apex-path pair $\AP'$. Thus, its apex weakly dominates the apex of apex-path pair $\AP'$. Thus, apex-path pair $\AP''$ satisfies Case 1 for $l+1$ since the apex of apex-path pair $\AP'$ in turn weakly dominates the $\mathbf{g}$-value of path $\pi_{l+1}$.

        \item Otherwise, \apex executes Line 17 for apex-path pair $\AP'$, where the apex-path pair is inserted into \open, perhaps after having been merged with another apex-pair pair on Line 29. \apex might merge it several (more) times with other apex-path pairs on Line 29 before finally extracting it. Its apex weakly dominates the $\mathbf{g}$-value of path $\pi_{l+1}$ and merging it with other apex-path pairs does not change this property according to Lemma 3 (in \apex paper). Thus, if this apex-path pair is expanded, it satisfies Case 1 for $l+1$. If it is extracted but not expanded, the condition on Line 20 or Line 24 holds, and thus, as we have already proved, Case 1 or Case 2 holds.
    \end{enumerate}
    Assume that Case 2 holds for $l$ and consider the apex-path pair mentioned there. The $\mathbf{f}$-value of the representative path of this apex-path pair \beps-dominates the $\mathbf{f}$-value of path $\pi_l$. Since the heuristic function is consistent, the $\mathbf{f}$-value of path $\pi_l$ in turn weakly dominates the $\mathbf{f}$-value of path $\pi_{l+1}$. Thus, this apex-path pair satisfies Case 2 for $l+1$.

    \paragraph{Proof of Thm. 1 in \apex paper}{} Lemma 4 (in \apex paper) holds for prefix $\pi_L=\pi$ of any solution $\pi$. In case its Case 2 holds, the theorem holds by definition for path $\pi$ since the $\mathbf{f}$-value of solutions (including those of the representative path and path $\pi$) are equal to their costs. In case its Case 1 holds, consider the apex-path pair mentioned there. This apex-path pair contains the goal state, and \apex thus executed Line 11 for it, where the apex-path pair was inserted into the solution set, perhaps after having been merged with another apex-path pair on Line 29. The apex-path pair stays in the solution set but \apex might merge it several (more) times with other apex-path pairs on Line 29 before it terminates. The apex of the apex-path pair weakly dominates the $\mathbf{g}$-value of path $\pi$ according to Lemma 4 (in \apex paper) and merging it with other apex-path pairs does not change this property according to Lemma 3 (in \apex paper). Since the apex-path pair also remains \beps-bounded (which is due to the conditions on Lines 20 and 30, Lemma 1 (in \apex paper), the consistent heuristics and \textcolor{blue}{by Lemma \ref{lemma:eps-bound}}), the $\mathbf{f}$-value of its representative path always \beps-dominates the $\mathbf{f}$-value of itself, which equals its apex. Put together, the $\mathbf{f}$-value of its representative path \beps-dominates the $\mathbf{g}$-value of path $\pi$. Thus, the theorem holds by definition for path $\pi$ since the $\mathbf{g}-$ and $\mathbf{f}$-values of solutions (including those of the representative path and path $\pi$) are equal to their costs.
    
    \ignore{
    \newText{Previous version:}
    We denote $\Pi^*_{\mathbf{c'}}$ as the Pareto-optimal set $\Pi^*(\vs,\vt)$ in graph $\G'=(\mathcal{V},\mathcal{E},\mathbf{c'})$.
    From Lemma \ref{lemma:apx-bound} we get:
    \[
        \forall \pi_{\mathbf{c'}} \in \Pi^*_{\mathbf{c'}},
        \exists \pi_{\eapex} \in \Pi^*_{\eapex}: 
    \]
    \begin{equation}    
    \label{eq:pi_gapex}
        \mathbf{c}(\pi_{\eapex}) \leq (1+\beps) \cdot \mathbf{c'}(\pi_{\mathbf{c'}}).
    \end{equation}
    Let us show that each path $\pi_\mathbf{c} \in \Pi^*$ is weakly dominated by some path $\pi_\mathbf{c'} \in \Pi^*_{\mathbf{c'}}$.
    Similar to Eq.~\eqref{eq:dominance_case1} and Eq.~\eqref{eq:dominance_case2}, we will consider two cases:
    \begin{enumerate}
        \item[\textbf{C1}] $\pi_{\mathbf{c}} \notin \Pi^*_{\bold{c'}}$. In this case, $\exists \pi \in \Pi^*_{\bold{c'}}$ that satisfies:
        \begin{equation}
        \label{eq:dominance_case3}
            \mathbf{c'}(\pi) \leq \mathbf{c}(\pi_{\bold{c}}).
        \end{equation}
        \item[\textbf{C2}] $\pi_{\bold{c}} \in \Pi^*_{\bold{c'}}$. 
        From the definition of a generalized graph, we know that:
        \[
        \forall e \in \mathcal{E}: \mathbf{c'}(e) \leq \mathbf{c}(e).
        \]
        This property trivially extends to paths. Thus:
        \[
            \forall \pi: \mathbf{c'}(\pi) \leq \mathbf{c}(\pi).
        \]
        And specifically for $\pi_{\bold{c}}$:
        \begin{equation}
            \label{eq:dominance_case4}
             \mathbf{c'}(\pi_{\bold{c}}) \leq \mathbf{c}(\pi_{\bold{c}}).
        \end{equation}
    \end{enumerate}    
        Combining Eq.~\eqref{eq:dominance_case3} and Eq.~\eqref{eq:dominance_case4} we get that:
        \begin{equation}
            \label{eq:c_prime_domination}
            \forall \pi_{\bold{c}} \in \Pi^*, \ \exists \pi_{\bold{c'}} \in \Pi^*_{\bold{c'}} :
            \mathbf{c'}(\pi_{\bold{c'}}) \leq \mathbf{c}(\pi_{\bold{c}}).
        \end{equation}
        By definition, it implies that $\Pi^*_{\bold{c'}}$ weakly-dominates $\Pi^*$. Combining Eq.~\eqref{eq:pi_gapex} and Eq.~\eqref{eq:c_prime_domination} we get that:
        \[
            \forall \pi_{\bold{c}} \in \Pi^*, \ \exists \pi_{\eapex} \in \Pi^*_{\eapex} :
        \]
        \[
            \mathbf{c}(\pi_{\eapex}) \leq (1+\beps) \cdot \mathbf{c}(\pi_{\mathbf{c}}).
        \]
    By definition, it implies that $\Pi^*_{\eapex}$ \beps-dominates $\Pi^*$.
    \newText{Note: Lemma (\ref{lemma:eps-bound}) is not needed?!}
    }
\end{proof}
}

\ignore{
We now extend definitions and notations introduced in Sec.~\ref{sec:pdef} to account for apex-edge pairs.
Throughout this section we denote 
$\mathcal{V}$ and $\mathcal{E}$ to be a vertex set and edge set, respectively 
and
$\beps\geq 0$ to be some approximation factor.

\begin{definition}
Let 
$\mathbf{c}, \mathbf{c'} : \mathcal{E} \rightarrow \mathbb{R}^{\geq 0}$ be two bi-objective cost  functions  over the edge set $\mathcal{E}$.
We say that $\mathbf{c'}$ is an $\beps$-lower bound of $\mathbf{c'}$ if 
$\forall e \in \mathcal{E}$ it holds that
(i)~$\mathbf{c'}(e) \leq \mathbf{c}(e)$
and
(ii)~$\mathbf{c}(e) \leq (1 + \beps) \cdot \mathbf{c'}(e) $.
\end{definition}

The generalized version of \apex \OS{WHICH WE CAL???}
receives as input 
a graph $\mathcal{G} = (\mathcal{V}, \mathcal{E})$,
a approximation factor~$\beps\geq 0$
and
$\mathbf{c}, \mathbf{c'} : \mathcal{E} \rightarrow \mathbb{R}^{\geq 0}$ such that 
$\mathbf{c'}$ is an $\beps$-lower bound of $\mathbf{c'}$.
Given a query $\vs,\vt \in \mathcal{V}$ it behaves exactly like \apex with the following change:
while  generating the outgoing instead of taking the neigh

\begin{lemma}
    Let 
    $\G =(\mathcal{V},\mathcal{E})$ to be a graph, 
    $\beps\geq 0$ be some approximation factor
    and
    $\mathbf{c}, \mathbf{c'}$ two bi-objective cost  functions  over $\mathcal{E}$
    s.t. $\mathbf{c'}$ is an $\beps$-lower bound of $\mathbf{c'}$.
    For any~$u,v \in \mathcal{V}$ the Pareto optimal solutions set  of paths connecting $u,v$ in $(\mathcal{V},\mathcal{E}, \mathbf{c})$
    is an $\beps$-approximate Pareto optimal solutions set of the Pareto optimal solutions set  of paths connecting $u,v$ in $(\mathcal{V},\mathcal{E}, \mathbf{c'})$.
\end{lemma}

\begin{definition}
    Let 
    $\G =(\mathcal{V},\mathcal{E})$ to be a graph, 
    $\beps\geq 0$ be some approximation factor
    and
    $\mathbf{c}, \mathbf{c'}$ two bi-objective cost  functions  over $\mathcal{E}$
    s.t. $\mathbf{c'}$ is an $\beps$-lower bound of $\mathbf{c'}$.
    We 
    define~$\hat{\mathcal{E}}({\mathcal{E}}, \mathbf{c}, \mathbf{c'})$, the set of apex-edge pairs 
    corresponding to~$\mathcal{E}$ as follows:
    for each edge $e\in\mathcal{E}$ we add the apex-edge pair $\hat{e}$ to $\hat{\mathcal{E}}$ such that the costs of the apex and edge of $\hat{e}$ are set to $\mathbf{c'}(e)$ and $\mathbf{c}(e)$, respectively. 
    We denote the graph defined over the set of apex-edge pairs corresponding to~$\mathcal{E}$
    by~$\hat{\mathcal{G}} = (\mathcal{V}, \hat{\mathcal{E}})$.
\end{definition}

\begin{lemma}
    For any $u,v \in \mathcal{V}$,
    the output of running \apex on $\hat{\mathcal{G}}$ with approximation factor $\beps$  is an 
    $\beps$-approximate Pareto optimal solutions set of the Pareto optimal solutions set  of paths connecting $u,v$ in $(\mathcal{V},\mathcal{E}, \mathbf{c'})$.
\end{lemma}
}

\ignore{
\YH{Note that super-edges are defined only later in the paper, so I can't really refer specifically to them. I describe a more general idea.}

This was a brief overview of the \apex mechanism. In the \apex original paper, the authors extended an apex-path pair only using regular edges. Here, we introduce a natural generalization of edge extension inspired by the concept of apex-path pairs.

Recall that every apex-path pair is associated with a specific graph vertex corresponding to the end vertex of its representative path. Specifically, an apex-path pair represents a region in the Pareto-optimal front of paths going from the query's \emph{start} vertex to the apex-pair last vertex of the representative path. We generalize this idea for partial paths that start from an \emph{arbitrary} vertex. In analogy to the apex-path pair representation, we define an \emph{apex-edge} pair, denoted as $\ApEd = \langle \bar{\pi}, \bold{EA} \rangle$. Here, the \emph{representative edge} $\hat{e}$ corresponds to a specific sequence of edges along a partial path whose initial vertex is \emph{not} necessarily the query's start vertex. The \emph{edge apex} $\bold{EA}$ serves as a lower bound on the Pareto-optimal cost of all paths that share the same start and end vertices as $\hat{e}$. We say that an apex-edge pair \ApEd is \beps-bounded if $\bold{c}(\hat{e})\preceq_{\boldsymbol{\varepsilon}} \bold{EA}$.

This generalization naturally extends to regular edges: a regular edge is a degenerate case of a partial path consisting of a single edge, where its representative cost and edge apex are trivially identical.

We now generalize the \emph{extend} operation in \apex to incorporate \emph{apex-edge} pairs.

Let \AP$=\langle \pi, \bold{A} \rangle$ be an \beps-bounded apex-path pair, and let $\ApEd=\langle \bar{\pi},\bold{EA} \rangle$ be an outgoing \beps-bounded apex-edge pair originating from $v(\pi)$. The successor apex-path pair~ $\AP'=~\langle \pi', \bold{A'} \rangle$, derived by \emph{extending} \AP with \ApEd
is defined as follows:
\begin{itemize}
    \item Set $\bold{c}(\pi')$ to be the element-wise sum of $\bold{c}(\pi)$ and $\bold{c}(\bar{\pi})$.
    \item Set $\bold{A}'$ to be the element-wise sum of $\bold{A}$ and $\bold{EA}$.
\end{itemize}
We will prove that the generalized \emph{extend} operation in \apex preserves the \beps-bound property of every apex-path pair in \open.

\begin{lemma}
\label{lemma:eps-bound}
The successor apex-path pair $\AP'=\langle \pi', \bold{A'} \rangle$ is \beps-bounded.
\end{lemma}

\begin{proof}
Since \AP is \beps-bounded: $\bold{c}(\pi) \leq (1+\boldsymbol{\varepsilon}) \cdot \bold{A}$, and similarly, since \ApEd is \beps-bounded: $\bold{c}(\bar{\pi}) \leq (1+\boldsymbol{\varepsilon}) \cdot \bold{EA}$. Combining the two inequalities:
\[
\underbrace{\bold{c}(\pi)+\bold{c}(\bar{\pi})}_{\bold{c}(\pi')} 
\leq 
(1+\boldsymbol{\varepsilon}) \cdot \underbrace{(\bold{A}+\bold{EA})}_{\bold{A}'}
\]
So, by definition $\bold{c}(\pi')\preceq_{\boldsymbol{\varepsilon}} \bold{A}'$. Thus, $\AP'$ is \beps-bounded.
\end{proof}

\YH{The theorem of proving correct $\varepsilon$-approximation could not be put here since it makes statements about graph \Gtilde which was not even defined. Thus, the theorem was moved back to the query phase section.}

With a slight abuse of notation, we continue to refer to the adapted \apex, which incorporates the generalized \emph{extend} operation, simply as \apex.

}

\ignore{
\subsection{Computing $\Pi^*_\varepsilon$ for correlated objectives}
\label{sec:computing_Pi_epsilon_for_correlated_objectives}
We expect that in graphs with correlated objectives, relatively small approximation factors will be sufficient for a single solution to $\varepsilon$-dominate the entire Pareto-optimal set. 
To illustrate this, consider a straightforward approach for selecting such a solution. Given a search query between vertices $s$ and $t$, we begin by solving two independent single-objective shortest-path searches - one for each objective. We then determine the conditions under which one of these solutions $\varepsilon$-dominates the entire Pareto-optimal set. Formally, let $\pi_1$ be the optimal solution obtained by optimizing the first objective, yielding a cost of $\boldsymbol{c}(\pi_1)=~(\underline{c}_1,\bar{c}_2)$. Similarly, let $\pi_2$ be the optimal solution obtained by optimizing the second objective, with a cost of $\boldsymbol{c}(\pi_2)=(\bar{c}_1,\underline{c}_2)$. Clearly, $\pi_1$ and $\pi_2$ are mutually undominated, as $\underline{c}_1 \leq \bar{c}_1$ and $\underline{c}_2 \leq \bar{c}_2$. We refer to $\pi_1$ and $\pi_2$ as the \emph{extreme} solutions of $\Pi^*$. Given a user-defined approximation factors vector $\boldsymbol{\varepsilon}=(\varepsilon_1,\varepsilon_2)$, we can approximate $\Pi^*$ using a single solution under the following conditions:
\begin{enumerate}    
    \item Set $\Pi^*_{\varepsilon}\equiv\{\pi_1\}$ if $(\bar{c}_2 - \underline{c}_2)/\underline{c}_2 \leq \varepsilon_2$. Since $\pi_1$ is already optimal for the first objective, no constraint is needed on~$\varepsilon_1$.
    \item Set $\Pi^*_{\varepsilon}\equiv\{\pi_2\}$ if $(\bar{c}_1 - \underline{c}_1)/\underline{c}_1 \leq \varepsilon_1$. Similarly, no constraint is required on $\varepsilon_2$ since $\pi_2$ is optimal for the second objective.
\end{enumerate}

In other words, we choose one of the extreme solutions and verify whether it $\varepsilon$-dominates the other. When a strong correlation exists between objectives, solving only two \emph{single}-objective shortest-path queries may be sufficient to easily obtain~$\Pi^*_\varepsilon$ without explicitly exploring the full \emph{bi}-objective search space, regardless of the number of solutions in the Pareto-optimal set.

Another important factor to consider is the number of edges taken along an optimal trajectory. Empirical tests indicate that as the number of hops in an optimal trajectory increases, the two extreme solutions of the Pareto front are more likely to diverge. Under the proposed approach for generating $\Pi^*_\varepsilon$, this would result in larger approximation factors. \YH{This claim is important as it outlines the tension of cluster delineation - you cannot aggregate as many edges as you want in a single cluster, as more edges extend the Pareto front's width, resulting in exceeding the user's approximation factor threshold. The claim could be theoretically supported by stochastic processes theory which is out of this paper scope. Thus, this claim is presented only empirically.}
}

\section{Algorithmic Approach}
In graph regions with a strong correlation between objectives, while there may be a large number of solutions in the Pareto-optimal solution set, they can typically all be \beps-dominated by a single solution using a small approximation factor.
Following this insight, we propose an algorithmic framework (see Fig.~\ref{fig:algorithmic_framework}) where, in a preprocessing phase (Sec.~\ref{sec:correlation_based_preprocessing}), we identify continuous regions with a strong correlation between objectives, which we call correlated clusters.
To avoid having to run our BOSP search algorithm within each correlated cluster, we then compute a set of apex-edge pairs that allows the approximation of paths that traverse a correlated cluster.
Given a query, these apex-edge pairs are used to construct a new graph, which we call the query graph, and to define a corresponding generalized query graph (Sec.~\ref{sec:query_phase}). 
As we will see, running \eapex on the generalized query graph allows us to compute $\Pi^*_{\beps}$ much faster than running  \apex on the original graph.
The rest of this section formalizes our approach.

\subsection{Correlation-Based Preprocessing}
\label{sec:correlation_based_preprocessing}
We start by introducing several key definitions.

\begin{definition}[conforming edge]
Let $e \in \mathcal{E}$,
$\delta >0$ be some threshold
and 
$\ell$ be some two-dimensional line 
(i.e., $\ell: a  x+ b  y +1 = 0$ for some~$a,b$ s.t.~$a^2 + b^2>0$).
We say that an edge $e$ $\delta$-\emph{conforms} with line $\ell$ iff 
$\texttt{dist}_\perp(\ell,\boldsymbol{c}(e)) \leq \delta$.
Here
    \begin{equation}
        \label{eq:distance_between_edge_and_line}
        \texttt{dist}_\perp(\ell,\boldsymbol{c}(e)) := \frac{\vert ac_1(e)+bc_2(e)+1\vert}{\sqrt{a^2+b^2}}.
    \end{equation}
\end{definition}

\begin{definition}[correlated cluster]
Given a graph~$\mathcal{G}=(\mathcal{V},\mathcal{E})$,
    a $(\delta,\ell)$-correlated cluster of \G is a subgraph~$(V,E)$ of \G, s.t.
    (i)~$V \subseteq \mathcal{V}$ and $E = (V \times V) \cap \mathcal{E}$
    and
    (ii)~$\forall e \in E$, we have that $e$ $\delta$-\emph{conforms} with $\ell$.  
\end{definition}

As we will see, all the $(\delta,\ell)$-correlated clusters we will consider will use the same value of $\delta$. Thus, to simplify exposition and with a slight abuse of notation, we will refer to a $(\delta,\ell)$-correlated cluster $\psi$ simply as a cluster and use $\ell(\psi)$ to obtain the line that all edges of $\psi$ conform with.

\begin{definition}[boundary vertices]
    Let $\psi$ be a correlated cluster of $\mathcal{G}=(\mathcal{V},\mathcal{E})$.
    The set of boundary vertices of $\psi$ in \G, denoted as $B(\psi)$, is defined as
    \[
        B(\psi):=\{u\in V_\psi \ \vert \ \exists v\in \mathcal{V} \setminus V_\psi~s.t.~(u,v)\in \mathcal{E} \ \text{or} \ (v,u)\in \mathcal{E} \}.
    \]    
    In other words, a vertex $u\in V_\psi$ is a boundary vertex iff it has at least one adjacent vertex $v\in \mathcal{V} \setminus V_\psi$.    
\end{definition}

In an $(\ell, \delta)$-correlated cluster of \G,
small values of $\delta$ typically imply that the entire Pareto frontier of paths between the cluster's boundary vertices can be approximated by a single solution, given a small $\beps$. 
This allows us to introduce a small number of apex-edge pairs that enable our approximate BOSP search algorithm to avoid expanding vertices within the correlated clusters.

\ignore{
\begin{definition}[super-edge set]
    Let $\psi$ be some correlated cluster
    and
    $b_i,b_j \in B(\psi)$ be two boundary vertices of $\psi$.
    Set $\Pi^*_{b_i\rightarrow b_j}$ to be the Pareto-optimal solution set of paths between $b_i$ and $b_j$.
    Let $\hat{E}:=\{\hat{e}\vert \hat{e}=(b_i,b_j)\}$ be a set of new edges connecting $b_i$ and $b_j$.
    We say that $\hat{E}$ is an $\beps$-super-edges set 
    of $b_i$ and $b_j$
    if    
    $\hat{E}$ $\beps$-dominates $\Pi^*_{b_i\rightarrow b_j}$.
\end{definition}
}

\ignore{
\begin{definition}[Cluster Crossing Cost Approximation]
    \label{def:CCCA}
    Let~$\psi$ be some correlated cluster,
    $b_i,b_j \in B(\psi)$ be two boundary vertices of $\psi$
    and~$\beps \geq 0$ be some approximation factor.
    Let~$\Aset$
    be a set of $\beps$-bounded apex-path pairs
    such that 
    $\forall \AP =\langle \bold{A},\pi \rangle \in \Aset$, the path~$\pi$ connects $b_i$ to $b_j$ while only traversing  vertices of $\psi$.
    We say that $\Aset$ is a Cluster Crossing Cost Approximation (CCCA)
    of $b_i$ and $b_j$ in $\psi$ with approximation factor $\beps$
    if    
    the set of representative paths of all apexes of $\Aset$
    $\beps$-dominates the subset of $\Pi^*(b_i,  b_j)$ that only traverses $\psi$.    
\end{definition}

\YH{****************  New Text **************\\}
Fig. \ref{fig:ccca} visualizes a Cluster Crossing Cost Approximation (CCCA) between two boundary vertices $b_i,b_j$ of cluster $\psi$. We start by computing the Pareto-optimal solution set of $\Pi^*(b_i,b_j)$ when considering paths that traverse \emph{only} $\psi$ (Fig \ref{fig:ccca}(a)). Since the path $\pi_2=\{b_i,u,v,b_j\}$ is \beps-dominating the apex $\bold{A}$ and the entire Pareto frontier (Fig \ref{fig:ccca}(b)), \Aset, the CCCA of the path $b_i \rightarrow b_j$, is defined using a single apex-path pair $\AP=~\langle \bold{A}, \pi_2 \rangle$. Consequently (Fig \ref{fig:ccca}(c)), we introduce a new super-edge $\hat{e}=(b_i,b_j)$ whose cost is $\bold{c}(\pi_2)$ and its corresponding apex-edge is $\ApEd=\langle \bold{A},\hat{e} \rangle$.

In practice, any Approximate BOSP solver inevitably needs to compute CCCAs for different partial parts during a search query. However, in the presence of correlated clusters in the graph, CCCAs can be efficiently precomputed between boundary vertices of these clusters, bypassing the need to handle the complex bi-objective search space inside the clusters.

\YH{**************************************\\}
}
Roughly speaking, we need to identify as many clusters as possible while maximizing their size. Large clusters can help reduce the search space by avoiding inner-cluster vertices. However, large clusters have boundary vertices that are far apart, what may lead to a large number of mutually-undominated paths.
\ignore{Roughly speaking, we need to identify as many clusters as possible, ideally having them be as large as possible. Large clusters have boundary vertices that are far apart and may contain a large number of mutually-undominated paths.
}
The preprocessing phase of our framework addresses two key questions:
\begin{enumerate}
    \item[Q1] How can we efficiently detect and delineate correlated clusters within the graph?
    \item[Q2] How can we efficiently compute an approximation of all mutually undominated paths connecting the boundary vertices of a cluster?
\end{enumerate}

\ignore{
\subsection{Correlation-Based Preprocessing}
\label{sec:correlation_based_preprocessing}
In regions with strong correlation,\footnote{\OS{is this high correlation or strong correlation?}} while there may be a large number of solutions in the Pareto-optimal set, they can all be approximated using a small approximation factor. 
Following this insight, we propose a preprocessing framework to identify and leverage the correlation between objectives. Before doing so, we first introduce several key definitions to formalize the discussion.

\begin{definition}[correlated cluster]
    Let $\ell$ be some two-dimensional line 
    (i.e., $\ell: a  x+ b  y +1 = 0$ for some~$a,b$ s.t.~$\sqrt{a^2 + b^2}>0$)
    and
    $\delta >0$ be some threshold.
    Given a graph~$\mathcal{G}=(\mathcal{V},\mathcal{E})$,
    a $(\delta,\ell)$-correlated cluster of \G is a subgraph~$(V,E)$ s.t.
    (i)~$V \subseteq \mathcal{V}$ and $E = (V \times V) \cap \mathcal{E}$
    and
    (ii)~$\forall e \in E$ with costs $\boldsymbol{c}(e)):=(c_1(e), c_2(e))$ it holds that 
    \begin{equation}
        \label{eq:distance_between_edge_and_line}
        \texttt{dist}_\perp(\ell,\boldsymbol{c}(e)) = \frac{\vert ac_1(e)+bc_2(e)+1\vert}{\sqrt{a^2+b^2}} \leq \delta.
    \end{equation}
    Namely, $(V,E)$ contains all edges whose costs are at most~$\delta$ distance from $\ell$.
\end{definition}

\begin{definition}[boundary vertices]
    Let $\psi=~(V_\psi,E_\psi,\ell_\psi,\delta_\psi)$ be a correlated cluster of $\mathcal{G}=(\mathcal{V},\mathcal{E})$. 
    The set of boundary vertices of $\psi$ in \G, denoted as $B(\psi)$, is defined as
    \[
        B(\psi):=\{u\in V_\psi \ \vert \ \exists v\in \mathcal{V} \setminus V_\psi~s.t.~(u,v)\in E \ \text{or} \ (v,u)\in E \}.
    \]    
    In other words, a vertex $u\in V_\psi$ is a boundary vertex if it has at least one adjacent vertex $v\in \mathcal{V} \setminus V_\psi$ via either an outgoing edge $(u,v)$ or incoming edge $(v,u)$.    
\end{definition}

In an $(\ell, \delta)$-correlated cluster of \G,
small values of $\delta$ typically correspond with the fact that the Pareto frontier of paths between the cluster's boundary vertices can be approximated using a small $\beps$. 
Specifically, we are interested in settings where we can approximate all paths between two boundary vertices using a \emph{single} edge.
The following definition formalizes this concept.

\begin{definition}[super-edge set]
    Let $\psi$ be some correlated cluster
    and
    $b_i,b_j \in B(\psi)$ be two boundary vertices of $\psi$.
    Set $\Pi^*_{b_i\rightarrow b_j}$ to be the Pareto-optimal solution set of paths between $b_i$ and $b_j$.
    Let $\hat{E}:=\{\hat{e}\vert \hat{e}=(b_i,b_j)\}$ be a set of new edges connecting $b_i$ and $b_j$.
    We say that $\hat{E}$ is an $\beps$-super-edges set 
    of $b_i$ and $b_j$
    if    
    $\hat{E}$ $\beps$-dominates $\Pi^*_{b_i\rightarrow b_j}$.
\end{definition}

We propose a preprocessing framework to identify correlated clusters within the graph and generate super-edges between their boundary vertices. The core idea is that, for a given user-defined \beps and $\delta$, the expensive \emph{bi}-objective search can be avoided by leveraging precomputed super-edges, which can be efficiently obtained from \emph{single}-objective search queries in the presence of strong correlation between objectives. 

We aim to identify as many clusters as possible, ideally as large as possible. Large clusters have boundary vertices that are far apart, allowing for the creation of super-edges that represent long optimal trajectories in the graph. Consequently, for a bi-objective search query between two vertices, introducing super-edges is expected to speed up the search compared to exploring the original bi-objective search space.

The proposed preprocessing framework addresses two key questions:
\begin{enumerate}
    \item[Q1] How can we efficiently detect and delineate correlated clusters within the graph?
    \item[Q2] How can we efficiently extract super-edges for each identified correlated cluster?
\end{enumerate}
}

\begin{figure}[t]
    \centering
    \includegraphics[scale=0.70]{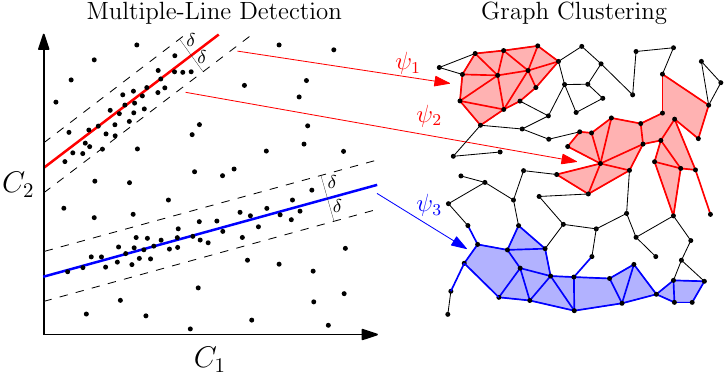}
    \caption{
        Detection and delineation of correlations between the two cost objectives (Q1). 
        \textbf{Left}: a RANSAC-based approach is used to identify distinct linear modes in the 2D objectives space (Alg.~\ref{alg:ransac_lines}). 
        \textbf{Right}: the lines are then used for delineating correlated clusters within the graph. 
    }
    \label{fig:clustering_process}
    \vspace{-3.5mm}
\end{figure}

\subsubsection{Detecting and Clustering  (Q1)}

\begin{algorithm}[tb]
    \caption{Multiple-line detection using RANSAC}
    \label{alg:ransac_lines}
    \textbf{Input}: \begin{enumerate}
        \item[] Graph $\mathcal{G}=(\mathcal{V},\mathcal{E},\boldsymbol{c})$ where $\boldsymbol{c}$ is normalized
        \item[] Allowed distance from the representative line $\delta$
        \item[] Hyperparameters  $n_{\text{hypotheses}}$, $n_\text{min\_inliers}$
    \end{enumerate}
    \textbf{Output}: Set of identified line coefficients $\mathcal{L}$  \\
    \vspace{-1mm}
    \begin{algorithmic}[1] 
        \State $\mathcal{L} \gets \emptyset$ \Comment{Initialize set of detected lines}
        \State $E  \gets \mathcal{E}$ \Comment{Initialize set of all edges}
 \vspace{1mm}
       \While{\texttt{ToContinue$()$}}
            \State $\mathcal{L}_\text{candidates} \gets \emptyset$ \Comment{Initialize candidate lines set}
            \For{$m=1$ to $n_{\text{hypotheses}}$ }
                \State Sample two random edges $e_i, e_j \in E$
                \State $\ell_m \gets \texttt{LineFit}(e_i,e_j)$
                \State $\mathcal{I}_m \gets \emptyset$ \Comment{Initialize inliers set for $\ell_m$}
                \For{\textbf{each} $e_k \in E$} 
                    \If{$\texttt{dist}_\perp(\ell_m,c(e_k)) \leq \delta$}  \Comment{Eq.~\eqref{eq:distance_between_edge_and_line}}
                        \State $\mathcal{I}_m \gets \mathcal{I}_m \cup e_k$ \Comment{Add  $e_k$ as an inlier}
                    \EndIf
                \EndFor
                \If{$|\mathcal{I}_m| > n_\text{min\_inliers}$} 
                    \State $\mathcal{L}_\text{candidates} \gets \mathcal{L}_\text{candidates} \cup \{(\ell_m, |\mathcal{I}_m|)\}$
                \EndIf
            \EndFor
            \If{$\mathcal{L}_\text{candidates} \neq \emptyset$}
                \State Select $\ell^*$ from $\mathcal{L}_\text{candidates}$ with maximal inliers $\mathcal{I}^*$
                \State $\mathcal{L} \gets \mathcal{L} \cup \ell^*$ \Comment{Store best detected line}
                \State $E \gets E \setminus \mathcal{I}^*$ \Comment{Remove inliers from sample set}
            \EndIf
        \EndWhile
        \State \textbf{return} $\mathcal{L}$
    \end{algorithmic}    
\end{algorithm}

Given an input graph, our objective is to detect and delineate correlated clusters whose edges exhibit a strong correlation between objectives. 

To motivate this step, consider a graph~$\mathcal{G}=(\mathcal{V},\mathcal{E})$ containing two perfectly-correlated disjoint subsets $E_1,E_2$ of $\mathcal{E}$. 
Since each set $E_i$ is perfectly correlated, all edge costs of $E_i$ lie on a line $\ell_i$ with parameters~$a_i, b_i$, which may differ. 
For example, time and distance may be perfectly correlated at any constant speed.

Merging $E_1$ and $E_2$ would not only break the perfect correlation in the group, but would also increase the minimal required \beps for approximating the Pareto frontier of paths between boundary vertices using a single solution. 
The same argument holds even when the correlation is not perfect.

To this end, in order to detect distinct linear relationships in the 2-dimensional $(\boldsymbol{C_1},\boldsymbol{C_2})$ space, we utilize RANSAC (Random Sample Consensus) \cite{ransac_1981}, similar to \citet{RANSAC_line_fitting}. RANSAC is an iterative method for estimating model parameters from observed data while distinguishing inliers from outliers. In our case, it is adapted to distinguish between different linear relationships in the objective costs space. 

Our RANSAC-based multi-line detection algorithm is summarized in Alg.~\ref{alg:ransac_lines}.
It takes as input a graph \G with normalized edge costs\footnote{{Each element in $\boldsymbol{C_1}$ and $\boldsymbol{C_2}$ is divided by $\max(\boldsymbol{C_1})$ and $\max(\boldsymbol{C_2})$, respectively.}}, 
the allowed deviation threshold $\delta$, and two hyperparameters: $n_{\text{hypothesis}}$, the number of tested hypotheses before detecting a line and $n_{\text{min\_inliers}}$, the minimum number of inliers required to accept a detected line.

The algorithm iteratively samples two edges and fits a line through their 2D cost coordinates (Lines 6-7), ensuring a positive slope (i.e., a positive correlation). It then counts inliers - edges that $\delta$-conform with the fitted line (Lines 9-11). The fitted line with the most inliers is selected and added to $\mathcal{L}$, the set of detected lines (Lines 15-16). All corresponding inliers are then removed (Line 17), and the process repeats on the remaining data. This iterative procedure continues until termination conditions are met (Line 3), such as too few edges to sample from or reaching the iterations limit.

{The left pane of Fig.~\ref{fig:clustering_process} illustrates an example of two correlation lines identified using the proposed RANSAC method. Each line has a corresponding subset of cost samples that lie within a distance of up to $\delta$.}

After computing $\mathcal{L}$ which captures the distinct linear relationships between objectives' costs, our next step is to delineate the boundaries of the correlated clusters associated with each line. Inspired by \citeauthor{tarjan_connected_components}~(\citeyear{tarjan_connected_components}), we propose a connected-components labeling algorithm for delineating the correlated clusters based on $\mathcal{L}$.

\ignore{
\begin{algorithm}[tb]
    \caption{Correlation clustering via connected components analysis}
    \label{alg:clusters_cca_detection}
    \textbf{Input}: \begin{itemize}
        \item[] Graph $\mathcal{G}=(\mathcal{V},\mathcal{E},\boldsymbol{c})$ where $\boldsymbol{c}$ is normalized
        \item[] Set of detected lines $\mathcal{L}$
        \item[] Approximation factors~\beps
        \item[] Allowed distance from the representative line $\delta$
        \item[] Hyperparameter $n_{\text{min\_vertices}}$
    \end{itemize} 
    \textbf{Output}: Set of identified correlated clusters $\Psi$  \\
    \begin{algorithmic}[1] 
        \State $\Psi \gets \emptyset$ 
        \State $\mathcal{UV} \gets \mathcal{V}$ \Comment{Initialize unvisited vertices set}
        \For{\textbf{each} $v \in \mathcal{UV}$} 
            \State $\psi \gets \psi(V_\psi=\emptyset,E_\psi=\emptyset,\ell_\psi=\emptyset,\delta_\psi=\delta)$ 
            \State $\psi \gets \texttt{DFS}(v,\psi,\boldsymbol{\varepsilon},\delta)$ 
            \If{$|V_\psi| \geq n_{\text{min\_vertices}}$}
                    \State $\Psi \gets \Psi \cup \psi$
            \EndIf
        \EndFor
        \State \textbf{return} $\Psi$                       

        \Function{\texttt{CheckLinearRelations}}{$E,\mathcal{L},\boldsymbol{\varepsilon},\delta$}
            \State $L \gets \emptyset$
            \For{\textbf{each} $e \in E$}
                \For{\textbf{each} $\ell \in \mathcal{L}, \ell \notin L$}
                    \If{$\texttt{dist}_\perp(\ell,(c_1(e),c_2(e))) \leq~\delta$}
                        \State $L \gets L \cup \ell$
                    \EndIf
                \EndFor            
            \EndFor
            \State \textbf{return} $L$
        \EndFunction
        
        \Function{\texttt{DFS}}{$v,\psi,\boldsymbol{\varepsilon},\delta$}
        \State $\mathcal{UV} \gets \mathcal{UV} \setminus v$ \Comment{Mark $v$ as visited}
            \State $E \gets $ all incoming and outgoing edges of $v$
            \State $L \gets \textsc{CheckLinearRelations}(E,\mathcal{L},\boldsymbol{\varepsilon},\delta)$
            \If{$|L|=1$} \Comment{All edges adhere to a single line}
                \If{$\ell_\psi = \emptyset$}
                    \State $\ell_\psi \gets L$ \Comment{Set the cluster's line once}
                \EndIf
                \For{\textbf{each} neighbor $u\in\mathcal{UV}$ of $v$}
                    \State $\psi \gets \texttt{DFS}(u,\psi,\boldsymbol{\varepsilon},\delta)$ \Comment{Recursively~extend~$\psi$}
                \EndFor                            
            \EndIf
            \State $\psi \gets \psi \cup v$
            \State \textbf{return} $\psi$
        \EndFunction
    \end{algorithmic}
\end{algorithm}
}

The algorithm maintains a set of unvisited graph vertices and terminates only when all vertices are visited. In each iteration, an unvisited vertex $u$ is randomly selected as the member of a new correlated cluster. 
Then, the algorithm examines all of $u$'s neighboring edges $E_u$. For each edge~$e \in E_u$, we compute the set of lines $L_u \subset \mathcal{L}$ which $e$ conforms with.
If there exists a line $\ell_u$ that all these edges conform to (i.e., $\bigcap_{e \in E_u} L_u \neq \emptyset$) then a new cluster $\psi_u$ is created and $u$ is added to the cluster's vertex set.
Now, a Depth-First Search (DFS) recursion is invoked for each neighboring vertex to expand the cluster. All neighbors are then removed from the set of unvisited vertices. 
This process is recursively repeated until not all of the current vertex's neighboring edges conform to the cluster's line  $\ell_u$. 
Subsequently, a new vertex is randomly chosen from the unvisited vertices set, and the process repeats until all of the graph's vertices are visited.
The right pane of Fig.~\ref{fig:clustering_process} illustrates an example of delineating three correlated clusters based on $\mathcal{L}$.

\ignore{
Alg. \ref{alg:clusters_cca_detection} presents the pseudo-code for the correlated cluster delineation algorithm. The algorithm maintains a set of unvisited nodes (Line 2) and keeps running until this set is empty (Line 3). In each iteration, an unvisited node is selected and treated as the seed for a potentially new correlated cluster (Line 4). The boundaries of this cluster are then recursively expanded using Depth-First Search (DFS) recursive approach (Line 5). Once the recursion completes, the new correlated cluster is retained only if it contains a minimal number of vertices (Lines 6-7). The DFS function uses an auxiliary function to determine the linear correlations exhibited by the currently considered incoming and outgoing edges (Line 19). This function (Lines 9-15) iterates over all considered edges and checks which of the identified lines in $\mathcal{L}$ they align with. The recursion proceeds only if all edges conform to the same linear relationship (Lines 20-22). 
}
\subsubsection{Internal Cluster Cost Approximation (ICCA) (Q2)}
\label{sec:ICCA}

Let~$\psi$ be a correlated cluster with vertices $\mathcal{V}_\psi$ and edges~$\mathcal{E}_\psi$.
For each boundary pair $b_i,b_j \in B(\psi)$ we run \apex with approximation factor $\beps$ on on the graph~$(\mathcal{V}_\psi,\mathcal{E}_\psi)$.
This yields a set of apex-path pairs $\AP^1_{i,j},\ldots ,\AP^n_{i,j}$.

For each such apex-path pair $\AP^k_{i,j}=\langle \bold{A}^k_{i,j},\pi^k_{i,j} \rangle$, we introduce an edge which we call a \emph{super-edge} $\hat{e}^k_{i,j}$ connecting~$b_i$ to $b_j$ and associate it with two cost vectors $\mathbf{c}_\psi, \mathbf{c'}_\psi$ corresponding to the cost of the representative path and the apex-path pair's apex, respectively.

Specifically, we set 
$\mathbf{c}_\psi(\hat{e}^k_{i,j}):= \mathbf{c}(\pi^k_{i,j})$
and
$\mathbf{c'}_\psi(\hat{e}^k_{i,j}):= \mathbf{c}(\AP^k_{i,j})$.
We then set
${\hat{\mathcal{E}}_{\psi,i,j}}$ to be all the super-edges connecting $b_i$ and $b_j$ 
and 
${\hat{\mathcal{E}}_\psi} : = \bigcup_{b_i,b_j \in B(\psi),i \neq j} {\hat{\mathcal{E}}_{\psi,i,j}}$.
{
\begin{property}
\label{prp:super_edges_aprox}
Let $\psi$ be some correlated cluster with its corresponding subgraph 
$\mathcal{G}_\psi = (\mathcal{V}_\psi, \mathcal{E}_\psi)$, and let $b_i,b_j \in B(\psi)$ be two boundary vertices of $\psi$. Let ${\hat{\mathcal{E}}_{\psi,i,j}}$ be the set of super-edges connecting $b_i$ and $b_j$. Then, ${\hat{\mathcal{E}}_{\psi,i,j}}$ is an \beps-approximation of the Pareto-optimal set of any path between~$b_i$ and $b_j$ in $\G_\psi$.    
\end{property}
This property follows directly from the optimality of \apex (Thm. 1 in \cite{zhang2022pex}).
}

\begin{example}
    \label{ex:ccca}

Consider the cluster $\psi$ depicted in Fig.~\ref{fig:ccca},
which contains three  Pareto-optimal solution $\pi_1, \pi_2$ and~$\pi_3$ between $b_i$ and $b_j$ (Fig.~\ref{fig:ccca}(a)). Here, their costs are $(20,100),(80,30)$ and $(90,28)$, respectively. 
When running \apex with an approximation factor of $\beps=[0.1,0.1]$, we obtain that the corresponding Pareto frontier (Fig.~\ref{fig:ccca}(b)) can be approximated using $\pi_1$ and $\pi_2$. In this example, \apex terminated with two apex-path pairs $\AP^1_{ij}, \AP^2_{ij}$.
The first~$\AP^1_{ij}$, is the trivial apex-path pair with both the representative path ($\pi_1$) and the apex having a cost of $(20,100)$ .
The second~$\AP^2_{ij}$, is the results of merging $\pi_2$ and $\pi_3$, with the representative path being $\pi_2$. Here, the apex cost $(80,28)$ which is the element-wise minimum between the costs of $\pi_2$ and $\pi_3$.
super-edges $\hat{e}^1_{ij}$ and $\hat{e}^2_{ij}$ are added between $b_i$ and $b_j$ (Fig.~\ref{fig:ccca}(c)) with costs derived from $\AP^1_{ij}$ and $\AP^2_{ij}$, respectively.
Specifically, $\mathbf{c}_\psi(\hat{e}^1_{ij})=\mathbf{c}'_\psi(\hat{e}^1_{ij})=(20,100)$ and $\mathbf{c}_\psi(\hat{e}^2_{ij})=(80,30)$ while $\mathbf{c}'_\psi(\hat{e}^2_{ij})=(80,28)$.

\end{example}

\begin{figure}[t]
    \centering
    \includegraphics[scale=0.62]{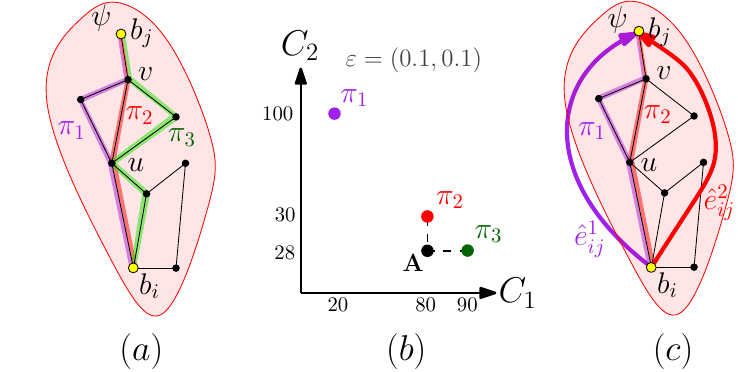} 
    \caption{
        Introducing super-edges
        connecting the boundary vertices $b_i$ and $b_j$ in cluster $\psi$. 
        See Example~\ref{ex:ccca} for details.
        }
    \label{fig:ccca}
    \vspace{-4mm}
\end{figure}

\ignore{
\YH{alternative text:\\}
After obtaining $\Psi$, a set of all correlated clusters in~\G, our next step is to perform \emph{Internal Cluster Cost Approximation (ICCA)}, namely, computing the CCCA for every boundary vertices pair of a cluster.
%
Specifically, we will compute the apex-edge pairs set SE$(\psi)$ which every apex-edge in it is based on all boundary vertices pairwise CCCA's of cluster~$\psi$:
\[
    \{\Aset \ \vert \ b_i,b_j \in B(\psi), i\neq j\}
\]
In other words, SE$(\psi)$ holds the apex-edge pairs representation of all super-edges of cluster $\psi$.
As we will see, adding SE$(\psi)$ to \G,  will allow to remove all non-boundary vertices of clusters.

\YH{origianl text:\\}
After obtaining $\Psi$, a set of all correlated clusters in~\G, our next step is to perform \emph{Internal Cluster Cost Approximation (ICCA)}. Namely, we aim to approximate the Pareto-optimal solution sets of all paths between every pair of boundary vertices. 
Specifically, we will compute the apex-edge set SE$(\psi)$ of all super-edges of $\psi$. This set is based on aggregating all of \Aset of every boundary vertices pair~$(b_i,b_j)$. As we will see, adding edges of SE$(\psi)$ as apex-edge pairs to \G,  will allow to remove all non-boundary vertices of clusters.

To generate SE$(\psi)$ of cluster $\psi$, we propose the following two-step strategy following the lightweight and the more complex approaches defined in  Sec. \ref{sec:approximating_PF_with_single_objective_search} and \ref{sec:apex}, respectively, for approximating $\Pi^*$.

\OS{I don't like the term ``generate'' here. We need to define SE properly and say what we add to it (edges / super edges / edge-apex pairs)}

\begin{enumerate}
    \item[S1] 
    We begin by running two All-Pairs-Shortest-Path Dijkstra single-objectives searches (one for each objective) between all boundary vertices pairs of $\psi$ while only considering vertices of $\psi$. For every pair $b_i,b_j \in B(\psi)$, we follow the approach outlined in Sec.~\ref{sec:approximating_PF_with_single_objective_search}. Namely, we compute the extreme solutions \pitl and \pibr and derive $\varepsilon^{\texttt{tl}}$ and $\varepsilon^{\texttt{br}}$. Given the user-provided approximation factor $\beps=\left(\varepsilon_1,\varepsilon_2\right)$, we check whether $\varepsilon^{\texttt{tl}} \leq \varepsilon_2$ or $\varepsilon^{\texttt{br}} \leq \varepsilon_1$ is satisfied.

    If so, we generate a new super-edge based on the extreme solution whose inequality was satisfied. We create an apex-edge pair representation $\ApEd=\langle \bold{EA},\hat{e} \rangle$ where $\hat{e}$ is the extreme solution's path and $\bold{EA}$ is the apex of the two extreme solutions (see Fig. \ref{fig:extreme}(a)).
    
    \ignore{
    \YH{original:\\}
    We begin by running two All-Pairs-Shortest-Path Dijkstra single-objectives searches (one for each objective) between all boundary vertices pairs of $\psi$. For every pair $b_i,b_j \in B(\psi)$, we follow the approach outlined in Sec.~\ref{sec:approximating_PF_with_single_objective_search}. Namely, we check whether one extreme solution can \beps-approximate the other. 

    If so, 
    we generate a new apex-edge pair $\ApEd=\langle \bold{EA},\hat{e} \rangle$ where $\hat{e}$ is the extreme solution path and $\bold{EA}$ is the apex of the two extreme solutions (see Fig. \ref{fig:extreme}(a)).}
  
    \item[S2] 
    If the previous approach fails, namely, $\vert\Aset\vert>~1$ (see Fig. \ref{fig:extreme}(b)), we follow the approach outlined in Sec.~\ref{sec:apex}. Specifically, we compute $\Pi^*_{\beps}(b_i,b_j)$ for every pair $b_i,b_j \in B(\psi)$ using \apex while considering only vertices of $\psi$. We generate an apex-edge pair based on each apex-path pair in the query's solution set.
    
\end{enumerate}
At the end of this process, we obtain the set of apex-edge pairs generated for every cluster $\psi\in\Psi$.

\textbf{Note.}
One may consider an alternative methods in case step S1 fails. E.g., we can replace step S2 by iteratively reducing $\delta$ and re-delineating the correlated cluster until the required \beps is sufficiently small for a single super-edge to \beps-approximate $\Pi^*(b_i,b_j)$.
We leave the study of such alternative methods for future work.

\ignore{
\textbf{Refining the cluster boundary}: Empirical observations suggest that $\delta$ and~\beps are inherently related: decreasing $\delta$ reduces the required $\varepsilon$ for generating a single super-edge. Consequently, we can iteratively reduce $\delta$ and re-delineate the correlated cluster until the required $\varepsilon$ is sufficiently small for a single super-edge to $\varepsilon$-approximate $\Pi^*_{b_i\rightarrow b_j}$.
}

\ignore{
\YH{Considering the new content above, we can safely remove the following paragraph:\\}
In our work, we assumed that all generated super-edges correctly \beps-approximate the Pareto-optimal set of paths between boundary vertices pairs. This assumption is empirically verified in our experiments. 
}

}
\subsection{Query Phase}
\label{sec:query_phase}

Recall that in the query phase, 
we assume to have the graph~$\mathcal{G}=(\mathcal{V},\mathcal{E})$
and the user-provided  approximation factor \beps
as well the set of correlated clusters~$\Psi$ generated during the preprocessing phase.
Given a query~$\vs, \vt \in \mathcal{V}$
we wish to efficiently compute~$\Pi^*_{\beps}(\vs,\vt)$.

We start by defining a new graph \Gtilde which we call the \emph{query graph}. \Gtilde, which will be implicitly constructed, contains super-edges that avoid having a search algorithm enter correlated clusters that do not include \vs and \vt.

Each edge in the query graph will be associated with two cost functions which will induce a generalized query graph such that running \eapex on this generalized query graph will allow us to efficiently compute $\Pi^*_{\beps}$.

Unfortunately, the branching factor of vertices in \Gtilde may turn out to be quite large. Thus, we continue to describe how to use standard algorithmic practices to deal with it.

\subsubsection{Query Graph}
It will be convenient to assume that every vertex $v \in \mathcal{V}$ belongs to a correlated cluster.
If $v$ was not assigned a cluster in the preprocessing phase, we will assign it with a \emph{trivial cluster} $(\{ v\}, \emptyset, \ell_v, \delta_v)$ containing only $v$ and no edges\footnote{In a trivial cluster
$(\{ v\}, \emptyset, \ell_v, \delta_v)$, the parameters $\ell_v \text{ and } \delta_v$ are meaningless  and any value can be used. {Moreover, a trivial cluster has no super-edges as, it only contains one vertex, which we consider as a boundary vertex.}} and add the cluster to $\Psi$.

Let $\psi_s$ and $\psi_t$ denote the correlated clusters containing~\vs and \vt, respectively. 
We define the query graph~$\tilde{\mathcal{G}} = (\tilde{\mathcal{V}}, \tilde{\mathcal{E}})$ as follows:
\begin{equation*}
\label{eq:V_tilde}
        \tilde{\mathcal{V}} = 
            \underbrace{\left( V_{\psi_s}, \cup \ V_{\psi_t} \right)}_{(\diamondsuit)}
            \cup
            \underbrace{
            \left(
            \cup_{\psi\in \Psi \setminus \{\psi_s \psi_t \}} B(\psi)
            \right)}_{(	\heartsuit)},
\end{equation*}
\vspace{-2.5mm}
\begin{equation*}
\label{eq:E_tilde}
    \tilde{\mathcal{E}} = 
        \underbrace{\left( 
            \mathcal{E} \setminus 
            \{E_\psi~\vert~\psi \in \Psi \setminus \{\psi_s, \psi_t \} \}
        \right)}_{(\clubsuit)}
        \cup
        \underbrace{\left(
          \cup_{\psi\in \Psi \setminus \{\psi_s, \psi_t \}} {\hat{\mathcal{E}}_\psi}
        \right)}_{(\spadesuit)}.
\end{equation*}

Namely, 
the vertices $\tilde{\mathcal{V}}$ 
include 
$(\diamondsuit)$~all vertices of clusters~$\psi_s$  and $\psi_t$
and
$(	\heartsuit)$~all boundary vertices of the other clusters. 
The edges $\tilde{\mathcal{E}}$ 
include 
$(\clubsuit)$~all edges between clusters as well as all edges of clusters $\psi_s$  and $\psi_t$
and
$(\spadesuit)$~all the super-edges of the clusters that are not $\psi_s$  and $\psi_t$.

We are now ready to define the \emph{generalized query graph} 
$(\tilde{\mathcal{V}}, 
    \tilde{\mathcal{E}},
    \tilde{\mathbf{c}}, \tilde{\mathbf{c}}')$
corresponding to query graph.
The only thing we need to describe are the edge costs functions $\tilde{\mathbf{c}}$ and $\tilde{\mathbf{c}}'$.
For each \emph{original edge} $e \in \mathcal{E}$ we set
$\tilde{\mathbf{c}}(e) := \mathbf{c}(e)$
and
$\tilde{\mathbf{c}}'(e) := \mathbf{c}(e)$. 
For each \emph{super-edge} $\hat{e} \in \hat{\mathcal{E}}_\psi$ of cluster $\psi$ we set
$\tilde{\mathbf{c}}(\hat{e}) := \mathbf{c}_\psi(\hat{e})$
and
$\tilde{\mathbf{c}}'(\hat{e}) := \mathbf{c}'_\psi(\hat{e})$. 

In the following example, we detail the part of the query graph corresponding to the correlated cluster depicted in Fig.~\ref{fig:ccca} and the paths described in Example~\ref{ex:ccca}.

\begin{example}
    Consider the cluster $\psi$ detailed in Example~\ref{ex:ccca} and assume that neither \vs nor \vt are in $V_\psi$.
    Let us consider the contribution of $\psi$ to the query graph $\Gtilde=(\mathcal{\tilde{V}},\mathcal{\tilde{E}})$.
    First, all boundary vertices of $\psi$ such as $b_i$ and $b_j$ will be added to~$\mathcal{\tilde{V}}$ while internal vertices such as $u,v \in \mathcal{V}_\psi$ will not.
    Second, the super-edges of $\psi$ such as $\{\hat{e}^1_{ij},\hat{e}^2_{ij}\}$ are added to~$\mathcal{\tilde{E}}$ as well as edges connecting boundary vertices of $\psi$ to vertices not in $\psi$.
    On the other hand, internal cluster's edges, as $(u,v)$, are removed. 
    Now, recall that before we can run \eapex, we construct the generalized query graph $(\mathcal{\tilde{V}},\mathcal{\tilde{E}},\tilde{\mathbf{c}},\tilde{\mathbf{c}}')$. 
    For super-edges like $\hat{e}^2_{ij} \in \hat{\mathcal{E}}_\psi$, the costs will be $\tilde{\mathbf{c}}(\hat{e}^2_{ij})=\mathbf{c}_\psi(\hat{e}^2_{ij})=(80,30)$ and $\tilde{\mathbf{c}}'(\hat{e}^2_{ij})=\mathbf{c}'_\psi(\hat{e}^2_{ij})=(80,28)$.
    \ignore{
    Any regular edge $e \in \tilde{\mathcal{E}}$ will have a common cost of $\tilde{\mathbf{c}}(e)=\tilde{\mathbf{c}}'(e)=\mathbf{c}(e)$, whereas for super-edges like $\hat{e}^2_{ij} \in \hat{\mathcal{E}}_\psi$, the costs will be $\tilde{\mathbf{c}}(\hat{e}^2_{ij})=\mathbf{c}_\psi(\hat{e}^2_{ij})=(80,30)$ and $\tilde{\mathbf{c}}'(\hat{e}^2_{ij})=\mathbf{c}'_\psi(\hat{e}^2_{ij})=(80,28)$.}
\end{example}
\ignore{
\begin{lemma}
\label{eq:lemma:cluster_eps_approx} 
    For any correlated cluster $\psi$ and any two boundary vertices $b_i,b_j \in B(\psi)$, the Pareto-optimal solution set $\Pi^*(b_i,b_j)$ in $(V_\psi, E_\psi, \mathbf{c}_\psi)$ is an \beps-approximation set of the Pareto-optimal solution set $\Pi^*(b_i,b_j)$ in $(V_\psi, E_\psi, \mathbf{c'}_\psi)$.
\end{lemma}
\begin{proof}
    By the construction of $\bold{c}_\psi$ and $\bold{c'}_\psi$ (for both regular edges and super-edges) it holds that $\forall e \in E_\psi$:
    $\bold{c'}_\psi(e) \preceq \bold{c}_\psi(e)$ as well as $\bold{c}_\psi(e) \preceq_{\beps} \bold{c'}_\psi(e)$. From the latter and from (\ref{eq:edge_cost_constr}) in Lemma \ref{lemma:apx-bound} we derive that $\beps \geq \underset{e\in \mathcal{E_\psi}}{\max} \left(\mathbf{c}_\psi(e) / \mathbf{c'}_\psi(e) - 1\right)$. This satisfies the conditions of Lemma \ref{lemma:apx-bound}. Thus, $\Pi^*(b_i,b_j)$ in $(V_\psi, E_\psi, \mathbf{c}_\psi)$ is an \beps-approximation of $\Pi^*(b_i,b_j)$ in $(V_\psi, E_\psi, \mathbf{c'}_\psi)$.
\end{proof}
}

The following theorem summarizes the correctness of the generalized query graph construction.

\begin{theorem}
\label{thm:query}
Let \vs and \vt be the start and target vertices, respectively, of a search query.
Running \eapex on the generalized query graph \Gtilde yields an \beps-approximation of~$\Pi^*$ in~\G.
\end{theorem}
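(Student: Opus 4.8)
The plan is to prove the theorem in three stages, handling the query graph's two cost functions carefully so that only a single factor of $(1+\beps)$ accumulates. Write $\tilde{\mathbf{c}}$ and $\tilde{\mathbf{c}}'$ for the two cost functions of the generalized query graph. First I would check that the pair $(\tilde{\mathbf{c}}, \tilde{\mathbf{c}}')$ meets the hypotheses of Theorem~\ref{thm:gapex}: for an original edge $\tilde{\mathbf{c}} = \tilde{\mathbf{c}}'$, and for a super-edge $\hat{e}^k_{i,j}$ we have $\tilde{\mathbf{c}}'(\hat{e}^k_{i,j}) = \mathbf{c}(\AP^k_{i,j}) \preceq \mathbf{c}(\pi^k_{i,j}) = \tilde{\mathbf{c}}(\hat{e}^k_{i,j})$, since an apex is the component-wise minimum of the costs of the paths it represents, and $\tilde{\mathbf{c}}(\hat{e}^k_{i,j}) \leq (1+\beps)\,\tilde{\mathbf{c}}'(\hat{e}^k_{i,j})$, since the pair $\AP^k_{i,j}$ returned by \apex run internally with factor $\beps$ is $\beps$-bounded; hence $\max_e\!\big(\tilde{\mathbf{c}}(e)/\tilde{\mathbf{c}}'(e) - 1\big) \leq \beps$ component-wise. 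Crucially, the argument behind Theorem~\ref{thm:gapex} (its apex side behaves exactly like \apex run on $\tilde{\mathbf{c}}'$, and representative paths are $\beps$-bounded by their apexes) yields the stronger ``mixed'' guarantee I actually need: for every path $q$ in $\Pi^*_{\tilde{\mathbf{c}}'}(\vs,\vt)$, the Pareto-optimal set of $\vs$--$\vt$ paths in $\Gtilde$ measured with the \emph{apex} costs $\tilde{\mathbf{c}}'$, the output $\Pi^*_{\eapex}$ contains a (super-edge) path $\sigma$ with $\tilde{\mathbf{c}}(\sigma) \leq (1+\beps)\,\tilde{\mathbf{c}}'(q)$.

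Stage one goes from $\Pi^*$ in $\G$ down to apex costs in $\Gtilde$: given any $\pi \in \Pi^*$, I build a path $\tilde{\pi}$ in $\Gtilde$ with $\tilde{\mathbf{c}}'(\tilde{\pi}) \preceq \mathbf{c}(\pi)$. Decompose $\pi$ into maximal sub-paths; since the clusters partition $\mathcal{V}$ and $\vs,\vt \notin \psi$ for $\psi \notin \{\psi_s,\psi_t\}$, each sub-path $s$ lying inside such a cluster $\psi$ necessarily starts and ends at boundary vertices $b_i,b_j \in B(\psi)$ and is a path of $\G_\psi$ (because $E_\psi = (V_\psi \times V_\psi)\cap\mathcal{E}$), while every other edge of $\pi$ (inter-cluster edges and edges of $\psi_s,\psi_t$) survives in $\tilde{\mathcal{E}}$ with $\tilde{\mathbf{c}}' = \mathbf{c}$. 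As $\G_\psi$ has non-negative costs, $s$ is weakly dominated by some Pareto-optimal path $p$ of $\G_\psi$ from $b_i$ to $b_j$, and by the optimality of \apex (its solution set contains an apex-path pair whose apex is a valid lower bound weakly dominating $\mathbf{c}(p)$) there is a super-edge $\hat{e}^k_{i,j} \in \hat{\mathcal{E}}_\psi$ with $\tilde{\mathbf{c}}'(\hat{e}^k_{i,j}) = \mathbf{c}(\AP^k_{i,j}) \preceq \mathbf{c}(p) \preceq \mathbf{c}(s)$. Replacing every internal sub-path by such a super-edge and summing over $\pi$ gives the desired $\tilde{\pi}$ with $\tilde{\mathbf{c}}'(\tilde{\pi}) \preceq \mathbf{c}(\pi)$.

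Stages two and three then close the loop. Since $\Pi^*_{\tilde{\mathbf{c}}'}(\vs,\vt)$ is a complete mutually-undominated set, some $q \in \Pi^*_{\tilde{\mathbf{c}}'}(\vs,\vt)$ satisfies $\tilde{\mathbf{c}}'(q) \preceq \tilde{\mathbf{c}}'(\tilde{\pi}) \preceq \mathbf{c}(\pi)$; the mixed guarantee applied to $q$ yields $\sigma \in \Pi^*_{\eapex}$ with $\tilde{\mathbf{c}}(\sigma) \leq (1+\beps)\,\tilde{\mathbf{c}}'(q) \leq (1+\beps)\,\mathbf{c}(\pi)$. Finally, replacing each super-edge $\hat{e}^k_{i,j}$ in $\sigma$ by its representative path $\pi^k_{i,j}$, a genuine path of $\G_\psi \subseteq \G$ with $\mathbf{c}(\pi^k_{i,j}) = \tilde{\mathbf{c}}(\hat{e}^k_{i,j})$, turns $\sigma$ into a genuine $\vs$--$\vt$ path $\hat{\sigma}$ of $\G$ with $\mathbf{c}(\hat{\sigma}) = \tilde{\mathbf{c}}(\sigma) \leq (1+\beps)\,\mathbf{c}(\pi)$, i.e.\ $\mathbf{c}(\hat{\sigma}) \preceq_{\beps} \mathbf{c}(\pi)$. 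As $\pi \in \Pi^*$ was arbitrary, the (super-edge-expanded) output of \eapex is an $\beps$-approximation of $\Pi^*$ in $\G$.

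The main obstacle is exactly the bookkeeping that keeps the factor at $(1+\beps)$ rather than $(1+\beps)^2$: naively composing ``$\Pi^*_{\eapex} \preceq_{\beps} \Pi^*_{\tilde{\mathbf{c}}}$'' (the clean form of Theorem~\ref{thm:gapex}) with ``$\Pi^*_{\tilde{\mathbf{c}}} \preceq_{\beps} \Pi^*$'' (all that Property~\ref{prp:super_edges_aprox} gives using representative costs) would lose a second factor. The fix is to route everything through the apex cost function $\tilde{\mathbf{c}}'$: the apexes of the internal \apex runs, hence the $\tilde{\mathbf{c}}'$-costs of super-edges, are \emph{exact} lower bounds on internal path costs, so the only inflation is the single $(1+\beps)$ that \eapex's $\beps$-boundedness already accounts for. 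A secondary care point is that Stage one needs slightly more than the literal statement of Property~\ref{prp:super_edges_aprox}, namely that a single apex-path pair in \apex's internal solution set simultaneously lower-bounds a given internal Pareto-optimal path via its apex and has a representative within $\beps$ of that apex; this follows from \apex's optimality proof in \cite{zhang2022pex} and should be stated explicitly.
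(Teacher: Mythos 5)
Your proof is correct, but it takes a genuinely different route from the paper's. The paper proves Theorem~\ref{thm:query} by re-walking the entire \apex correctness induction (a modified Lemma~4 of \cite{zhang2022pex}) directly on \Gtilde: it restricts the induction to prefixes of a $\G$-solution ending outside cluster interiors, and adds a special induction instance in which the prefix jumps across a cluster from boundary vertex to boundary vertex via a super-edge whose apex weakly dominates the $\mathbf{g}$-value of the corresponding sub-path in \G. You instead factor the argument into a graph-surgery step and a black-box invocation: map any $\pi\in\Pi^*$ to a path $\tilde{\pi}$ in \Gtilde with $\tilde{\mathbf{c}}'(\tilde{\pi})\preceq\mathbf{c}(\pi)$ by substituting each maximal cluster-interior sub-path with a super-edge whose apex lower-bounds it, then invoke a mixed-cost form of Theorem~\ref{thm:gapex} ($\tilde{\mathbf{c}}$ on the output, $\tilde{\mathbf{c}}'$ on the target path), then expand super-edges back to representative paths in \G. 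Both arguments ultimately rest on the same two facts --- super-edge apexes are \emph{exact} lower bounds on internal Pareto-optimal path costs, and representative paths stay within a single factor $(1+\beps)$ of accumulated apexes --- and both need the strengthened reading of Property~\ref{prp:super_edges_aprox} that you flag (the paper's modified Lemma~4 uses exactly this ``apex weakly dominates'' form, so your explicit acknowledgment matches what the paper implicitly relies on). Your decomposition is more modular and makes the avoidance of a $(1+\beps)^2$ factor transparent, at the price of having to restate Theorem~\ref{thm:gapex} in its mixed-cost form; the paper avoids that restatement but must reproduce the full case analysis of the \apex proof. One small simplification available to you: the mixed guarantee holds for \emph{any} $\vs$--$\vt$ path in \Gtilde, not only $\tilde{\mathbf{c}}'$-Pareto-optimal ones, so Stage~two's detour through $\Pi^*_{\tilde{\mathbf{c}}'}(\vs,\vt)$ can be dropped and the guarantee applied directly to $\tilde{\pi}$.
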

{The proof builds on Thm. \ref{thm:gapex} and Property \ref{prp:super_edges_aprox}. The full proof is provided in Appendix \ref{apndx_thm_query}.}

\subsubsection{Lazy Edge Expansion}
\label{sec:lazy_edge_expansion}
Recall that within a correlated cluster $\psi$, we connect all pairs of boundary vertices $b_i, b_j \in~ B(\psi)$ by one or more super-edges of the set $\hat{\mathcal{E}}_{\psi,i,j}$. Namely, the number of super-edges introduced is at least quadratic in the number of boundary vertices. Thus, the branching factor of \Gtilde may dramatically increase.
Unfortunately, large branching factors are known to dramatically slow down search-based algorithms (even single-objective ones)~\cite{korf1985depth, edelkamp1998branching}.

To this end, we endow our search algorithm with a lazy edge-expansion strategy~\cite{yoshizumi2000partial} for the super-edges.
Specifically, we maintain two edge lists for each vertex: regular edges and super-edges, both ordered lexicographically from low to high using the edge's $\bold{f}$-value.
When expanding a node, all successors derived from regular edges are pushed to \open as before. For super-edges, however, we follow a \emph{partial expansion} approach: we iterate over super-edges in increasing lexicographic $\bold{f}$-value order and stop as soon as the first successor (originating from a super-edge) is inserted into \open.
When a boundary vertex is popped from \open, we expand the \emph{next}-best super-edge of its predecessor. I.e., the next unprocessed super-edge from the super-edges list of the predecessor.
We refer to the adaptation of \eapex as described above as Partial Expansion \eapex (\pegapex).

\ignore{
Let $\psi_s$ and $\psi_t$ denote the correlated clusters containing~\vs and \vt, respectively. 
We define the query graph~$\tilde{\mathcal{G}} = (\tilde{\mathcal{V}}, \tilde{\mathcal{E}})$ as follows:
\begin{equation*}
\label{eq:V_tilde}
        \tilde{\mathcal{V}} = 
            \underbrace{\left( V_{\psi_s}, \cup \ V_{\psi_t} \right)}_{(\diamondsuit)}
            \cup
            \underbrace{
            \left(
            \cup_{\psi\in \Psi \setminus \{\psi_s \psi_t \}} B(\psi)
            \right)}_{(	\heartsuit)},
\end{equation*}
\begin{equation*}
\label{eq:E_tilde}
    \tilde{\mathcal{E}} = 
        \underbrace{\left( 
            \mathcal{E} \setminus 
            \{E_\psi~\vert~\psi \in \Psi \setminus \{\psi_s, \psi_t \} \}
        \right)}_{(\clubsuit)}
        \cup
        \underbrace{\left(
          \cup_{\psi\in \Psi \setminus \{\psi_s, \psi_t \}} \text{SE}(\psi)
        \right)}_{(\spadesuit)}.
\end{equation*}

Namely, 
the vertices $\tilde{\mathcal{V}}$ 
include 
$(\diamondsuit)$~all vertices of clusters~$\psi_s$  and $\psi_t$
and
$(	\heartsuit)$~all boundary vertices of the other clusters. 
The apex-edge pairs set $\tilde{\mathcal{E}}$ 
include 
$(\clubsuit)$~all {apex-edge pairs set} between clusters as well as all {apex-edge pairs set} of clusters $\psi_s$  and $\psi_t$
and
$(\spadesuit)$~all the {apex-edge pairs set} of the clusters that are not $\psi_s$  and $\psi_t$.
\YH{More accurate:all the {apex-edge pairs sets SE$(\psi)$} of clusters that are not $\psi_s$  and $\psi_t$.}

\pagebreak
We now prove that running \apex over the query graph~\Gtilde generates a correct approximation of the Pareto-optimal solution set over the original graph \G.

\begin{theorem}
\label{thm:query}
Let \vs and \vt be the start and target vertices, respectively, of a search query.
Running \eapex on the generalized graph~\Gtilde, yields an \beps-approximation of $\Pi^*$ in \G.
\end{theorem}

Similar to Sec.~\ref{sec:apex}, the proof here is not complicated but we ommit details due to lack of space. However, it relies on Thm.~\ref{thm:gapex} and the following Lemma:

\OS{arguments may need to be massaged}
\begin{lemma}
    For any correlated cluster $\psi$ and any two boundary vertices $b_i,b_j \in B(\psi)$, the Pareto-optimal solution set of $(V_\psi, E_\psi, \mathbf{c})$ is an \beps-approximation set of the Pareto-optimal solution set of $(V_\psi, E_\psi, \mathbf{c'}_\psi)$.
\end{lemma}

\ignore{
\begin{proof}
By definition, \Gtilde is derived from \G by replacing a subset of correlated clusters with their boundary vertices and super-edges apex-edges pairs. When \apex extends an apex-path pair \AP, we need to distinguish between two cases:
\begin{enumerate}
    \item Extending with a regular edge $e$: By construction, $e \in \mathcal{V}$, since all regular edges of \Gtilde are the same genuine edges of $G$. Thus, in this case, \apex on \G and \Gtilde will behave the same.
    \item Extending with a super-edge $\hat{e}=(b_i,b_j)$: By construction, \apex will consider all the apex-edges pairs derived from \Aset. By definition, \Aset is an \beps-approximation of the Pareto-optimal solution set $\Pi^*(b_i,b_j)$ in \G of paths traversing only $\psi$. Partial paths that go outside of $\psi$ are either comprised of regular edges, which are the same for \G and \Gtilde, or comprised of other clusters' super-edges, which are \beps-dominated by definition.    
\end{enumerate}
For either type of edge, \apex extends \AP while preserving its \beps-bounding property (Lemma \ref{lemma:eps-bound}). Consequently, every apex-path pair in \open is \beps-bounded as \apex operates on~\G. Thus, running \apex on \Gtilde yields an \beps-approximation of $\Pi^*(\vs,\vt)$ in \G.
\end{proof}
}

}

\ignore{
\OS{Original text\\}
During the evaluation of the preprocessing phase, we observed that in large correlated clusters there may be significantly more super-edges than regular edges for the same cluster, i.e.,  $|E_\psi| \ll |\text{SE}(\psi)|$. This substantial increase in the branching factor poses a challenge for any best-first search algorithm, as the increased branching factor will impair the running time of the search algorithm due to the costly heap operations involved in maintaining the \open list.

To "address" ``this'' issue, we adopt a lazy edge expansion strategy \cite{yoshizumi2000partial} for mitigating the impact of the increased branching factor. Although we use \apex for the empirical evaluation of our method, the following strategy is general in the sense that it is applicable to any best-first search algorithm: 

\begin{itemize}
    \item Two lists of edges are maintained for each vertex: regular edges and super-edges. The super-edges list is sorted in ascending lexicographic order based on the $\bold{f}$-value (in ascending order) of the successor nodes reached via the super-edge.
    \item When expanding a node, all successors derived from regular edges are pushed to \open as before. For super-edges, however, we follow a \emph{partial expansion} approach: we iterate over super-edges in increasing $\bold{f}$-value lexicographic order and stop as soon as the first successor (originating from a super-edge) is inserted into \open. 
    \item When a boundary vertex is popped from \open, we backtrack to its predecessor and expand the \emph{next}-best super-edge (i.e., the next unprocessed super-edge in ascending lexicographic $\bold{f}$-value order). This process continues until another successor from a super-edge is added to \open.
\end{itemize}

This approach enables a controlled and selective expansion of super-edges, preventing state explosion caused by an increased branching factor. We refer to this algorithm, which integrates lazy partial expansion into \apex, as \peapex (Partial Expansion \apex).
}

\section{Evaluation}
\label{sec:eval}
We implemented our algorithms using a combination of Python and C++\footnote{\url{https://github.com/CRL-Technion/BOSP-PE-GApex}.}. We ran all experiments on an HP ProBook 440 G8 Notebook with 16GB of memory. The \apex and \pegapex algorithms were implemented based on \apex original C++ implementation\footnote{\url{https://github.com/HanZhang39/A-pex}.}. 
All experiments were executed on the \texttt{NY}, \texttt{COL}, \texttt{NW} and \texttt{CAL} DIMACS instances, which contain between 250K and 1.9M vertices. 

As an optimization step for the ICCA process (Q2, Sec.~\ref{sec:correlation_based_preprocessing}), we employed a simple and efficient method for approximating $\Pi^*(b_i,b_j)$ without calling \apex (as described in Sec. \ref{sec:ICCA}). Specifically, we ran two single-objective Dijkstra shortest-path queries, one for each objective, for the query $b_i \rightarrow b_j$ considering only the subgraph of cluster $\psi$. We then checked if the user-provided approximation factors is sufficient for \beps-dominating $\Pi^*(b_i,b_j)$ with a single solution. If so, this solution was used to obtain $\hat{\mathcal{E}}_{\psi,i,j}$. This straightforward step is usually one to two orders of magnitude faster than running \apex directly.

\subsection{Correlation-Based Preprocessing on DIMACS}
\label{subsec:dimacs}
\begin{table} 
    \centering   
    \scriptsize
    \begin{tabular}{|c|c|c|c|c|c|c|c|c|}
        \hline
        \rule{0pt}{2.0ex} Instance & $|\mathcal{V}|$ & $|\tilde{\mathcal{V}}|$ & $|\mathcal{E}|$ & $|\tilde{\mathcal{E}}|$ & $b(\G)$ & $b(\Gtilde)$ & \shortstack[b]{Time \\ {[sec]}} & \shortstack[b]{Space \\ {[GB]}} \rule[-1.8ex]{0pt}{0pt} \\
        \hline
        {\footnotesize\texttt{NY}} & $26$ & $4.3$ & $73$ & $110$ & $2.8$ & $25.7$ & $39$ & $0.5$ \\
        \hline
        {\footnotesize\texttt{COL}} & $43$ & $8.2$ & $106$ & $93$ & $2.4$ & $11.2$ & $47$ & $0.5$ \\
        \hline
        {\footnotesize\texttt{NW}} & $121$ & $18$ & $284$ & $227$ & $2.4$ & $12.7$ & $127$ & $0.9$ \\
        \hline
        {\footnotesize\texttt{CAL}} & $189$ & $25$ & $466$ & $360$ & $2.5$ & $14.3$ & $208$ & $1.7$ \\
        \hline        
    \end{tabular}    
    \caption{Comparison of the size of \G (original graph) and \Gtilde (query graph) for~$\boldsymbol{\varepsilon} = [0.01, 0.01]$, including the number of vertices and edges (in tens of thousands), average branching factor $b$, and preprocessing time and space usage.}    
    \label{tbl:g_vs_gtilde}
    \vspace{-4.5mm}
\end{table}
Recall that the DIMACS dataset is a standard benchmark in the field and is supposed to simulate real-world data. Thus, we start by reporting how our framework behaves on this dataset.
Tbl.~\ref{tbl:g_vs_gtilde} compares the original graph \G and the query graph \Gtilde in terms of size (vertices, edges, average branching factor), as well as preprocessing time and space required for storing the optimal paths abstracted by super-edges.
As expected, \Gtilde consistently has dramatically fewer vertices but a higher branching factor when compared to \G. However, these two factors counterbalance each other, and both graphs have comparable number of edges.

We continue to visualize the objective correlation and how it manifests in our framework for the \texttt{NY} and \texttt{CAL} DIMACS instances.
Plotting the edge costs as points in the bi-objective space (Fig.~\ref{fig:combined_figures_NY_CAL_preprocessing_demo}a,\ref{fig:combined_figures_NY_CAL_preprocessing_demo}c), we can see that the entire bi-objective space can be decomposed into four disjoint, highly-correlated linear relationships---a pattern observed consistently across the DIMACS dataset.
These four modes of correlation were detected by the RANSAC method (Alg.~\ref{alg:ransac_lines}).
Importantly, each mode needs to be further subdivided into correlated clusters which are depicted in Fig.~\ref{fig:combined_figures_NY_CAL_preprocessing_demo}b,\ref{fig:combined_figures_NY_CAL_preprocessing_demo}d.

\subsection{Lazy Edge Expansion Ablation Study}
As demonstrated in Sec.~\ref{subsec:dimacs}, \Gtilde's branching factor is much larger than \G's which is why we suggested a lazy edge-expansion strategy (Sec.~\ref{sec:query_phase}). 
To this end, we compare (Fig.~\ref{fig:ablation_study}) 
the query execution times of \eapex and \pegapex on \Gtilde, across various DIMACS instances for an approximation factor of $\beps=[0.01,0.01]$. \pegapex outperforms \eapex for almost all instances with the speed up in query times reaching above $5\times$. 

\begin{figure*}
    \centering
    \begin{subfigure}[t]{0.49\textwidth}  
        \centering
        \includegraphics[width=1\textwidth]{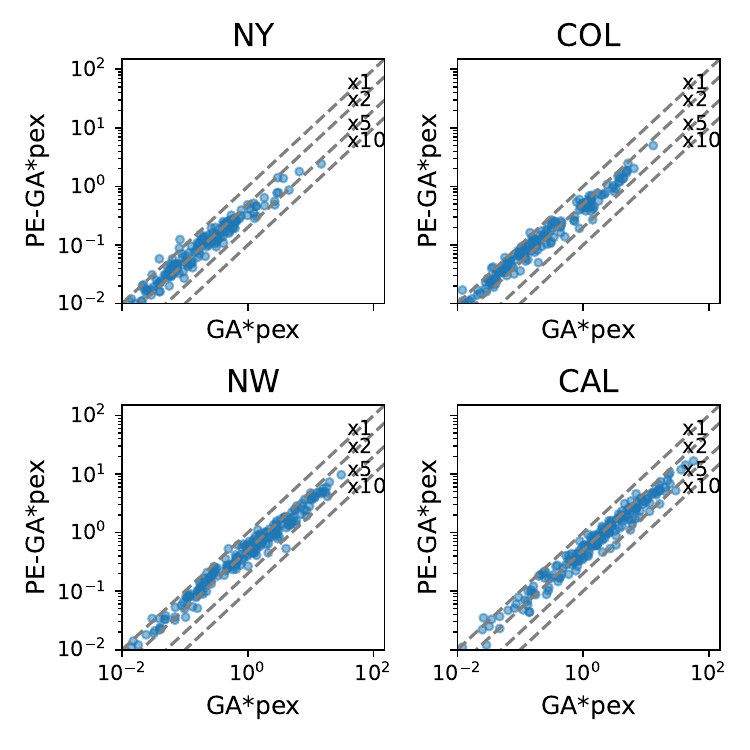}
        \vspace{-7.5mm}
        \caption{}         
        \label{fig:ablation_study}
    \end{subfigure}
    \hfill
    \begin{subfigure}[t]{0.49\textwidth}
        \centering
        \includegraphics[width=1\textwidth]{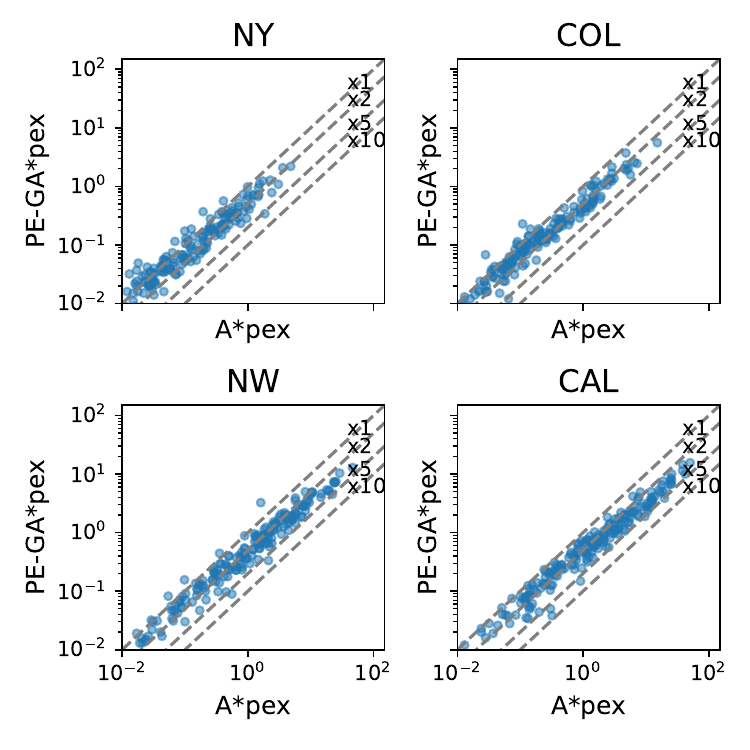}
        \vspace{-7.5mm}
        \caption{}    
        \label{fig:running_times_comparison}
    \end{subfigure}
    \vspace{-2.5mm}
    \caption{Running times (in seconds) on different queries and DIMACS instances for $\boldsymbol{\varepsilon} = [0.01, 0.01]$.
    (a)~Ablation study---comparing \pegapex with \eapex.
    (b)~Approach evaluation---comparing \pegapex with \apex.
    }
    \vspace{-5mm}
    \label{fig:combined_figures}
\end{figure*}

\begin{figure}[H]
    \centering
    \begin{subfigure}{0.23\textwidth}  
        \centering
        \includegraphics[width=1\textwidth]{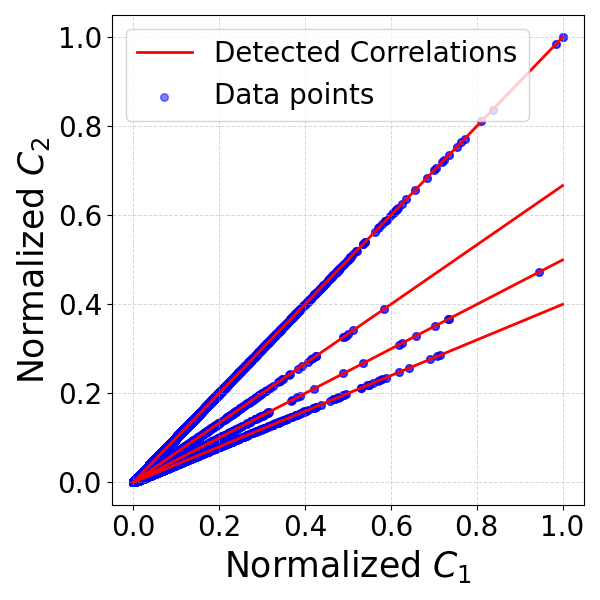}
        \caption{}         
        \label{fig:CAL_RANSAC_demo}
    \end{subfigure}
    \hfill
    \begin{subfigure}{0.23\textwidth}
        \centering
        \includegraphics[width=1\textwidth]{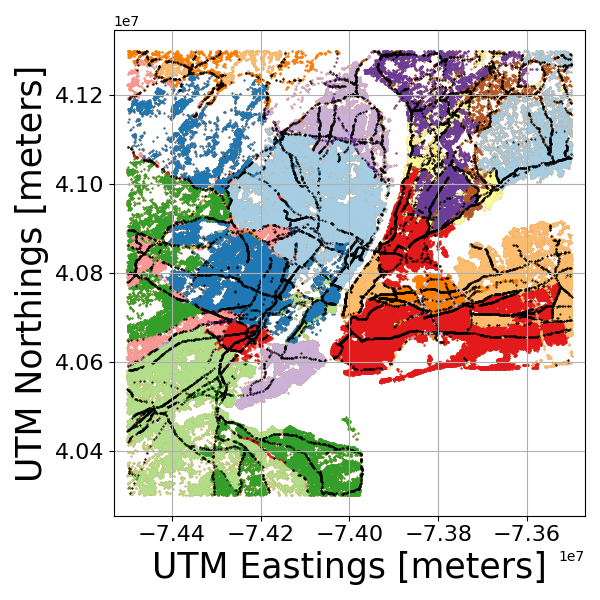}
        \caption{}         
        \label{fig:CAL_map_clusters_demo}
    \end{subfigure}
    \begin{subfigure}{0.23\textwidth}  
        \centering
        \includegraphics[width=1\textwidth]{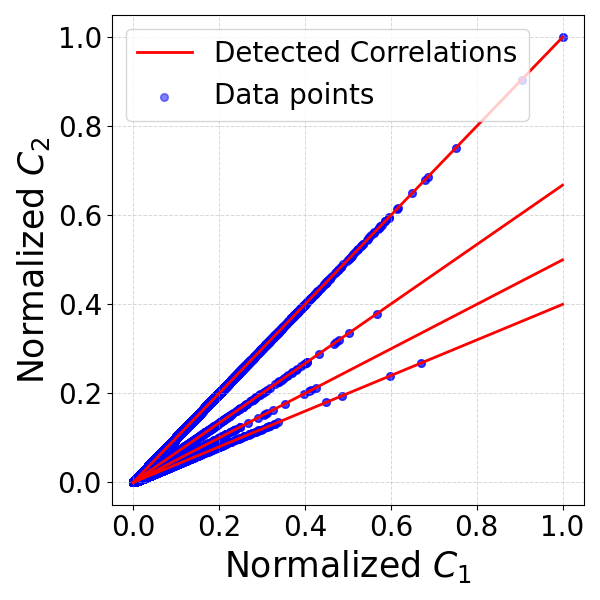}
        \caption{}         
        \label{fig:CAL_RANSAC_demo}
    \end{subfigure}
    \hfill
    \begin{subfigure}{0.23\textwidth}
        \centering
        \includegraphics[width=1\textwidth]{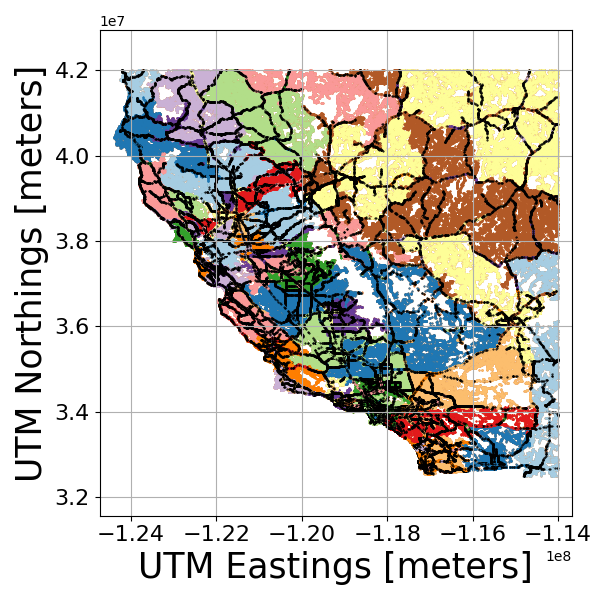}
        \caption{}         
        \label{fig:CAL_map_clusters_demo}
    \end{subfigure}
    
    \caption{\textbf{(a)+(c)}
            Edge cost plotted on the 2D objective costs (blue dots) and linear correlations computed by Alg.1 (red lines) for the \texttt{NY} and \texttt{CAL} instances, respectively.
            \textbf{(b)+(d)} Geo-spatial display of \texttt{NY}'s and \texttt{CAL}'s correlated clusters (plotted as color patches), respectively, computed using the correlation clustering method (Sec. 5.1). Each cluster's boundary vertices are marked in black dots.}           
    \label{fig:combined_figures_NY_CAL_preprocessing_demo}    
\end{figure}

\ignore{
\begin{figure}[t]
            \centering
            \includegraphics[scale=0.245] {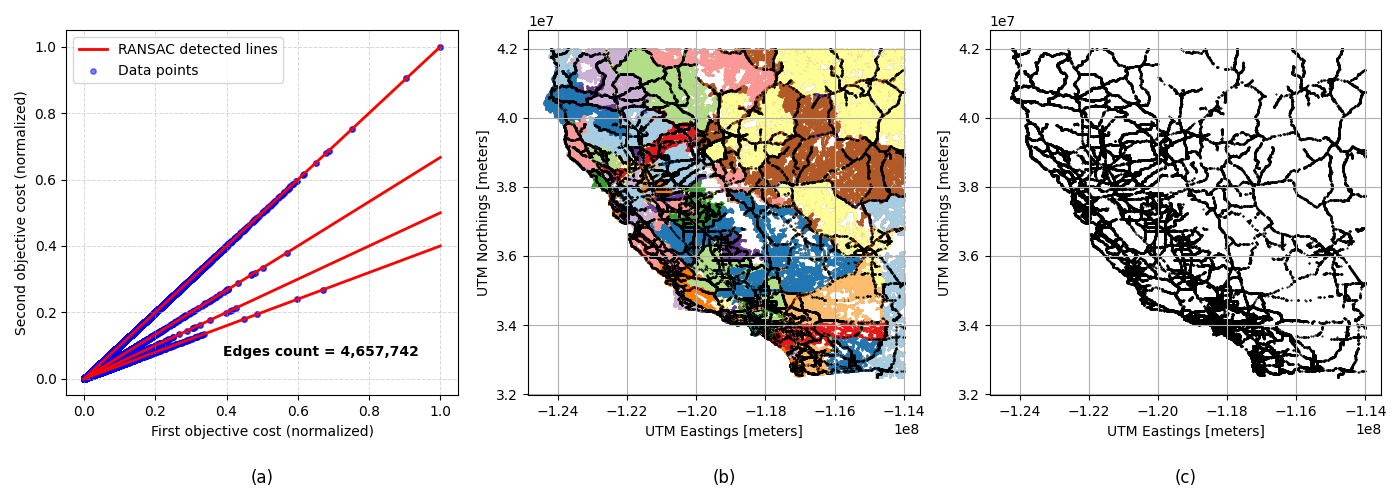}
            \caption{
            \textbf{(a)} 
            Edge cost plotted on the 2D objective costs (blue dots) and linear correlations computed by Alg.\ref{alg:ransac_lines} (red lines) for the \texttt{CAL} instance.
            \textbf{(b)} Geo-spatial distribution of correlated clusters (plotted as different color patches) computed using the correlation clustering method (Sec.~\ref{sec:correlation_based_preprocessing}). Each cluster's boundary vertices are marked in black dots.}          
            \label{fig:clusters_on_map}
\end{figure}}


\ignore{ illustrates the outputs of the preprocessing framework applied to the DIMACS CAL instance. Fig.~\ref{fig:clusters_on_map}(a) shows the detected linear relationships by the RANSAC method (Alg. \ref{alg:ransac_lines}). As observed, the entire bi-objective space can be decomposed into four disjoint, highly-correlated linear relationships - a pattern observed consistently across the DIMACS dataset. The boundaries of the correlated clusters, aligned with these linear relationships, are delineated using Alg. \ref{alg:clusters_cca_detection} (Fig.~\ref{fig:clusters_on_map}(b)). The new query graph consists only of boundary vertices and their corresponding super-edges (not shown in the figure), as every vertex belongs to a non-trivial correlated cluster (Fig.~\ref{fig:clusters_on_map}(c)).
}


\subsection{\pegapex Query Runtimes}
We compare query running times of \pegapex and \apex, arguably the state-of-the-art algorithm for solving the approximate BOSP  problem (without preprocessing) on various DIMACS roadmaps.
We tested both on the highly-correlated DIMACS instances (Sec.~\ref{subsec:dimacs}) and then continue to generate a synthetic instances in which  we took the DIMACS \texttt{NY} instance and randomly sampled  edges costs  to form three linear \emph{non}-perfect correlations (Fig.~\ref{fig:synthetic_graph_ransac}).

For the highly-correlated DIMACS instances, other than a small number of outliers, \pegapex is always faster than \apex with maximal speed ups being well above $5\times$ (Fig.~\ref{fig:running_times_comparison}).
For the synthetic \texttt{NY}-based instance, we preprocessed the graph using  a fixed value of $\delta=0.05$ and 
four approximation factors $\beps~=~\bigl\{[0.001,0.001],[0.005,0.005],[0.01,0.01],[0.1,0.1]\bigr\}$. 
This combination of $\delta$ and $\beps$ keeps the number of vertices of~\Gtilde fixed while the average branching factor~$b(\Gtilde)$ increases as \beps-values decrease. 
Again, we compare query running times of \pegapex and \apex and can see (Fig.~\ref{fig:synthetic_graph_runtimes}) a dramatic speed up on most queries, reaching, in some instances, up to~$1000\times$.

\begin{figure}
    \centering
    \begin{subfigure}[t]{0.218\textwidth}  
        \centering
        \includegraphics[width=1\textwidth]{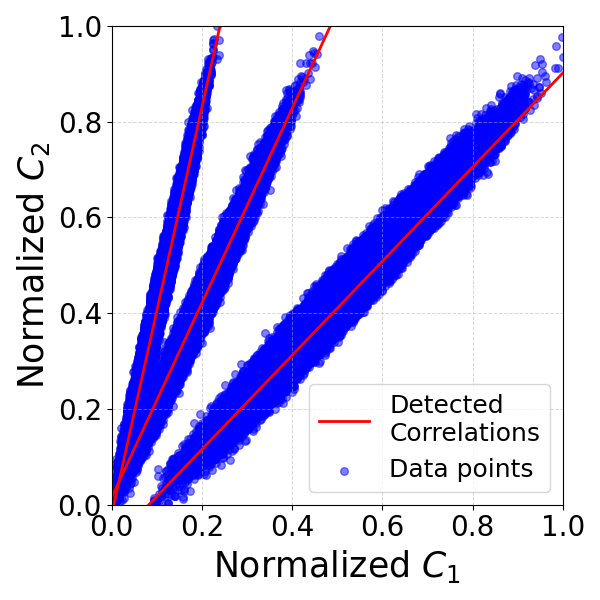}
        \caption{}         
        \label{fig:synthetic_graph_ransac}
    \end{subfigure}
    \hfill
    \begin{subfigure}[t]{0.25\textwidth}
        \centering
        \includegraphics[width=1\textwidth]{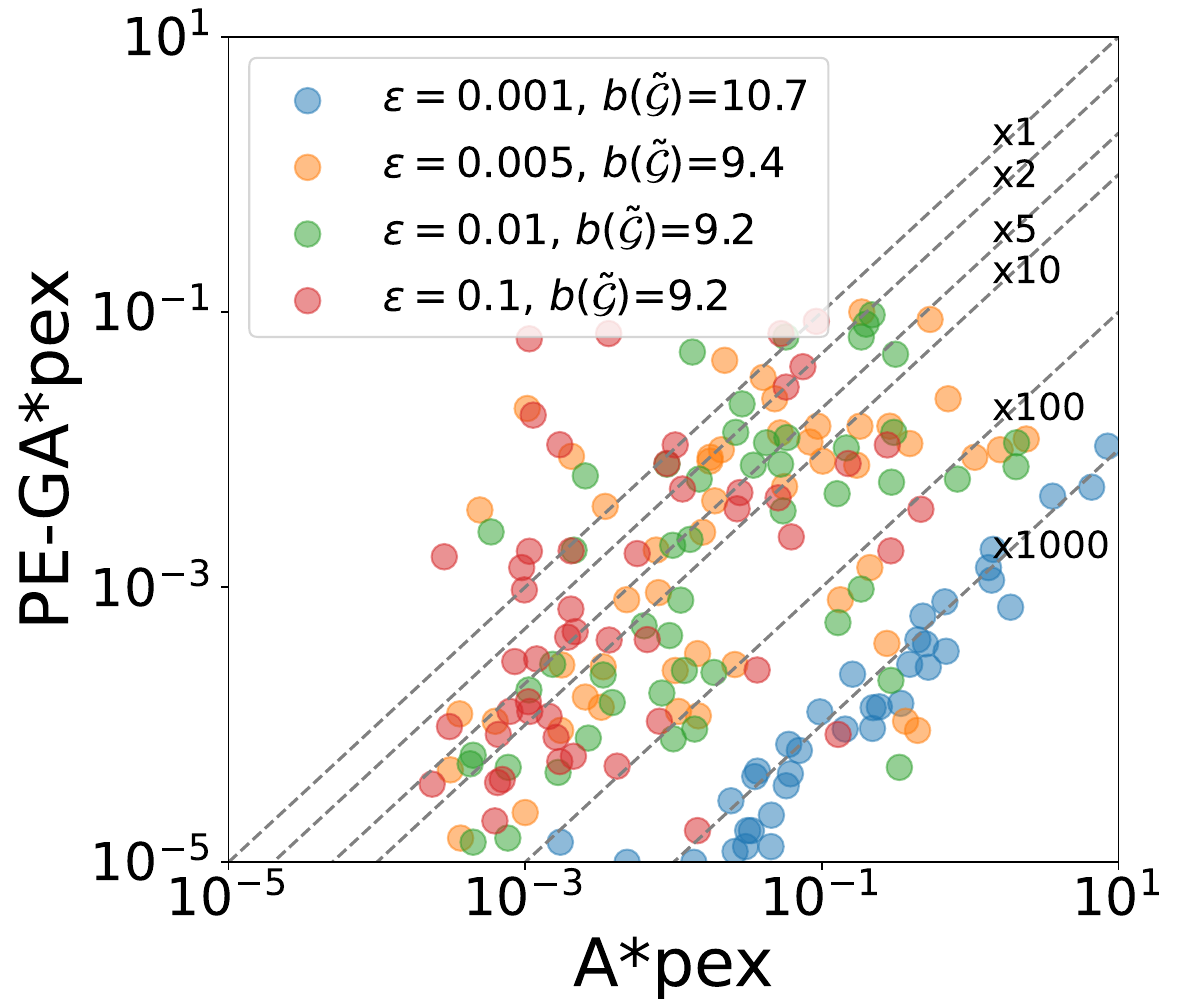}
        \caption{}         
        \label{fig:synthetic_graph_runtimes}
    \end{subfigure}
    \caption{{\textbf{(a)} 
            Edge cost plotted on the 2D objective costs (blue dots) and linear correlations computed by Alg.\ref{alg:ransac_lines} (red lines) for the synthetic bi-objective graph.
            \textbf{(b)} Running times (in seconds) of \apex and \pegapex on different queries in a synthetic bi-objective graph for varying \beps vectors.}}       
    \label{fig:combined_figures_synthetic_graph}
    \vspace{-4.5mm}
\end{figure}

\ignore{
\begin{figure*}
    \centering
    \begin{subfigure}[t]{0.49\textwidth}  
        \centering
        \includegraphics[width=\textwidth]{figures/Ablation_Study_on_query_graph.pdf}
        \caption{\textbf{Ablation Study}: evaluating the effectiveness of \pegapex compared to \eapex in mitigating the impact of an increased branching factor when searching over graph \Gtilde. The running times (in seconds) of both algorithms are presented for several DIMACS instances with $\boldsymbol{\varepsilon} = [0.01, 0.01]$.}         
        \label{fig:ablation_study}
    \end{subfigure}
    \hfill
    \begin{subfigure}[t]{0.49\textwidth}
        \centering
        \includegraphics[width=\textwidth]{figures/NY_COL_NW_CAL_running_times_comparison.pdf}
        \caption{Running times (in seconds) of \apex and \pegapex in several DIMACS instances for $\boldsymbol{\varepsilon}=[0.01,0.01]$.}         
        \label{fig:running_times_comparison}
    \end{subfigure}
    
    \caption{Comparison of \pegapex and \apex in terms of running times and impact of the increased branching factor.}
    \label{fig:combined_figures}
\end{figure*}
}
\ignore{
\begin{figure}
            \centering
            \includegraphics[scale=0.7]{figures/Ablation_Study_on_query_graph.pdf}
            \caption{\textbf{Ablation Study}: evaluating the effectiveness of \pegapex compared to \eapex in mitigating the impact of an increased branching factor when searching over graph \Gtilde. The running times (in seconds) of both algorithms are presented for several DIMACS instances with $\boldsymbol{\varepsilon} = [0.01, 0.01]$.}         
            \label{fig:ablation_study}
\end{figure}
\begin{figure}
            \centering
            \includegraphics[scale=0.7]{figures/NY_COL_NW_CAL_running_times_comparison.pdf}
            \caption{Running times (in seconds) of \apex and \pegapex in several DIMACS instances for $\boldsymbol{\varepsilon}=[0.01,0.01]$.}         
            \label{fig:running_times_comparison}
\end{figure}
}
\ignore{
\begin{figure*}[t]
            \centering
            \includegraphics[scale=0.6]{figures/Ablation_Study_on_query_graph.pdf}
            \caption{\textbf{Ablation Study}: evaluating the effectiveness of \pegapex compared to \eapex in mitigating the impact of an increased branching factor when searching over graph \Gtilde. The running times (in seconds) of both algorithms are presented for several DIMACS instances with $\boldsymbol{\varepsilon} = [0.01, 0.01]$.}         
            \label{fig:ablation_study}
\end{figure*}

\begin{figure*}[t]
            \centering
            \includegraphics[scale=0.6]{figures/NY_COL_NW_CAL_running_times_comparison.pdf}
            \caption{Running times (in seconds) of \apex and \pegapex in several DIMACS instances for $\boldsymbol{\varepsilon}=[0.01,0.01]$.}         
            \label{fig:running_times_comparison}
\end{figure*}
}

\section{Discussion and Future Work}
In this work we presented the first practical, systematic approach to exploit correlation between objectives in BOSP.
Our approach is based on a generalization to \apex that is of independent interest and an immediate question is what other problems can make use of this new algorithmic building block.
Our empirical evaluation on standard DIMACS benchmarks (Sec.~\ref{sec:eval}) indicate that costs of edges in instances of this dataset follow a nearly-perfect correlation (Fig.~\ref{fig:CAL_RANSAC_demo}). Thus, this dataset may be too synthetic to represent real-world data and better benchmarks are in need (a gap already identified by Salzman et al.~(\citeyear{salzman2023heuristic})).

As for future work, in our framework we introduce~$\delta$ to control which edges are considered to have the same correlation. This parameter is intimately related to the approximation factor \beps. Automatically choosing~$\delta$ according to a given value of \beps would reduce  the algorithm's parameters.

One other avenue for future work includes extending our framework to more than two objectives. This is highly challenging, as the number of correlated objectives may vary across different parts of the graph. 
Finally, it is extremely interesting to integrate our approach within the contraction hierarchies-based framework recently proposed by Zhang et al.~(\citeyear{zhang2023efficient}) for  exact BOSP problems. 

\section*{Acknowledgments}
This research was supported by Grant No. 2021643 from the United States-Israel Binational Science Foundation (BSF).

\bibliography{aaai25}

\appendix
\section{Appendix}
In this appendix, we present the detailed proofs of the key lemmas and theorems stated in the paper. For clarity and completeness, we restate each theoretical result before providing its proof.

\subsection{Proof of Lemma \ref{lemma:eps-bound}}
\label{apndx_lemma_eps_bound}

\begingroup
  \renewcommand{\thetheorem}{4.1}
  \begin{lemma}
  \label{apndx_lemma_eps_bound_statement}
    Let $\AP=\langle \bold{A}, \pi \rangle$ be an $\beps$-bounded apex-path pair and 
    let $\ApEd=\langle \bold{EA},{e} \rangle$ be an outgoing $\beps$-bounded apex-edge pair connecting $v(\pi)$ to some vertex $v'$. 
    
    If~$\AP'=~\langle \bold{A'},\pi' \rangle$ is the apex-path pair constructed by extending \AP by \ApEd, 
    then~$\AP'$ is $\beps$-bounded.
  \end{lemma}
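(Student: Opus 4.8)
The plan is to unwind the definition of \beps-boundedness for both the apex-path pair $\AP$ and the apex-edge pair $\ApEd$, add the two resulting component-wise inequalities, and observe that their sum is precisely the \beps-boundedness condition for $\AP'$.

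First I would record the hypotheses in inequality form. Since $\AP=\langle \bold{A},\pi\rangle$ is \beps-bounded, $\bold{c}(\pi) \leq (1+\beps)\cdot\bold{A}$, and since $\ApEd=\langle \bold{EA},e\rangle$ is \beps-bounded, $\bold{c}(e) \leq (1+\beps)\cdot\bold{EA}$, both understood component-wise (this is just the definition of $\preceq_{\beps}$). Next I would invoke the construction of $\AP'$, which sets $\bold{c}(\pi')=\bold{c}(\pi)+\bold{c}(e)$ and $\bold{A}'=\bold{A}+\bold{EA}$. Substituting the two inequalities and using that component-wise multiplication distributes over component-wise addition gives
\begin{align*}
\bold{c}(\pi')=\bold{c}(\pi)+\bold{c}(e) &\leq (1+\beps)\cdot\bold{A}+(1+\beps)\cdot\bold{EA} \\
&= (1+\beps)\cdot(\bold{A}+\bold{EA}) = (1+\beps)\cdot\bold{A}',
\end{align*}
so $\bold{c}(\pi')\preceq_{\beps}\bold{A}'$, which is exactly the statement that $\AP'$ is \beps-bounded.

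I do not expect any real obstacle: the result follows immediately from the fact that the relation $\bold{p}\leq(1+\beps)\cdot\bold{q}$ (equivalently, \beps-domination) is preserved under component-wise addition of the vectors appearing on each side. The only bookkeeping worth attention is pinning down the exact algebraic form of \beps-boundedness being used and applying the additivity coordinate by coordinate; once that is in place, the displayed chain of (in)equalities closes the argument.
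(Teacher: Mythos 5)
Your proposal is correct and follows exactly the same route as the paper's proof: write out the two $\beps$-boundedness hypotheses as component-wise inequalities, add them, and use distributivity to recognize $(1+\beps)\cdot(\bold{A}+\bold{EA})=(1+\beps)\cdot\bold{A}'$ as the required bound on $\bold{c}(\pi')$. Nothing is missing.
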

\endgroup
\addtocounter{theorem}{-1}

\begin{proof}
$\AP=\langle \bold{A}, \pi \rangle$ is \beps-bounded, thus:
\begin{equation}
    \label{eq:ap_eps_bounded}
    \mathbf{c}(\pi) \leq (1+\beps) \cdot \mathbf{A}.
\end{equation}

\noindent $\ApEd=\langle \bold{EA},{e} \rangle$ is \beps-bounded, thus:
\begin{equation}
    \label{eq:ae_eps_bounded}
    \mathbf{c}(e) \leq (1+\beps) \cdot \mathbf{EA}.
\end{equation}

\noindent The cost of path $\pi'$ is:
\begin{eqnarray}
    \begin{split}
    \mathbf{c}(\pi') &= \mathbf{c}(\pi) + \mathbf{c}(e) \\
    &\underbrace{\leq}_{\eqref{eq:ap_eps_bounded},\eqref{eq:ae_eps_bounded}} (1+\beps) \cdot \bold{A}+(1+\beps) \cdot \bold{EA} \\
    &=(1+\beps) \cdot (\bold{A} + \bold{EA}) \\
    &=(1+\beps) \cdot \mathbf{A'}. 
    \end{split}
\end{eqnarray}

\noindent Thus, by definition, $\AP'$ is \beps-bounded.
\end{proof}

\subsection{Proof of Theorem \ref{thm:gapex}}
\label{apndx_thm_gapex}

\begingroup
  \renewcommand{\thetheorem}{4.2}
  \begin{theorem}
    \label{apndx_thm_gapex_statement}
    Let $\hat{\mathcal{G}} = (\mathcal{V}, \mathcal{E},\mathbf{c}, \mathbf{c'})$    
    be a generalized graph of graph $\mathcal{G}=  (\mathcal{V}, \mathcal{E},\mathbf{c})$. 
    Let~$\vs,\vt \in~\mathcal{V}$ and recall that~$\Pi^*_{\bold{c}}$ denotes the Pareto-optimal set of paths between $v_s$ and~$v_t$ in $\G$.
    Set 
    $\beps := \underset{e\in \mathcal{E}}{\max} \left(\mathbf{c}(e) / \mathbf{c'}(e) - 1\right)$
    and
    let
    $\Pi^*_{\eapex}$ be the output of \eapex on $\hat{\mathcal{G}}$ when using an approximation factor $\beps$.
    Then, $\Pi^*_{\eapex} \preceq_{\beps} \Pi^*_{\bold{c}}$.
    Namely, running  \eapex on the generalized graph with approximation factor $\beps$ yields a Pareto-optimal solution set of paths between \vs and \vt that is an $\beps$-approximation of the Pareto-optimal solution set of paths between $v_s$ and~$v_t$ in \G.
  \end{theorem}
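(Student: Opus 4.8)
The plan is to reduce the claim to the already-established correctness of \apex (Thm.~1 in \cite{zhang2022pex}) together with the elementary relationship between the two cost functions of a generalized graph. Introduce the auxiliary graph $\mathcal{G}_{\mathbf{c'}} := (\mathcal{V},\mathcal{E},\mathbf{c'})$ obtained by keeping only the lower-bound costs, and write $\Pi^*_{\mathbf{c'}}$ for its Pareto-optimal set of paths from \vs to \vt. The argument then splits into two halves, which I chain at the end: (i)~the output of \eapex on $\hat{\mathcal{G}}$ with factor $\beps$ is a set that $\beps$-dominates $\Pi^*_{\mathbf{c'}}$; and (ii)~$\Pi^*_{\mathbf{c'}}$ weakly dominates $\Pi^*_{\mathbf{c}}$.

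For half (i), I would first note that \eapex on $\hat{\mathcal{G}}$ behaves exactly like \apex on $\mathcal{G}_{\mathbf{c'}}$ as far as apexes, $\mathbf{g}$- and $\mathbf{f}$-values, merging, truncated-dominance checks, and termination are concerned: every expansion advances the apex by $\mathbf{c'}(e)$, so apexes accumulate $\mathbf{c'}$-costs exactly as \apex's $\mathbf{g}$-values would; the only difference is that the representative path carries the (possibly larger) cost $\mathbf{c}(\pi)$. Since $\beps := \max_{e\in\mathcal{E}}\bigl(\mathbf{c}(e)/\mathbf{c'}(e)-1\bigr)$, each edge's corresponding apex-edge pair $\langle \mathbf{c'}(e),e\rangle$ satisfies $\mathbf{c}(e)\preceq_{\beps}\mathbf{c'}(e)$, i.e.\ is $\beps$-bounded, so Lemma~\ref{lemma:eps-bound} applies at every expansion and guarantees that all apex-path pairs placed in \open stay $\beps$-bounded. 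I would then replay the optimality proof of \apex line by line: Lemmas~1--3 of \cite{zhang2022pex} are syntactic and transfer unchanged, and in the induction establishing their Lemma~4 (and in the final derivation of Thm.~1) every appeal to ``an expanded pair is $\beps$-bounded'' is discharged by Lemma~\ref{lemma:eps-bound} in place of the original edge-expansion fact; the heuristic $\mathbf{h}$ is defined on vertices only and is untouched, so the consistency arguments are unaffected. The conclusion is: for every $\pi'\in\Pi^*_{\mathbf{c'}}$ there is a representative path $\hat\pi$ in \eapex's output whose final apex weakly dominates $\mathbf{c'}(\pi')$, and by $\beps$-boundedness $\mathbf{c}(\hat\pi)\le(1+\beps)\cdot(\text{that apex})\le(1+\beps)\,\mathbf{c'}(\pi')$, i.e.\ $\mathbf{c}(\hat\pi)\preceq_{\beps}\mathbf{c'}(\pi')$.

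For half (ii), I would use the defining property $\mathbf{c'}(e)\preceq\mathbf{c}(e)$ of a generalized graph, which extends additively to $\mathbf{c'}(\pi)\preceq\mathbf{c}(\pi)$ for every path $\pi$. Hence for any $\pi\in\Pi^*_{\mathbf{c}}$, either $\pi$ is already $\mathbf{c'}$-undominated, or some $\pi'\in\Pi^*_{\mathbf{c'}}$ dominates $\pi$ under $\mathbf{c'}$; either way there is $\pi'\in\Pi^*_{\mathbf{c'}}$ with $\mathbf{c'}(\pi')\preceq\mathbf{c}(\pi)$. Composing with half (i): for every $\pi\in\Pi^*_{\mathbf{c}}$ pick such a $\pi'$ and then the corresponding $\hat\pi\in\Pi^*_{\eapex}$, so that $\mathbf{c}(\hat\pi)\le(1+\beps)\,\mathbf{c'}(\pi')\le(1+\beps)\,\mathbf{c}(\pi)$, i.e.\ $\hat\pi$ $\beps$-dominates $\pi$. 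This is exactly $\Pi^*_{\eapex}\preceq_{\beps}\Pi^*_{\mathbf{c}}$.

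The main obstacle is half (i): making sure the \apex correctness proof genuinely transfers. Concretely, I would need to verify that no step of that proof secretly relies on the apex and the representative path advancing by the \emph{same} cost vector. The single place where this could matter is the $\beps$-boundedness invariant maintained throughout the search, and that is precisely what Lemma~\ref{lemma:eps-bound} restores; everything else depends only on vertices, apexes and representative-path costs, which behave identically to \apex. So the transfer should go through cleanly, and the work is careful bookkeeping rather than a new idea.
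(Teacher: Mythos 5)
Your proof is correct and its engine is the same as the paper's: both hinge on replaying the \apex optimality proof (Lemmas 1--4 and Thm.~1 of \cite{zhang2022pex}) with Lemma~\ref{lemma:eps-bound} discharging the $\beps$-boundedness of every expanded apex-path pair. The only structural difference is organizational: you factor the weak-domination step through the auxiliary graph $\mathcal{G}_{\mathbf{c'}}=(\mathcal{V},\mathcal{E},\mathbf{c'})$ and its Pareto set $\Pi^*_{\mathbf{c'}}$, chaining $\mathbf{c}(\hat\pi)\le(1+\beps)\,\mathbf{c'}(\pi')\le(1+\beps)\,\mathbf{c}(\pi)$, whereas the paper absorbs the inequality $\mathbf{c'}(e)\preceq\mathbf{c}(e)$ directly into the induction invariant (the apex, which accumulates $\mathbf{c'}$-costs, weakly dominates the $\mathbf{g}$-value of the solution prefix under $\mathbf{c}$). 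Both organizations are valid; yours isolates the elementary cost comparison from the search-theoretic argument, at the price of introducing an intermediate Pareto set. One caveat: your opening claim that \eapex on $\hat{\mathcal{G}}$ ``behaves exactly like'' \apex on $\mathcal{G}_{\mathbf{c'}}$ is not literally true of the execution, since the $\beps$-dominance and merge checks involve the representative paths' $\mathbf{c}$-costs, which differ from what \apex on $\mathcal{G}_{\mathbf{c'}}$ would carry; this does not create a gap because you (like the paper) ultimately replay the proof rather than invoking Thm.~1 of \cite{zhang2022pex} as a black box, but the reduction should be stated as a correspondence of apex bookkeeping only, not of execution traces.
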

\endgroup
\addtocounter{theorem}{-1}

The proof of Thm.~\ref{thm:gapex} closely follows the optimality proof of \apex (Thm. 1 in~\cite{zhang2022pex}). As previously discussed, \apex and \eapex differ only in how they expand apex-path pairs. In the case of \eapex, the correctness argument relies on Lemma~\ref{lemma:eps-bound}, which establishes \beps-boundedness for each expanded apex-path pair.

For completeness, we reproduce the original proof of \apex from~\cite{zhang2022pex}, highlighting in blue the necessary modifications for the \eapex setting. The pseudo-code line numbers refer to \apex pseudo-code (Algorithm 2 in~\cite{zhang2022pex}).

We now present the details of Lemma 4 and Thm. 1 from~\cite{zhang2022pex}. Note that Lemmas 1–3 from the \apex paper extend directly to \eapex.

\subsubsection{Modified Proof of Lemma 4 (\apex paper)}
\label{lemma_4_apex_modified}
\begingroup
  \renewcommand{\thetheorem}{4.3}
  \begin{lemma}
    \label{lemma_4_apex_modified_statement}
    For any prefix $\pi_l = [s_1, s_2 \ldots s_l]$ of any solution $\pi = [s_1(=s_\text{start}), s_2\ldots s_L(=s_\text{goal})]$ with $1 \leq l \leq L$, there exists, when \textcolor{blue}{\eapex} terminates, (Case 1:) an expanded \apexnode $\AP$ (that is, one that reaches Line 9) that contains state $s_l$ and whose apex weakly dominates the $\mathbf{g}$-value of path $\pi_l$ or (Case 2:) an \apexnode $\AP$ in the solution set such that the $\mathbf{f}$-value of its representative path \beps-dominates the $\mathbf{f}$-value of path $\pi_l$.
  \end{lemma}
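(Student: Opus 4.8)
The plan is to follow the induction in the proof of Lemma~4 of~\cite{zhang2022pex} essentially verbatim, inducting on the prefix length~$l$, and to verify that the only point at which the argument depends on \emph{how} apex-path pairs are expanded --- the preservation of \beps-boundedness --- carries over to \eapex by appealing to Lemma~\ref{lemma:eps-bound} in place of Lemma~1 of~\cite{zhang2022pex}. For the base case $l=1$, the initial apex-path pair $\AP=\langle \mathbf{0}, [s_\text{start}] \rangle$ is extracted and expanded by \eapex, it contains $s_1 = s_\text{start}$, and its apex $\mathbf{0}$ weakly dominates $\mathbf{g}(\pi_1)=\mathbf{0}$, so Case~1 holds.

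For the inductive step, assume the statement for some $l<L$ and the given solution $\pi$, and split on which case holds for $l$. If Case~2 holds for $l$, the argument concerns only the representative path's $\mathbf{f}$-value and consistency of the heuristic: the $\mathbf{f}$-value of $\pi_l$ weakly dominates that of $\pi_{l+1}$, so the same apex-path pair witnesses Case~2 for $l+1$; this transfers unchanged. If Case~1 holds for $l$, take the expanded apex-path pair $\AP$ witnessing it and consider the child $\AP'$ that \eapex produces when expanding $\AP$ using the apex-edge pair corresponding to edge $(s_l,s_{l+1})$. By the \eapex expansion rule, the apex of $\AP'$ is the element-wise sum of $\AP$'s apex and that edge's edge apex, hence it still weakly dominates $\mathbf{g}(\pi_{l+1})$, and therefore the $\mathbf{f}$-value of $\AP'$ weakly dominates the $\mathbf{f}$-value of $\pi_{l+1}$. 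One then distinguishes the same three sub-cases according to what \eapex does with $\AP'$: (i)~the solution-set update is triggered by some apex-path pair whose truncated representative $\mathbf{f}$-value \beps-dominates that of $\AP'$, giving a merged pair that satisfies Case~2; (ii)~$\AP'$ is dominated in the closed set $G^T_{cl}(v(\AP'))$, so by Lemma~2 of~\cite{zhang2022pex} an already-expanded pair satisfies Case~1; or (iii)~$\AP'$ (possibly after merges) is inserted into \open, and is later either expanded --- giving Case~1 --- or extracted and pruned, reducing to~(i) or~(ii). In each sub-case, Lemmas~2 and~3 of~\cite{zhang2022pex} (statements about apex domination and merging, independent of the expansion mechanism) apply verbatim, and every invocation of ``\beps-boundedness is preserved by expansion'' is replaced by Lemma~\ref{lemma:eps-bound}.

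The observation that legitimizes this transfer --- and the one worth stating explicitly at the start --- is that \eapex and \apex are structurally identical except in the expansion step: priority-queue management, the merge operation, the solution-set update, and the dominance pruning via $G^T_{cl}$ are unchanged. Consequently Lemmas~1--3 of~\cite{zhang2022pex}, which never reference the expansion rule directly, hold for \eapex, and only the appeal to Lemma~1 (``expanding a \beps-bounded apex-path pair by an edge yields a \beps-bounded apex-path pair'') must be swapped for Lemma~\ref{lemma:eps-bound} (``$\ldots$ by a \beps-bounded apex-edge pair $\ldots$'').

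The main obstacle is not a new idea but ensuring the bookkeeping is airtight, in particular that every apex-edge pair \eapex uses is \beps-bounded so that Lemma~\ref{lemma:eps-bound} applies. This is exactly what the choice $\beps := \max_{e\in\mathcal{E}}\left(\mathbf{c}(e)/\mathbf{c'}(e)-1\right)$ guarantees: it forces $\mathbf{c}(e) \leq (1+\beps)\cdot\mathbf{c'}(e)$ for every $e\in\mathcal{E}$, i.e.\ each corresponding apex-edge pair $\langle \mathbf{c'}(e), e\rangle$ is \beps-bounded. With that secured, the remainder is a line-by-line transcription of the \apex argument with the single substitution noted above.
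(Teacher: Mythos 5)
Your proof is correct and follows essentially the same route as the paper's: induction on the prefix length, the same base case and three sub-cases of the inductive step transcribed from Lemma~4 of the \apex paper, with the single substitution of Lemma~\ref{lemma:eps-bound} for the original \beps-boundedness argument. Your explicit observation that the child's apex $\bold{A}+\bold{EA}$ still weakly dominates $\mathbf{g}(\pi_{l+1})$ because $\mathbf{c'}(e)\preceq\mathbf{c}(e)$ is a point the paper leaves implicit, and is worth keeping.
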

\endgroup
\addtocounter{theorem}{-1}

\begin{proof}
    The proof is by induction. The lemma holds for $l=1$ and any solution since apex-path pair $\AP=\langle \mathbf{0},[s_{\text{start}}] \rangle$ gets expanded and has the properties required for Case 1. Now assume that the lemma holds for some $l<L$ and any solution. We prove that it then also holds for $l+1$ and this solution.

    Assume that Case 1 holds for $l$ and consider both the apex-path pair \AP mentioned there and its potential child apex-path pair $\AP'$ created on Line 14 for $s'=s_{l+1}$. Apex-path pair $\AP'$ contains state $s_{l+1}$, and its apex weakly dominates the $\mathbf{g}$-value of path $\pi_{l+1}$, which implies that its $\mathbf{f}$-value weakly dominates the $\mathbf{f}$-value of path $\pi_{l+1}$. We distinguish several cases:
    \begin{enumerate}
        \item First, the condition on Line 20 holds for some apex-path pair in the solution set, namely, the truncated $\mathbf{f}$-value of the representative path of this apex-path pair \beps-dominates the truncated $\mathbf{f}$-value of apex-path pair $\AP'$. \textcolor{blue}{\eapex} replaces this apex-path pair with a new apex-path pair $\AP''$ in the solution set on Line 22. Apex-path pair $\AP''$ stays in the solution set but \textcolor{blue}{\eapex} might merge it several (more) times with other apex-path pairs on Line 29 before it terminates. The apex of apex-path pair $\AP''$ weakly dominates the $\mathbf{f}$-value of path $\pi_{l+1}$ (since this apex is the component-wise minimum of the $\mathbf{f}$-value of apex-path pair $\AP'$ and another apex and hence weakly dominates the $\mathbf{f}$-value of apex-path pair $\AP'$, which in turn weakly dominates the $\mathbf{f}$-value of path $\pi_{l+1}$) and merging it with other apex-path pairs does not change this property according to Lemma 3 (in \apex paper). This apex-path also remains \beps-bounded (which is due to the conditions on Lines 20 and 30, Lemma 1 (in \apex paper), the consistent heuristics and \textcolor{blue}{by Lemma \ref{lemma:eps-bound}}), the $\mathbf{f}$-value of its representative path always \beps-dominates the $\mathbf{f}$-value of itself, which equals its apex. Put together, the $\mathbf{f}$-value of its representative path \beps-dominates the $\mathbf{f}$-value of path $\pi_{l+1}$. Thus, the merged apex-path pair satisfies Case 2 for $l+1$.

        \item Second, the condition on Line 24 holds, namely, there exists a truncated $\mathbf{f}$-value in $G^T_{cl}(s(\AP'))$ that weakly dominates the truncated $\mathbf{f}$-value of apex-path pair $\AP'$. Then, an expanded apex-path pair $\AP''$ exists according to Lemma 2 (in \apex paper) that contains state $s_{l+1}$ and whose $\mathbf{f}$-value weakly dominates the $\mathbf{f}$-value of apex-path pair $\AP'$. Thus, its apex weakly dominates the apex of apex-path pair $\AP'$. Thus, apex-path pair $\AP''$ satisfies Case 1 for $l+1$ since the apex of apex-path pair $\AP'$ in turn weakly dominates the $\mathbf{g}$-value of path $\pi_{l+1}$.

        \item Otherwise, \textcolor{blue}{\eapex} executes Line 17 for apex-path pair $\AP'$, where the apex-path pair is inserted into \open, perhaps after having been merged with another apex-pair pair on Line 29. \textcolor{blue}{\eapex} might merge it several (more) times with other apex-path pairs on Line 29 before finally extracting it. Its apex weakly dominates the $\mathbf{g}$-value of path $\pi_{l+1}$ and merging it with other apex-path pairs does not change this property according to Lemma 3 (in \apex paper). Thus, if this apex-path pair is expanded, it satisfies Case 1 for $l+1$. If it is extracted but not expanded, the condition on Line 20 or Line 24 holds, and thus, as we have already proved, Case 1 or Case 2 holds.
    \end{enumerate}
    Assume that Case 2 holds for $l$ and consider the apex-path pair mentioned there. The $\mathbf{f}$-value of the representative path of this apex-path pair \beps-dominates the $\mathbf{f}$-value of path $\pi_l$. Since the heuristic function is consistent, the $\mathbf{f}$-value of path $\pi_l$ in turn weakly dominates the $\mathbf{f}$-value of path $\pi_{l+1}$. Thus, this apex-path pair satisfies Case 2 for $l+1$.
    \end{proof}
    
    \subsubsection{Modified Proof of Thm. 1 (\apex paper)}
    \label{thm_1_apex_modified}
    \begingroup
    \renewcommand{\thetheorem}{4.3}
      \begin{theorem}
        \label{thm_1_apex_modified_statement}
        For any solution $\pi$, there exists, when \textcolor{blue}{\eapex} terminates, an apex-path pair in the solution set whose representative path \beps-dominates $\pi$.
      \end{theorem}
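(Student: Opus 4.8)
The plan is to obtain the theorem directly from the modified Lemma~\ref{lemma_4_apex_modified_statement} (Lemma 4 of the \apex paper) applied to the full prefix $\pi_L = \pi$ of the solution. The key preliminary observation is that $\pi$ terminates at $s_{\text{goal}}$ and, since the heuristic is consistent with $\mathbf{h}(s_{\text{goal}}) = \mathbf{0}$, for any path ending at $s_{\text{goal}}$ its $\mathbf{f}$-value and its $\mathbf{g}$-value both equal its cost. This lets both cases of Lemma~4 collapse onto the statement we want.

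First I would dispose of Case~2 of Lemma~4: there already exists an apex-path pair in the solution set whose representative path has an $\mathbf{f}$-value that \beps-dominates the $\mathbf{f}$-value of $\pi_L = \pi$. By the observation above, both $\mathbf{f}$-values equal the corresponding path costs (the representative path also ends at the goal, since the apex-path pair is in the solution set), so the representative path \beps-dominates $\pi$ and we are done.

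Next I would handle Case~1: there is an expanded apex-path pair $\AP$ that contains $s_L = s_{\text{goal}}$ and whose apex weakly dominates $\mathbf{g}(\pi_L) = \mathbf{c}(\pi)$. Because $\AP$ reaches the goal state, \eapex added it to the solution set (possibly after a merge), where it stays until termination, perhaps after further merges. I would then carry over, verbatim from the \apex proof, two invariants preserved by every such merge: (i) the apex of the surviving apex-path pair still weakly dominates $\mathbf{c}(\pi)$, because a merge takes a component-wise minimum of apexes (Lemma~3 of the \apex paper); and (ii) the apex-path pair remains \beps-bounded, which is where the \eapex-specific ingredient is needed, namely Lemma~\ref{lemma:eps-bound} together with the conditions on Lines~20 and~30, Lemma~1 of the \apex paper, and heuristic consistency. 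Combining (i) and (ii): \beps-boundedness gives that the representative path's $\mathbf{f}$-value \beps-dominates the pair's own $\mathbf{f}$-value, which equals its apex, which in turn weakly dominates $\mathbf{c}(\pi)$; and since the representative path ends at the goal its $\mathbf{f}$-value equals its cost, so the representative path \beps-dominates $\pi$.

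The only non-bookkeeping point — and thus the main obstacle — is that the \beps-boundedness invariant must survive the \emph{generalized} expansion step of \eapex, i.e., growth of an apex-path pair by an apex-edge pair rather than an ordinary edge. This is exactly the content of Lemma~\ref{lemma:eps-bound}, so once that lemma is invoked the remaining argument is identical to the original \apex optimality proof with \apex replaced by \eapex throughout; the merge operation, the truncated-dominance checks on Lines~20 and~24, and solution-set membership are all insensitive to whether edges or apex-edge pairs are used.
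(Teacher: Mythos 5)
Your proposal is correct and follows essentially the same route as the paper's own proof: apply the modified Lemma~4 to the full prefix $\pi_L=\pi$, dispose of Case~2 immediately via the fact that $\mathbf{f}$-values of goal-terminating paths equal their costs, and in Case~1 track the apex-path pair through insertion into the solution set and subsequent merges, using the apex-domination invariant (Lemma~3 of the \apex paper) together with preserved \beps-boundedness --- where Lemma~\ref{lemma:eps-bound} is precisely the \eapex-specific ingredient the paper also singles out. No gaps.
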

    \endgroup
    \addtocounter{theorem}{-1} 
    
    \begin{proof}
        Lemma~\ref{lemma_4_apex_modified_statement} holds for prefix $\pi_L=\pi$ of any solution $\pi$. In case its Case 2 holds, the theorem holds by definition for path $\pi$ since the $\mathbf{f}$-value of solutions (including those of the representative path and path $\pi$) are equal to their costs. In case its Case 1 holds, consider the apex-path pair mentioned there. This apex-path pair contains the goal state, and \textcolor{blue}{\eapex} thus executed Line 11 for it, where the apex-path pair was inserted into the solution set, perhaps after having been merged with another apex-path pair on Line 29. The apex-path pair stays in the solution set but \textcolor{blue}{\eapex} might merge it several (more) times with other apex-path pairs on Line 29 before it terminates. The apex of the apex-path pair weakly dominates the $\mathbf{g}$-value of path $\pi$ according to Lemma~\ref{lemma_4_apex_modified_statement} and merging it with other apex-path pairs does not change this property according to Lemma 3 (in \apex paper). Since the apex-path pair also remains \beps-bounded (which is due to the conditions on Lines 20 and 30, Lemma 1 (in \apex paper), the consistent heuristics and \textcolor{blue}{by Lemma \ref{lemma:eps-bound}}), the $\mathbf{f}$-value of its representative path always \beps-dominates the $\mathbf{f}$-value of itself, which equals its apex. Put together, the $\mathbf{f}$-value of its representative path \beps-dominates the $\mathbf{g}$-value of path $\pi$. Thus, the theorem holds by definition for path $\pi$ since the $\mathbf{g}-$ and $\mathbf{f}$-values of solutions (including those of the representative path and path $\pi$) are equal to their costs.        
    \end{proof}

\subsection{Proof of Theorem \ref{thm:query}}
\label{apndx_thm_query}
\begingroup
    \renewcommand{\thetheorem}{5.1}
      \begin{theorem}
        \label{apndx_thm_query_statement}
        Let \vs and \vt be the start and target vertices, respectively, of a search query. Running \eapex on the generalized query graph yields an \beps-approximation of $\Pi^*$ in \G.
      \end{theorem}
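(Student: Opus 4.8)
The plan is to reduce to the correctness of \eapex on the generalized query graph $(\tilde{\mathcal{V}},\tilde{\mathcal{E}},\tilde{\mathbf{c}},\tilde{\mathbf{c}}')$ (Thm.~\ref{thm:gapex}) and to show that every \vs-to-\vt path in \G is ``shadowed'' there by a path of no larger apex-cost. First I would verify that the user-supplied $\beps$ dominates the edge ratio of the generalized query graph: each original edge $e$ has $\tilde{\mathbf{c}}'(e)=\tilde{\mathbf{c}}(e)=\mathbf{c}(e)$, while for a super-edge $\hat{e}^{k}_{i,j}$ of a cluster $\psi$ we have $\tilde{\mathbf{c}}'(\hat{e}^{k}_{i,j})=\mathbf{c}(\AP^{k}_{i,j})\preceq\mathbf{c}(\pi^{k}_{i,j})=\tilde{\mathbf{c}}(\hat{e}^{k}_{i,j})$ (an apex is the element-wise minimum of the costs of the paths its apex-path pair represents), and, since $\AP^{k}_{i,j}$ was produced by \apex run with factor $\beps$ inside $\psi$, it is $\beps$-bounded, i.e.\ $\tilde{\mathbf{c}}(\hat{e}^{k}_{i,j})\preceq_{\beps}\tilde{\mathbf{c}}'(\hat{e}^{k}_{i,j})$. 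Hence $\max_{e\in\tilde{\mathcal{E}}}\bigl(\tilde{\mathbf{c}}(e)/\tilde{\mathbf{c}}'(e)-1\bigr)\preceq\beps$; by Lemma~\ref{lemma:eps-bound} and the appendix's modified \apex proof, running \eapex on the generalized query graph with $\beps$ is sound and yields the Thm.~\ref{thm:gapex} guarantee in the form I will use: for every \vs-to-\vt path $\sigma$ in \Gtilde there is a representative path $\rho$ in \eapex's output with $\tilde{\mathbf{c}}(\rho)\preceq_{\beps}\tilde{\mathbf{c}}'(\sigma)$, the apex-side cost $\tilde{\mathbf{c}}'(\sigma)$ being exactly the $\mathbf{g}$-value \eapex accumulates along $\sigma$.

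Next I would build, from a given \vs-to-\vt path $\pi$ in \G, a path $\sigma$ in \Gtilde with $\tilde{\mathbf{c}}'(\sigma)\preceq\mathbf{c}(\pi)$. Cut $\pi$ into maximal sub-paths lying entirely inside an abstracted cluster $\psi\in\Psi\setminus\{\psi_s,\psi_t\}$; all remaining edges of $\pi$ survive in $\tilde{\mathcal{E}}$. Since $E_\psi=(V_\psi\times V_\psi)\cap\mathcal{E}$ and neither \vs nor \vt lies in an abstracted cluster, the endpoints $b_i,b_j$ of each such sub-path $\tau$ are boundary vertices of $\psi$ (the bordering edge of $\pi$ leaves $V_\psi$). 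If $b_i=b_j$ drop the excursion (a path staying at $b_i$ has apex-cost $\mathbf{0}\preceq\mathbf{c}(\tau)$); otherwise I would invoke the statement that the apexes carried by the super-edges of $\hat{\mathcal{E}}_{\psi,i,j}$ (their $\mathbf{c}'_\psi$ values) together element-wise lower-bound the costs of all $b_i$-to-$b_j$ paths in $\psi$ --- the ``from below'' half of \apex's soundness complementing Property~\ref{prp:super_edges_aprox}, which is immediate for the single-objective-based ICCA actually used in the experiments, where the apex is the element-wise minimum of the two extreme $b_i$-to-$b_j$ solutions. This yields a super-edge $\hat{e}\in\hat{\mathcal{E}}_{\psi,i,j}$ with $\tilde{\mathbf{c}}'(\hat{e})\preceq\mathbf{c}(\tau)$. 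Replacing each $\tau$ by its $\hat{e}$ and keeping all surviving edges (for which $\tilde{\mathbf{c}}'=\mathbf{c}$) gives a path $\sigma$ from \vs to \vt in \Gtilde with $\tilde{\mathbf{c}}'(\sigma)\preceq\mathbf{c}(\pi)$ --- and, crucially, this inequality carries no $(1+\beps)$ factor.

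Finally I would chain the two steps. For the path $\sigma$, the guarantee from the first paragraph gives $\rho$ in \eapex's output with $\tilde{\mathbf{c}}(\rho)\preceq_{\beps}\tilde{\mathbf{c}}'(\sigma)\preceq\mathbf{c}(\pi)$, hence $\tilde{\mathbf{c}}(\rho)\preceq_{\beps}\mathbf{c}(\pi)$. Expanding each super-edge of $\rho$ into its representative path inside the corresponding cluster turns $\rho$ into a genuine \vs-to-\vt path $\bar{\rho}$ in \G (possibly with repeated vertices, which can be removed without increasing the cost) satisfying $\mathbf{c}(\bar{\rho})=\tilde{\mathbf{c}}(\rho)\preceq_{\beps}\mathbf{c}(\pi)$. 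Since this holds for every $\pi$, in particular for every $\pi\in\Pi^*(\vs,\vt)$, the (expanded) set returned by \eapex is an $\beps$-approximate Pareto-optimal solution set for the query in \G, i.e.\ $\Pi^*_{\eapex}\preceq_{\beps}\Pi^*$.

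The main obstacle is the second step, and specifically the claim that some super-edge's \emph{apex} --- not merely its representative path --- element-wise under-bounds $\mathbf{c}(\tau)$. This is the lower-bounding side of \apex's soundness for the intra-cluster subproblem; it is only implicit in Property~\ref{prp:super_edges_aprox}, must be stated and used carefully, and is exactly the ingredient that keeps the final guarantee at $\beps$ rather than degrading to $(1+\beps)^2-1$ when one naively composes the intra-cluster $\beps$-approximation with the query-graph $\beps$-approximation. The remaining bookkeeping --- that $\beps$ dominates the edge ratio of \Gtilde so that Thm.~\ref{thm:gapex} and Lemma~\ref{lemma:eps-bound} apply with the user-supplied factor, and that expanding super-edges preserves costs exactly --- is routine and follows from $\beps$-boundedness of the precomputed apex-path pairs.
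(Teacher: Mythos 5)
Your proposal is correct in substance but reaches the result by a genuinely different route than the paper. The paper re-enters the \apex machinery: it restates and re-proves Lemma~4 of \cite{zhang2022pex} directly on the generalized query graph, splitting the induction into the cases where the tracked prefix vertex lies outside a cluster (instance I1) or inside one (instance I2), and in the latter case invoking Property~\ref{prp:super_edges_aprox} to argue that some super-edge expansion produces an apex-path pair whose apex weakly dominates the $\mathbf{g}$-value of the original path in \G. You instead factorize: (a) treat Thm.~\ref{thm:gapex} (in the slightly strengthened form its appendix proof actually delivers, namely \beps-domination of the $\tilde{\mathbf{c}}'$-cost of \emph{every} \vs-to-\vt path in \Gtilde, not just $\tilde{\mathbf{c}}$-costs of Pareto paths) as a black box for the search on \Gtilde, and (b) prove a purely graph-theoretic projection lemma: every path in \G maps to a path in \Gtilde of no larger apex-side cost, by cutting at cluster boundaries and replacing each intra-cluster excursion with a super-edge whose apex lower-bounds it. This is cleaner in that the search-algorithm internals are touched only once, and your observation that one must compose through the apex-side costs to avoid a $(1+\varepsilon_i)^2$ degradation is exactly the point the paper's instance-I2 argument is engineered around. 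The one dependency you flag---that the super-edge apexes of $\hat{\mathcal{E}}_{\psi,i,j}$ collectively weakly dominate the cost of every $b_i$-to-$b_j$ path inside $\psi$, which is strictly more than the literal statement of Property~\ref{prp:super_edges_aprox}---is not a defect peculiar to your argument: the paper's own modified Lemma~4 invokes Property~\ref{prp:super_edges_aprox} for precisely this lower-bounding claim, so both proofs rest on the same implicit strengthening (it does hold for the \apex-generated apexes, since the solution-set apexes are component-wise minima absorbing the $\mathbf{f}$-values of all pruned pairs, and it is immediate for the Dijkstra-based ideal-point construction used in the experiments, as you note). Making that strengthened property explicit would complete your proof and would equally tighten the paper's.
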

    \endgroup
    \addtocounter{theorem}{-1} 

\noindent Recall that \G and \Gtilde differ only by the interior subgraphs of correlated clusters that do not include \vs nor \vt. While \G holds the original graph in these parts, \Gtilde substitutes the cluster's interior by super-edges connecting the boundary vertices. 
We now once again reproduce Lemma 4 from~\cite{zhang2022pex} while adapting it to the special structure of a generalized query graph. We highlight in blue the necessary modifications for the \eapex setting. The pseudo-code line numbers refer to \apex pseudo-code (Alg.~2 in~\cite{zhang2022pex}). Subsequently, Thm.~\ref{thm:gapex} holds and the proof is complete.

\subsubsection{Modified Proof of Lemma 4 (\apex)}
\begingroup
    \renewcommand{\thetheorem}{4.4}
      \begin{lemma}
        For any prefix $\pi_l = [s_1, \ldots, s_l]$ of any solution $\pi = [s_1(=s_\text{start}), s_2,\ldots, s_L(=s_\text{goal})]$ with $1 \leq l \leq L$ 
        \textcolor{blue}{and with $s_l \notin \{ V_\psi \setminus B(\psi) \}$ for any cluster $\psi$ (i.e., $s_l$ is either not in any cluster or it lies on a cluster's boundary)},         
        there exists, when \textcolor{blue}{\eapex} terminates, (Case 1:) an expanded \apexnode $\AP$ (that is, one that reaches Line 9) \textcolor{blue}{in \Gtilde} that contains state $s_l$ and whose apex weakly dominates the $\mathbf{g}$-value of path $\pi_l$ \textcolor{blue}{in \G} or (Case 2:) an \apexnode $\AP$ in the solution set \textcolor{blue}{of \eapex} such that the $\mathbf{f}$-value of its representative path \beps-dominates the $\mathbf{f}$-value of path $\pi_l$ \textcolor{blue}{in \G}.
      \end{lemma}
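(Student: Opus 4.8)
The plan is to reindex the induction in the proof of Lemma~4 of \apex (\cite{zhang2022pex}) so that we step along the solution $\pi$ not edge‑by‑edge but \emph{cluster‑segment by cluster‑segment}, bridging each contracted cluster by a super‑edge. Let $1 = l_1 < l_2 < \cdots < l_m = L$ be the indices at which $s_{l_k}$ satisfies the hypothesis of the lemma (boundary vertex, or outside every contracted cluster; the interiors of $\psi_s$ and $\psi_t$ are retained in $\tilde{\mathcal V}$ and play no special role here). Because clusters are induced subgraphs, every non‑boundary vertex of a contracted cluster $\psi$ has all of its in‑ and out‑neighbours inside $V_\psi$; hence each maximal run $s_{l_k+1},\ldots,s_{l_{k+1}-1}$ of non‑hypothesis vertices lies entirely inside one contracted cluster $\psi$, with $s_{l_k},s_{l_{k+1}}\in B(\psi)$. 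Writing $b_i := s_{l_k}$, $b_j := s_{l_{k+1}}$ and $\sigma_k := [s_{l_k},\ldots,s_{l_{k+1}}]$, the sub‑path $\sigma_k$ of $\pi$ is therefore a $b_i\to b_j$ path in \G using only vertices of $\psi$ (in the degenerate case $l_{k+1}=l_k+1$, either a single original edge that survives in \Gtilde, or a single edge of $E_\psi$). I would prove the lemma by induction on $k$, asserting the Case~1 / Case~2 dichotomy for $\pi_{l_k}$.

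The base case $k=1$ is verbatim \apex: $\langle \mathbf 0, [s_\text{start}]\rangle$ gets expanded, contains $s_{l_1}=s_\text{start}$, and its apex $\mathbf 0$ weakly dominates $\mathbf g(\pi_{l_1})=\mathbf 0$, so Case~1 holds. For the inductive step, assume the claim for $l_k$ and set $\mathbf w := \mathbf c(\sigma_k)$ (the cost of $\sigma_k$ in \G). If $\sigma_k$ is a single original edge present in \Gtilde, the argument is exactly the inductive step of \apex's Lemma~4, with Lemma~\ref{lemma:eps-bound} supplying \beps‑boundedness of the generated child. Otherwise $\sigma_k$ is a $b_i\to b_j$ path inside the contracted cluster $\psi$; by the construction of super‑edges in Sec.~\ref{sec:ICCA} together with Property~\ref{prp:super_edges_aprox}, there is a super‑edge $\hat e\in\hat{\mathcal E}_{\psi,i,j}$ whose corresponding apex‑edge pair $\langle \tilde{\mathbf c}'(\hat e),\hat e\rangle$ is \beps‑bounded and whose edge apex $\tilde{\mathbf c}'(\hat e)$ weakly dominates $\mathbf w$ (the apex of the \apex solution‑set pair representing $\sigma_k$ is a component‑wise lower bound on $\mathbf c(\sigma_k)$).

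From Case~1 of $l_k$ we have an expanded apex‑path pair $\AP$ in \Gtilde containing $b_i$ with $\mathbf g(\AP)=\mathbf A(\AP)$ weakly dominating $\mathbf g(\pi_{l_k})$. When $\AP$ was expanded, \eapex generated a child $\AP'$ by expanding $\AP$ with the apex‑edge pair of $\hat e$; then $\AP'$ contains $b_j=s_{l_{k+1}}$, it is \beps‑bounded by Lemma~\ref{lemma:eps-bound}, and since $\mathbf A(\AP)$ weakly dominates $\mathbf g(\pi_{l_k})$, $\tilde{\mathbf c}'(\hat e)$ weakly dominates $\mathbf w$, and $\mathbf g(\pi_{l_{k+1}})=\mathbf g(\pi_{l_k})+\mathbf w$, its apex $\mathbf A(\AP')=\mathbf A(\AP)+\tilde{\mathbf c}'(\hat e)$ weakly dominates $\mathbf g(\pi_{l_{k+1}})$. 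From here the three sub‑cases on $\AP'$ — the Line‑20 prune‑and‑merge into the solution set, the Line‑24 prune against $G^T_{cl}(s(\AP'))$, and the Line‑17 insertion into \open followed by possible merges on Line~29 — are exactly as in \cite{zhang2022pex}, each yielding Case~1 or Case~2 for $l_{k+1}$; the only deviation from the original proof is the appeal to Lemma~\ref{lemma:eps-bound} at the expansion step, which we have supplied. If instead Case~2 held for $l_k$, the representative path of the relevant solution‑set apex‑path pair \beps‑dominates $\mathbf f(\pi_{l_k})$ in \G, and summing the consistency inequality of the (consistent) heuristic along $\sigma_k$ gives $\mathbf h(s_{l_k})\le \mathbf w + \mathbf h(s_{l_{k+1}})$, so $\mathbf f(\pi_{l_k})$ weakly dominates $\mathbf f(\pi_{l_{k+1}})$ and Case~2 persists.

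I expect the main obstacle to be the structural claim of the first paragraph — that consecutive hypothesis‑vertices on $\pi$ are bridged by a path lying wholly within one contracted cluster with both endpoints on its boundary, together with the matching observation that \G and \Gtilde agree on exactly the surviving (inter‑cluster) edges while intra‑cluster traffic is exactly what the super‑edges replace. Everything after that is a reindexed copy of \apex's optimality proof with Lemma~\ref{lemma:eps-bound} plugged in at expansion and Property~\ref{prp:super_edges_aprox} plugged in to certify the super‑edge apexes; the remaining care points are (i) confirming that the apex of the \apex solution‑set pair representing $\sigma_k$ is genuinely a lower bound on $\mathbf c(\sigma_k)$, and (ii) that the heuristic used by \eapex on \Gtilde is consistent both on original edges and on super‑edges, which follows from consistency on \G since a super‑edge's $\tilde{\mathbf c}$‑cost is the cost of an actual \G‑path between its endpoints.
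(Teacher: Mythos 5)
Your proposal is correct and follows essentially the same route as the paper's proof: an induction along the solution restricted to boundary/non-cluster vertices (the paper phrases this as its "Instance I1/I2" split and a jump back to the last boundary index $l'$, which is exactly your reindexing over $l_1<\cdots<l_m$), bridging each contracted-cluster segment by a single super-edge whose apex weakly dominates the segment's cost via Property~\ref{prp:super_edges_aprox}, invoking Lemma~\ref{lemma:eps-bound} for $\beps$-boundedness at expansion, and then reusing the three sub-cases and the consistency argument from \apex's Lemma~4 verbatim. Your explicit care points (the solution-set apex being a componentwise lower bound, and heuristic consistency across super-edges) are the same facts the paper implicitly relies on.
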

    \endgroup
    \addtocounter{theorem}{-1} 
    
    \begin{proof} 

    The proof is by induction. The lemma holds for $l=1$ and any solution since \apexnode $\AP = \langle \bm{0}, [s_\text{start}] \rangle$ gets expanded and has the properties required for Case 1. Now assume that the lemma holds for some $l < L$ and any solution. We prove that it then also holds for $l+1$ and this solution.

    \textcolor{blue}{
    However,  we need to  distinguish between the following instances:
    Instance \textbf{I1} in which $s_{l} \notin \{ V_\psi \setminus B(\psi) \}$ for any cluster $\psi$
    and
    Instance \textbf{I2} in which $s_{l} \in \{ V_\psi \setminus B(\psi)\}$ for some cluster $\psi$ (i.e., $s_{l+1}$ is a boundary node and its parent lies within some cluster $\psi$).
    In instance \textbf{I1}, we can use the induction hypothesis for $\pi_l = [s_1, \ldots, s_l]$.
    However, for  instance \textbf{I2}, we represent $\pi_{l+1}$ as a concatenation of two sub paths 
    $\pi_{l+1} = \pi_{\not\psi} \cdot \pi_{\psi}$.
    Here,        
    }
    \textcolor{blue}{
    \begin{itemize}
        \item $\pi_{\not\psi}: = [s_1, s_2, \ldots, s_{l'}]$ with 
        $s_1 = s_{start}$, 
        $s_{l'} \in B(\psi)$. 
        \item $\pi_{\psi}: = [s_{\psi,0},  \ldots, s_{\psi,k}]$ such that $s_{\psi,j} \in \{\psi \setminus B(\psi)\}~\forall j \in \{0, \ldots, k-1\}$
        and
        $s_{\psi,k} = s_{l+1}$.
        Namely, all vertices of~$\pi_{\psi}$ except the last one lie inside (i.e., not on the boundary of) cluster $\psi$
        and the last vertex $s_{l+1}$ lies on the boundary of cluster $\psi$.
    \end{itemize}    
    Note that the induction hypothesis holds for $\pi_{\not\psi}$.
    }

    Assume that Case 1 holds \textcolor{blue}{for the induction step. We distinguish between instance \textbf{I1} and instance \textbf{I2}}.
    
    \textcolor{blue}{If we are at instance \textbf{I1}, the induction step holds} for $l$ and consider both the \apexnode $\AP$ mentioned there and its potential child \apexnode $\AP'$ created on Line 14 for $s' = s_{l+1}$. \apexnode $\AP'$ contains state $s_{l+1}$, and its apex weakly dominates the $\mathbf{g}$-value of path $\pi_{l+1}$, which implies that its $\mathbf{f}$-value weakly dominates the $\mathbf{f}$-value of path $\pi_{l+1}$.

    \textcolor{blue}{If we are at instance \textbf{I2}, the induction step holds for $l'$,} 
    and consider both the \apexnode $\AP$ mentioned there and
    \textcolor{blue}{
    the child apex-path pairs generated by following the super-edges $\hat{\mathcal{E}}_{\psi,l',l+1}$. Due to Property~\ref{prp:super_edges_aprox}, there exists a specific super-edge $\hat{e} \in \hat{\mathcal{E}}_{\psi,l',l+1}$ that generates \apexnode $\AP'$ that contains state $s_{l+1}$ and its apex weakly dominates the $\mathbf{g}$-value of path \textcolor{blue}{$\pi_{l+1}$ in \G.}    
    }

    \noindent We distinguish several cases:
    \begin{enumerate}
        \item First, the condition on Line 20 holds for some \apexnode in the solution set \textcolor{blue}{of \eapex in \Gtilde}, namely, the truncated $\mathbf{f}$-value of the representative path of this \apexnode $\varepsilon$-dominates the truncated $\mathbf{f}$-value of \apexnode $\AP'$.
        \textcolor{blue}{\eapex} replaces this \apexnode with a new \apexnode $\AP''$ in the solution set on Line 22. 
        \apexnode $\AP''$ stays in the solution set but \textcolor{blue}{\eapex} might merge it several (more) times with other \apexnodes on Line 29 before it terminates. The apex of \apexnode $\AP''$ weakly dominates the $\mathbf{f}$-value of path \textcolor{blue}{$\pi_{l+1}$} (since this apex is the component-wise minimum of the $\mathbf{f}$-value of \apexnode $\AP'$ and another apex and hence weakly dominates the $\mathbf{f}$-value of \apexnode $\AP'$, which in turn weakly dominates the $\mathbf{f}$-value of path \textcolor{blue}{$\pi_{l+1}$}) and merging it with other \apexnodes does not change this property according to Lemma 3 (in \apex paper). Since the \apexnode also remains \beps-bounded (which is due to the conditions on Lines 20 and 30, Lemma 1 (in \apex paper), the consistent heuristics and \textcolor{blue}{by Lemma \ref{lemma:eps-bound}}), the $\mathbf{f}$-value of its representative path always \beps-dominates the $\mathbf{f}$-value of itself, which equals its apex. Put together, the $\mathbf{f}$-value of its representative path \beps-dominates the $\mathbf{f}$-value of path \textcolor{blue}{$\pi_{l+1}$}. 
        Thus, the merged \apexnode satisfies Case~2 for \textcolor{blue}{$l+1$}.
        \item Second, the condition on Line 24 holds, namely, there exists a truncated $\mathbf{f}$-value in $G^T_\text{cl}(s(\AP'))$ that weakly dominates the truncated $\mathbf{f}$-value of \apexnode $\AP'$. Then, an expanded \apexnode $\AP''$ exists according to Lemma 2 (in \apex paper) that contains state \textcolor{blue}{$s_{l+1}$} and whose $\mathbf{f}$-value weakly dominates the $\mathbf{f}$-value of \apexnode $\AP'$. Thus, its apex weakly dominates the apex of \apexnode $\AP'$. Thus, \apexnode $\AP''$ satisfies Case~1 for~\textcolor{blue}{$l+1$} since the apex of \apexnode $\AP'$ in turn weakly dominates the $\mathbf{g}$-value of path \textcolor{blue}{$\pi_{l+1}$}.
        \item Otherwise, \textcolor{blue}{\eapex} executes Line 17 for \apexnode $\AP'$, where the \apexnode is inserted into $\open$, perhaps after having been merged with another \apexnode on Line 29. \textcolor{blue}{\eapex} might merge it several (more) times with other \apexnodes on Line 29 before finally extracting it. Its apex weakly dominates the $\mathbf{g}$-value of path \textcolor{blue}{$\pi_{l+1}$} and merging it with other \apexnodes does not change this property according to Lemma 3 (in \apex paper). 
        Thus, if this \apexnode is expanded, it satisfies Case 1 for~\textcolor{blue}{$l+1$}. 
        If it is extracted but not expanded, the condition on Line~20 or Line~24 holds, and thus, as we have already proved, Case 1 or Case 2 holds.
    \end{enumerate}

    Assume that Case 2 holds for \textcolor{blue}{for the induction step. We distinguish between instance \textbf{I1} and instance \textbf{I2}.} 
    
    \textcolor{blue}{If we are at instance \textbf{I1}, the induction step holds for $l$} and consider the apex-path pair mentioned there. The $\mathbf{f}$-value of the representative path of this \apexnode \beps-dominates the $\mathbf{f}$-value of path $\pi_l$. Since the heuristic function is consistent, the $\mathbf{f}$-value of path~$\pi_l$ in turn weakly dominates the $\mathbf{f}$-value of path \textcolor{blue}{$\pi_{l+1}$}. Thus, this \apexnode satisfies Case 2 for \textcolor{blue}{$l+1$}. 

    \textcolor{blue}{If we are at instance \textbf{I2}, the induction step holds for $l'$,} and consider the apex-path pair mentioned there. The $\mathbf{f}$-value of the representative path of this \apexnode \beps-dominates the $\mathbf{f}$-value of path \textcolor{blue}{$\pi_{l'}$}. Since the heuristic function is consistent, the $\mathbf{f}$-value of path~\textcolor{blue}{$\pi_{l'}$} in turn weakly dominates the $\mathbf{f}$-value of path \textcolor{blue}{$\pi_{l}$} and path \textcolor{blue}{$\pi_{l+1}$}. Thus, this \apexnode satisfies Case 2 for \textcolor{blue}{$l+1$}. 
    \end{proof}

    \newpage

\ignore{
\begingroup
    \renewcommand{\thetheorem}{4.4}
      \begin{lemma}
        For any prefix \textcolor{blue}{
         $\pi_l = [\underbrace{s_1, s_2, \ldots, s_a}_{\text{partial path out of $\psi$}},\underbrace{s_{a+1},\ldots, s_{b-1}}_{\text{partial path in $\psi$}},\underbrace{s_b, s_{b+1}, \ldots, s_l}_{\text{partial path out of $\psi$}}]$ in \G} of any solution \textcolor{blue}{$\pi = [s_1(=s_\text{start}), \ldots, s_l, \ldots,  s_L(=s_\text{goal})]$ in \G with $1 \leq l \leq L$, where 
        \begin{itemize}
            \item $s_a,s_b \in B(\psi)$ ($s_a$ and $s_b$ are boundary vertices of cluster $\psi$).
            \item $\vs,\vt \notin \mathcal{V}_\psi$ (\vs and \vt are not members of $\psi$).
            \item $\{  s_{a+1},\ldots,s_{b-1}\} \not\subset \Gtilde$ (the partial path connecting $s_a$ and $s_b$ is fully contained in $\psi$'s subgraph and thus removed from \Gtilde in the query phase).
            \item $\{ s_1,s_2,\ldots,s_a  \} \subset \G, \Gtilde$ (the prefix path going from $s_1$ to $s_a$ doesn't intersect any cluster, and as such, is identical in both graphs).
            \item $\{ s_b,s_{b+1},\ldots,s_L  \} \subset \G, \Gtilde$ (the suffix path going from $s_b$ to $s_L$ doesn't intersect any cluster, and as such, is identical in both graphs).
        \end{itemize}
        }
    there exists, when \textcolor{blue}{\eapex} terminates, (Case 1:) an expanded \apexnode $\AP$ (that is, one that reaches Line 9) \textcolor{blue}{in \Gtilde} that contains state $s_l$ and whose apex weakly dominates the $\mathbf{g}$-value of path $\pi_l$ \textcolor{blue}{in \G} or (Case 2:) an \apexnode $\AP$ in the solution set \textcolor{blue}{of \eapex} such that the $\mathbf{f}$-value of its representative path \beps-dominates the $\mathbf{f}$-value of path $\pi_l$ \textcolor{blue}{in \G}.
      \end{lemma}
    \endgroup
    \addtocounter{theorem}{-1} 
    
    \begin{proof} 
    The proof is by induction. 
    \textcolor{blue}
    %
    {Recall that the partial path $[s_a,s_{a+1},\ldots,s_{b-1},s_b]$ exists in \G but \emph{not} in \Gtilde. In the query phase, the set of super-edges connecting $s_a$ to $s_b$ in $\psi$ ($\hat{\mathcal{E}}_{\psi,a,b}$) is added to \Gtilde. Therefore, for $l < a$, the lemma holds following Lemma~\ref{lemma_4_apex_modified_statement} as \G and \Gtilde are identical. We now prove that the lemma holds when \eapex expands an apex-path pair of $s_{l}=s_a$ to generate its successor $s_{l'}=s_b$ in \Gtilde via some super-edge. Recall that by Property~\ref{prp:super_edges_aprox}, for any partial path $\pi_{s_a \rightarrow s_b}$ of $\psi$ in \G, there exists a super-edge $\hat{e} \in \hat{\mathcal{E}}_{\psi,a,b}$ in \Gtilde where the $\mathbf{g}$-value of its apex weakly-dominates the $\mathbf{g}$-value of $\pi_{s_a \rightarrow s_b}$ (in the subgraph induced by $\psi$).}

    \textcolor{blue}{In \Gtilde, vertices $s_a$ and $s_b$ are connected directly by the set of super-edges $\hat{\mathcal{E}}_{\psi,a,b}$. In \G, however, the two vertices may not be adjacent; instead, there can be multiple partial paths within the subgraph of $\psi$ that lead from $s_a$ to $s_b$.}
    
    Assume that Case 1 holds for $l$ and consider both the \apexnode $\AP$ mentioned there and \textcolor{blue}{the several potential children apex-path pairs generated by following the super-edges $\hat{\mathcal{E}}_{\psi,a,b}$. Due to Property~\ref{prp:super_edges_aprox}, there exists a specific super-edge $\hat{e} \in \hat{\mathcal{E}}_{\psi,a,b}$ that generates \apexnode $\AP'$ that contains state $s_{l'} = s_b$ and its apex weakly dominates the $\mathbf{g}$-value of path \textcolor{blue}{$\pi_{l'}$ in \G}}, which implies that its $\mathbf{f}$-value weakly dominates the $\mathbf{f}$-value of path \textcolor{blue}{$\pi_{l'}$ in \G}. We distinguish several cases:
    \begin{enumerate}
        \item First, the condition on Line 20 holds for some \apexnode in the solution set \textcolor{blue}{of \eapex in \Gtilde}, namely, the truncated $\mathbf{f}$-value of the representative path of this \apexnode $\varepsilon$-dominates the truncated $\mathbf{f}$-value of \apexnode $\AP'$.
        \textcolor{blue}{\eapex} replaces this \apexnode with a new \apexnode $\AP''$ in the solution set on Line 22. 
        \Apexnode $\AP''$ stays in the solution set but \textcolor{blue}{\eapex} might merge it several (more) times with other \apexnodes on Line 29 before it terminates. The apex of \apexnode $\AP''$ weakly dominates the $\mathbf{f}$-value of path \textcolor{blue}{$\pi_{l'}$} (since this apex is the component-wise minimum of the $\mathbf{f}$-value of \apexnode $\AP'$ and another apex and hence weakly dominates the $\mathbf{f}$-value of \apexnode $\AP'$, which in turn weakly dominates the $\mathbf{f}$-value of path \textcolor{blue}{$\pi_{l'}$}) and merging it with other \apexnodes does not change this property according to Lemma 3 (in \apex paper). Since the \apexnode also remains \beps-bounded (which is due to the conditions on Lines 20 and 30, Lemma 1 (in \apex paper), the consistent heuristics and \textcolor{blue}{by Lemma \ref{lemma:eps-bound}}), the $\mathbf{f}$-value of its representative path always \beps-dominates the $\mathbf{f}$-value of itself, which equals its apex. Put together, the $\mathbf{f}$-value of its representative path \beps-dominates the $\mathbf{f}$-value of path \textcolor{blue}{$\pi_{l'}$}. 
        Thus, the merged \apexnode satisfies Case~2 for \textcolor{blue}{$l'$}.
        \item Second, the condition on Line 24 holds, namely, there exists a truncated $\mathbf{f}$-value in $G^T_\text{cl}(s(\AP'))$ that weakly dominates the truncated $\mathbf{f}$-value of \apexnode $\AP'$. Then, an expanded \apexnode $\AP''$ exists according to Lemma 2 (in \apex paper) that contains state \textcolor{blue}{$s_{l'}$} and whose $\mathbf{f}$-value weakly dominates the $\mathbf{f}$-value of \apexnode $\AP'$. Thus, its apex weakly dominates the apex of \apexnode $\AP'$. Thus, \apexnode $\AP''$ satisfies Case~1 for~\textcolor{blue}{$l'$} since the apex of \apexnode $\AP'$ in turn weakly dominates the $\mathbf{g}$-value of path \textcolor{blue}{$\pi_{l'}$}.
        \item Otherwise, \textcolor{blue}{\eapex} executes Line 17 for \apexnode $\AP'$, where the \apexnode is inserted into $\open$, perhaps after having been merged with another \apexnode on Line 29. \textcolor{blue}{\eapex} might merge it several (more) times with other \apexnodes on Line 29 before finally extracting it. Its apex weakly dominates the $\mathbf{g}$-value of path \textcolor{blue}{$\pi_{l'}$} and merging it with other \apexnodes does not change this property according to Lemma 3 (in \apex paper). 
        Thus, if this \apexnode is expanded, it satisfies Case 1 for~\textcolor{blue}{$l'$}. 
        If it is extracted but not expanded, the condition on Line~20 or Line~24 holds, and thus, as we have already proved, Case 1 or Case 2 holds.
    \end{enumerate}
    Assume that Case 2 holds for $l$ and consider the apex-path pair mentioned there. The $\mathbf{f}$-value of the representative path of this \apexnode \beps-dominates the $\mathbf{f}$-value of path $\pi_l$. Since the heuristic function is consistent, the $\mathbf{f}$-value of path~$\pi_l$ in turn weakly dominates the $\mathbf{f}$-value of path \textcolor{blue}{$\pi_{l'}$}. Thus, this \apexnode satisfies Case 2 for \textcolor{blue}{$l'$}. 

    \textcolor{blue}{Having established that the lemma holds for the case $l'=~b$, it immediately extends to all $l>b$, because beyond cluster $\psi$, the graphs \G and \Gtilde are identical.}
    \end{proof}
    }
\end{document}